\newcommand{\HR}{\mathit{HD}}
\newcommand{\IC}{\mathit{IC}}
\newcommand{\HRc}{\mathcal{HD}}
\newcommand{\HDc}{\HRc}
\newcommand{\ICc}{\mathcal{IC}}
\DeclareMathOperator{\lfp}{lfp}
\newcommand {\tup}[1]      {{\langle #1 \rangle}}
\newcommand{\Xb}{\mathbf{X}}
\DeclareMathAlphabet\mathbfcal{OMS}{cmsy}{b}{n}
\newcommand{\modelsht}{\models_{\sf HT}}
\DeclareMathOperator*{\argmin}{\arg\!\min}
\newcommand{\conj}{\mathit{Conj}}
\tikzset{
    >=stealth',
    args/.style={circle,draw=black}
}
\newtheorem{definition}{Definition}
\newtheorem{proposition}{Proposition}
\newtheorem{corollary}{Corollary}
\newtheorem{lemma}[proposition]{Lemma}
\newtheorem{example}{Example}
\begin{document}

\lefttitle{Jesse Heyninck and Bart Bogaerts}

\jnlPage{1}{14}
\jnlDoiYr{2023}
\doival{10.1017/xxxxx}

\title[Non-deterministic AFT]{Non-deterministic approximation operators: ultimate operators, semi-equilibrium semantics and aggregates (full version)\thanks{This work was partially supported by  Fonds Wetenschappelijk Onderzoek -- Vlaanderen (project G0B2221N) and the Flemish Government (Onderzoeksprogramma Artifici\"ele Intelligentie (AI) Vlaanderen).}}

\begin{authgrp}
\author{\sn{Heyninck} \gn{Jesse}}
\affiliation{Open Universiteit, the Netherlands}
\author{\sn{Bogaerts} \gn{Bart}}
\affiliation{Vrij Universiteit Brussels, Belgium}
\end{authgrp}

\history{}

\maketitle

\begin{abstract}
Approximation fixpoint theory (AFT) is an abstract and general algebraic framework for studying the semantics of non-monotonic logics. In recent work, AFT was generalized to non-deterministic operators, i.e.\ operators whose range are sets of elements rather than single elements. In this paper, we make three further contributions to non-deterministic AFT: (1) we define and study ultimate approximations of non-deterministic operators, (2) we give an algebraic formulation of the semi-equilibrium semantics by Amendola, et al., and (3) we generalize the characterisations of disjunctive logic programs to disjunctive logic programs with aggregates.  

This is an extended version of our paper that will be presented at ICLP 2023 and will appear
in the special issue of TPLP with the ICLP proceedings.
\end{abstract}

\begin{keywords}
Approximation fixpoint theory, Disjunctive logic programming, Semi-equilibrium semantics
\end{keywords}

\section{Introduction}
Knowledge representation and reasoning (KRR), by its very nature, is concerned with the study of a wide variety of languages and formalisms. 
In view of this, \emph{unifying} frameworks that allow for the language-independent study of aspects of KRR is essential.
One framework with strong unifying potential is \emph{approximation fixpoint theory} (AFT) \citep{denecker2000approximations}, a purely algebraic theory which was shown to unify the semantics of, among others, logic programming %
default logic and autoepistemic logic. %
The central objects of study of AFT are \emph{(approximating) operators} and their \emph{fixpoints}.
For logic programming for instance, it was shown that Fitting's three-valued immediate consequence operator is an approximating operator of Van Emden and Kowalski's two-valued immediate consequence operator and that all major semantics of (normal) logic programming can be derived directly from this approximating operator. 
Moreover, this observation does not only hold for logic programming: also for a wide variety of other domains, it is straightforward how to derive an approximating operator, and the major semantics can be recovered from that approximator using purely algebraic means (an overview is given by \cite{nondetAFTarxiv}). 
This has in turn inspired others to define the semantics of non-monotonic  formalisms \emph{directly} using AFT \citep{bogaerts2019weighted}, putting AFT forward not only as a framework to study existing semantics, but also as a framework to define them.
The advantage is that AFT-based semantics are guaranteed to follow well-established principles.
such as \emph{groundedness} \citep{phd/Bogaerts15}.  %
Moreover, it is often easier to define a semantic operator, than to define the semantics from scratch.

Recently, AFT was generalized to also capture \emph{non-deterministic operators} \citep{nondetAFTarxiv} which allow for different options or choices in their output. 
A prime example of the occurrence of non-determinism in KRR is \emph{disjunctive logic programming}, and it was indeed shown that many semantics of disjunctive logic programming (specifically the weakly supported, (partial) stable, and well-founded semantics \citep{alcantara2005well}) are captured by non-deterministic AFT. 
In this paper, we make further contributions to the study of non-deterministic AFT, with a particular emphasis on disjunctive logic programs. 
On the one hand, (in Section \ref{section:ultimate}) we deepen the theory of non-deterministics AFT by investigating so-called \emph{ultimate semantics}. 
For standard AFT, \cite{denecker2002ultimate} have shown that with every two-valued operator, we can uniquely associate a most-precise approximator called the \emph{ultimate approximator}. 
When defining semantics of new formalisms, this even takes the need of defining an approximator away, since it suffices to define an \emph{exact} operator and its ultimate approximator comes for free.\footnote{However, ultimate semantics often come at the cost of increased computational complexity compared to their standard counterparts.}
Our first contribution is to show how ultimate approximations can be obtained for non-deterministic AFT, which we later illustrate using disjunctive logic programs with aggregates. This means we give the first constructive method for obtaining non-deterministic approximation operators. 
On the other hand, we also \emph{apply} non-deterministic AFT to two areas that have thus far been out of reach of AFT. 
In Section \ref{sec:seq}, we use it to define an algebraic generalisation of the \emph{semi-equilibrium semantics}, a semantics originally formulated for disjunctive logic programs \citep{amendola2016semi} but now, thanks to our results, available to any operator-based semantics. In Section \ref{sec:aggregates}, we apply the theory of non-deterministic AFT to disjunctive logic programs with \emph{aggregates} in the body, giving rise to a family of semantics for such programs.
\section{Background and Preliminaries}
\label{sec:back:prelim}

In this section, we recall disjunctive logic 
programming (Sec.~\ref{sec:LP}), approximation fixpoint theory for deterministic operators  (Sec.~\ref{sec:AFT}) and non-deterministic operators (Sec.~\ref{sec:non-det:aft}).

\subsection{Disjunctive Logic Programming}
\label{sec:LP}

In what follows we consider a propositional\footnote{For simplicity we restrict ourselves to the propositional case.} language ${\mathfrak L}$, 
whose atomic formulas are denoted by $p,q,r$ (possibly indexed), and that contains the propositional constants ${\sf T}$ (representing truth), ${\sf F}$ (falsity), 
${\sf U}$ (unknown), and ${\sf C}$ (contradictory information). The connectives in  ${\mathfrak L}$ include negation $\neg$, conjunction $\wedge$, disjunction $\vee$, 
and implication $\leftarrow$. Formulas are denoted by $\phi$, $\psi$, $\delta$ (again, possibly indexed). Logic programs in ${\mathfrak L}$ may be divided to different kinds as follows: a (propositional) {\em disjunctive logic program\/} ${\cal P}$ in ${\mathfrak L}$ (a dlp in short) is a finite set of rules of the form 
             $\bigvee_{i=1}^n p_i~\leftarrow~\psi$,  where the head $\bigvee_{i=1}^n p_i$ is a non-empty disjunction of atoms, and the body $\psi$
             is a formula not containing $\leftarrow$. A rule is called {\em normal\/} (nlp), if its body is a conjunction of literals (i.e., atomic formulas or negated atoms), and its head is atomic. A rule is \emph{disjunctively normal} if its body is a conjunction of literals and its head is a non-empty disjunction of atoms. 
We will use these denominations for programs if all rules in the program satisfy the denomination, e.g.\ a program is normal if all its rules are normal.             
The set of atoms occurring in $\mathcal{P}$ is denoted ${\cal A}_{\cal P}$. 

The semantics of dlps are given in terms of \emph{four-valued interpretations}. A {\em four-valued interpretation} of a program ${\cal P}$ is a pair $(x,y)$, where $x \subseteq {\cal A}_{\cal P}$ is the set of the atoms that are assigned a value in 
$\{{\sf T},{\sf C}\}$ and $y \subseteq {\cal A}_{\cal P}$ is the set of atoms assigned a value in $\{{\sf T},{\sf U}\}$. We define	 $-{\sf T}={\sf F}$, $-{\sf F}={\sf T}$ and ${\sf X}=-{\sf X}$ for ${\sf X}={\sf C},{\sf U}$.
 Truth assignments to complex formulas are as follows:
  {\setlength{\columnsep}{-25pt} 
\begin{multicols}{2}
    \begin{itemize}\item $(x,y)({p})=
\begin{cases}
      {\sf T} & \text{ if } {p} \in x \text{ and } {p} \in y,  \\
      {\sf U} & \text{ if } {p} \not\in x \text{ and }{p} \in y,  \\
      {\sf F} & \text{ if } {p} \not\in x \text{ and } {p} \not\in y,  \\
      {\sf C} & \text{ if } {p} \in x \text{ and } {p} \not\in y. 
\end{cases}$ \smallskip
\item $(x,y)(\lnot \phi)=- (x,y)(\phi)$,  
\item $(x,y)(\psi \land \phi)=lub_{\leq_t}\{(x,y)(\phi),(x,y)(\psi)\}$, 
\item $(x,y)(\psi \lor \phi)= glb_{\leq_t}\{(x,y)(\phi),(x,y)(\psi)\}$. 
\end{itemize}
    \end{multicols}}
A four-valued interpretation of the form $(x,x)$ may be associated with a {\em two-valued\/} (or {\em total\/}) interpretation $x$. 
$(x,y)$ is a {\em three-valued\/} 
(or {\em consistent\/}) interpretation, if $x \subseteq y$. 
Interpretations are compared by two order relations which form a pointwise extension of the structure ${\cal FOUR}$ consisting of ${\sf T}, {\sf F},{\sf C}$ and ${\sf U}$ with ${\sf U}<_i {\sf F},{\sf T}<_i {\sf C}$ and ${\sf F}<_t {\sf C},{\sf U}<_t {\sf T}$.
	The pointwise extension of these orders corresponds to the \emph{information order\/}, which is equivalently defined as $(x,y)\leq_i (w,z)$ iff $x\subseteq w$ and $z\subseteq y$, and  the \emph{truth order\/}, where $(x,y)\leq_t (w,z)$ iff $x\subseteq w$ and $y\subseteq z$. 

The immediate consequence operator for normal 
programs \citep{EmdenK76} is extended to dlp's as follows:

\begin{definition}[Immediate Consquence operator for dlp's]
\label{def:operator:disj:lp}
Given a dlp ${\cal P}$ and a two-valued interpretation $x$, we define:
(1) $\HR_{\cal P}(x)=\{\Delta\mid \bigvee\!\Delta\leftarrow \psi \in{\cal P} \text{ and } (x,x)(\psi) =  {\sf T}\}$; and (2)
 $\IC_{\cal P}(x)=\{y\subseteq \:\bigcup\!\HR_{\cal P}(x)  \mid \forall \Delta \in \HR_{\cal P}(x), \ y \cap \Delta \neq \emptyset \}$. 
 \end{definition}
Thus, $\IC_{\cal P}(x)$ consists of sets of atoms that occur in activated rule heads, each set contains at least one representative from every disjuncts of a rule in ${\cal P}$ whose body is $x$-satisfied.
 Denoting by $\wp({\cal S})$ the powerset of ${\cal S}$, $\IC_{\cal P}$ is an operator on the lattice 
$\tup{\wp({\cal A}_{\cal P}),\subseteq}$.\footnote{The operator $\IC_{\cal P}$ is a generalization of the immediate consequence operator from \cite[Definition 3.3]{fernandez1995bottom}, 
where the minimal sets of atoms in  $IC_{\cal P}(x)$ are considered. However, this requirement of minimality is neither necessary nor desirable in the consequence operator \citep{nondetAFTarxiv}.}

 Given a dlp ${\cal P}$ a consistent interpretation $(x,y)$ is
  a \emph{(three--valued) model\/} of ${\cal P}$, if for every $\phi\leftarrow \psi \in {\cal P}$, $(x,y)(\phi)\geq_t (x,y)(\psi)$.
 The GL-transformation $\frac{\cal P}{(x,y)}$ of a disjunctively normal dlp ${\cal P}$ with respect to a consistent $(x,y)$, is the positive
program obtained by replacing in every rule in ${\cal P}$ of the form
$p_1\lor\ldots\lor p_n \leftarrow \bigwedge_{i=1}^m q_i\land \bigwedge_{j=1}^n \lnot r_j$
a negated literal $\lnot r_i$ ($1\leq i\leq k$) by $(x,y)(\lnot r_i)$.
$(x,y)$ is a {\em three-valued stable model\/} of ${\cal P}$ iff it is a $\leq_t$-minimal model of $\frac{{\cal P}}{(x,y)}$.
\footnote{An overview of other semantics for dlp's can be found in previous work on non-deterministic AFT \citep{nondetAFTarxiv}.}

\subsection{Approximation Fixpoint Theory}
\label{sec:AFT}

We now recall basic notions from approximation fixpoint theory (AFT), as described by Denecker, Marek and Truszczy{\'n}ski~(\citeyear{denecker2000approximations}).  We restrict ourselves here to the necessary formal details, and refer to more detailed introductions by  Denecker, Marek and Truszczy{\'n}ski~(\citeyear{denecker2000approximations}) and Bogaerts (\citeyear{phd/Bogaerts15}) for more informal details.
AFT introduces constructive techniques for approximating the fixpoints of an operator $O$ over a lattice $L= \tup{{\cal L},\leq}$.\footnote{Recall that a lattice is a partially ordered set in which every pair of elements has a least upper bound and greatest lower bound denoted by $\sqcup$ and $\sqcap$, respectively. If every set of elements has a least upper bound and greatest lower bound, we call the lattice complete.}
 Approximations are pairs of elements $(x,y)$. Thus, given a lattice $L = \tup{{\cal L},\leq}$, the induced {\em bilattice\/} is the structure $L^2 =\tup{{\cal L}^2,\leq_i,\leq_t}$, in which
${\cal L}^2 = {\cal L} \times {\cal L}$, and for every $x_1,y_1,x_2,y_2 \in {\cal L }$, $(x_1,y_1) \leq_i (x_2,y_2)$ if $x_1 \leq x_2$ and $y_1 \geq y_2$, and $(x_1,y_1) \leq_t (x_2,y_2)$ if $x_1 \leq x_2$ and $y_1 \leq y_2$.\footnote{Note that we use small letters to denote elements of lattice, capital letters to denote 
sets of elements, and capital calligraphic letters to denote sets of sets of elements.}

An {\em approximating operator\/} ${\cal O}:{\cal L}^2\rightarrow {\cal L}^2$ of an operator $O:{\cal L}\rightarrow {\cal L}$ is an operator that maps every 
approximation $(x,y)$ of an element $z$ to an approximation $(x',y')$ of another element $O(z)$, thus approximating the behavior of the approximated operator $O$. 
In more details, an operator ${\cal O}:{\cal L}^2\rightarrow {\cal L}^2$ is {\em $\leq_i$-monotonic\/}, if when $(x_1,y_1)\leq_i(x_2,y_2)$, also ${\cal O}(x_1,y_1)\leq_i {\cal O}(x_2,y_2)$; 
           ${\cal O}$ is \emph{approximating\/}, if it is $\leq_i$-monotonic and for any $x\in {\cal L}$, ${\cal O}_l(x,x) = {\cal O}_u(x,x)$.\footnote
           {In some papers (e.g.,~\cite{denecker2000approximations}), an approximation operator is defined as a symmetric $\leq_i$-monotonic operator, 
           i.e.\ a $\leq_i$-monotonic operator s.t.\ for every $x,y\in {\cal L}$, ${\cal O}(x,y)=({\cal O}_l(x,y),{\cal O}_l(y,x))$ for some 
           ${\cal O}_l:{\cal L}^2\rightarrow {\cal L}$. However, the weaker condition we take here (taken from \cite{denecker2002ultimate}) is actually 
           sufficient for most results on AFT. \label{footnote:symetry} }
${\cal O}$ {\em approximates\/} of $O:{\cal L}\rightarrow {\cal L}$, if it is $\leq_i$-monotonic and ${\cal O}(x,x) = (O(x), O(x))$ (for every $x\in {\cal L}$).
Finally,
for a complete lattice $L $,  let ${\cal O}:{\cal L}^2 \rightarrow {\cal L}^2$ be an approximating operator.
We denote: ${\cal O}_{l}(\cdot,y) = \lambda x.{\cal O}_{l}(x,y)$ and similarly for ${\cal O}_{u}$.
The \emph{stable operator for ${\cal O}$\/} is then defined as $S({\cal O})(x,y)=(\lfp({\cal O}_l(.,y)),\lfp({\cal O}_u(x,.))$, where $\lfp(O)$ denotes the least fixpoint of an operator $O$.

Approximating operators induce a family of \emph{fixpoint semantics}. Given a complete lattice $L = \tup{{\cal L},\leq}$ and an approximating operator ${\cal O}:{\cal L}^2 \rightarrow {\cal L}^2$, $(x,y)$ is
  a \emph{Kripke-Kleene fixpoint\/} of ${\cal O}$ if $(x,y) =\lfp_{\leq_i}({\cal O}(x,y))$; $(x,y)$ is a \emph{three-valued stable fixpoint\/} of ${\cal O}$ if $(x,y)= S({\cal O})(x,y) $;  $(x,y)$ is a \emph{two-valued stable fixpoints\/} of ${\cal O}$ if $x=y$ and $(x,x)= S({\cal O})(x,x)$; $(x,y)$ is  the \emph{well-founded  fixpoint\/} of ${\cal O}$ if it is the $\leq_i$-minimal (three-valued) stable fixpoint of ${\cal O}$.

\subsection{Non-deterministic approximation fixpoint theory}\label{sec:non-det:aft}
AFT was generalized to non-deterministic operators, i.e.\ operators which map elements of a lattice to a set of elements of that lattice (like the operator $\IC_{\cal P}$ for DLPs) by \cite{nondetAFTarxiv}. We recall the necessary details, referring to the original paper for more details and explanations. 

A {\em non-deterministic operator on ${\cal L}$} is a function $O : {\cal L}\rightarrow \wp({\cal L}) \setminus \{\emptyset\}$. 
 For example, the operator $\IC_{\cal P}$ from Definition~\ref{def:operator:disj:lp} is a non-deterministic operator on the lattice $\tup{\wp({\cal A}_{\cal P}),\subseteq}$.

As the ranges of non-deterministic operators are {\em sets\/} of lattice elements, one needs a way to compare them, such as the {\em Smyth order\/} and the {\em Hoare order\/}.
Let $L = \tup{{\cal L},\leq}$ be a lattice, and let $X,Y \in \wp({\cal L})$. Then: $X \preceq^S_L Y$ if for every $y\in Y$ there is an $x\in X$ such that $x\leq y$; and $X \preceq^H_L Y$ if for every $x\in X$ there is a $y\in Y$ such that $x\leq y$.              Given some $X_1,X_2,Y_1,Y_2\subseteq {\cal L}$, $X_1\times Y_1 \preceq^A_i X_2\times Y_2$ iff $X_1\preceq^S_L X_2$ and $Y_2\preceq^H_L Y_1$. 
Let $L=\tup{{\cal L},\leq}$ be a lattice.
Given an operator ${\cal O}:{\cal L}^2\rightarrow {\cal L}^2$, we denote by ${\cal O}_l$  the operator defined by ${\cal O}_l(x,y)={\cal O}(x,y)_1$, and similarly for ${\cal O}_u(x,y)={\cal O}(x,y)_2$.
 An operator ${\cal O}:{\cal L}^2\rightarrow \wp({\cal L}){\setminus\emptyset}\times \wp({\cal L}){\setminus\emptyset}$ 
is called a {\em non-deterministic approximating operator\/} (ndao, for short), if it is $\preceq^A_i$-monotonic (i.e.\ $(x_1,y_1)\leq_i (x_2,y_2)$ implies ${\cal O}(x_1,y_1)\preceq^A_i {\cal O}(x_2,y_2)$),
and  is \emph{exact} (i.e., for every $x\in {\cal L}$, ${\cal O}(x,x)={\cal O}_l(x,x)\times {\cal O}_{l}(x,x)$). We restrict ourselves to ndaos ranging over consistent pairs $(x,y)$.

We finally define the stable operator (given an ndao ${\cal O}$) as follows.
The \emph{complete lower stable operator\/} is defined by (for any $y\in {\cal L}$) $C({\cal O}_l)(y) \ = 
 \{x \in {\cal L}\mid x\in {\cal O}_l(x,y) \mbox{ and }\lnot \exists x'< x: x'\in {\cal O}_l(x',y)\}$.
The \emph{complete upper stable operator\/} is defined by  (for any $x\in {\cal L}$) $C({\cal O}_u)(x) \ =\  \{y \in {\cal L}\mid y\in {\cal O}_u(x,y) \mbox{ and }\lnot \exists y'<y:  y' \in {\cal O}_u(x,y')\}$.
 The \emph{stable operator\/}: $S({\cal O})(x,y)=C({\cal O}_l)(y)\times C({\cal O}_u)(x)$. $(x,y)$ is a \emph{stable fixpoint\/} of ${\cal O}$ if $(x,y)\in S({\cal O})(x,y)$.\footnote{Notice that we slightly abuse notation and write $(x,y)\in S({\cal O})(x,y)$ 
to abbreviate $x\in  (S({\cal O})(x,y))_1$ and $y\in ( S({\cal O})(x,y))_2$, i.e.\ $x$ is a lower bound generated by $ S({\cal O})(x,y)$ and $y$ is an upper bound generated by $ S({\cal O})(x,y)$.}

Other semantics, e.g.\ the well-founded state and the Kripke-Kleene fixpoints and state are defined by Heyninck et al (\citeyear{nondetAFTarxiv}) and can be immediately obtained once an ndao is formulated. 

\begin{example}
\label{example:operator:disj:lp-2-b}
An example of an ndao approximating $\IC_{\cal P}$ (Definition \ref{def:operator:disj:lp}) is defined as follows (given a dlp ${\cal P}$ and an interpretation $(x,y)$):
 $\HRc^l_{\cal P}(x,y) = \{ \Delta \mid \bigvee\!\Delta \leftarrow \phi\in {\cal P}, (x,y)(\phi)\geq_t {\sf C}\}$, 
   $\HRc^u_{\cal P}(x,y) = \{ \Delta \mid \bigvee\!\Delta \leftarrow \phi\in {\cal P}, (x,y)(\phi)\geq_t {\sf U}\}$, 
   ${\cal IC}^\dagger_{\cal P}(x,y)=\{x_1\subseteq \bigcup\HRc^\dagger_{\cal P}(x,y) \mid \forall \Delta\in \HRc^\dagger_{\cal P}(x,y), \ x_1 \cap \Delta \neq \emptyset \}$ (for $\dagger\in \{l,u\}$), and
   ${\cal IC}_{\cal P}(x,y)=({\cal IC}^l_{\cal P}(x,y), {\cal IC}^u_{\cal P}(x,y))$. 

Consider the following dlp: ${\cal P}=\{ p\lor q\leftarrow\lnot q\}$.
The operator ${\cal IC}^l_{\cal P}$ behaves as follows:
\begin{itemize}
\item For any interpretation $(x,y)$ for which $q\in x$, $\HRc^l_{\cal P}(x,y)=\emptyset$ and thus ${\cal IC}^l_{\cal P}(x,y)=\{\emptyset\}$.
\item For any interpretation $(x,y)$ for which $q\not\in x$, $\HRc^l_{\cal P}(x,y)=\{\{p,q\}\}$ and thus ${\cal IC}^l_{\cal P}(x,y)=\{\{p\},\{q\},\{p,q\}\}$.
\end{itemize}
 Since ${\cal IC}_{\cal P}^l(x,y)={\cal IC}_{\cal P}^u(y,x)$ (see \cite[{Lemma 1}]{nondetAFTarxiv}), ${\cal IC}_{\cal P}$ behaves as follows:
\begin{itemize}
\item For any $(x,y)$ with $q\not\in x$ and $q\not\in y$, ${\cal IC}_{\cal P}(x,y)=\{\{p\},\{q\},\{p,q\}\}\times \{\{p\},\{q\},\{p,q\}\}$,
\item For any $(x,y)$ with $q\not\in x$ and $q\in y$, ${\cal IC}_{\cal P}(x,y)=\{\emptyset\}\times \{\{\{p\},\{q\},\{p,q\}\}$,
\item For any $(x,y)$ with $q\in x$ and $q\not\in y$, ${\cal IC}_{\cal P}(x,y)=\{\{p\},\{q\},\{p,q\}\}\times \{\emptyset\}$, and
\item For any $(x,y)$ with $q\in x$ and $q\in y$, ${\cal IC}_{\cal P}(x,y)=\{(\emptyset,\emptyset)\}$.
\end{itemize}
We see e.g.\ that $C({\cal IC}^l_{\cal P})(\{p\})=\{\{p\},\{q\}\}$ and thus $(\{p\},\{p\})$ is a stable fixpoint of ${\cal IC}_{\cal P}$. $(\emptyset,\{q\})$ is the second stable fixpoint of ${\cal IC}_{\cal P}$. $(\emptyset,\{p,q\})$ is  a fixpoint of ${\cal IC}_{\cal P}$ that is not stable.

In general, (total) stable fixpoints of ${\cal IC}_{\cal P}$ correspond to (total) stable models of ${\cal P}$, and weakly supported models of ${\cal IC}_{\cal P}$ correspond to fixpoints of ${\cal IC}_{\cal P}$. \citep{nondetAFTarxiv}.
\end{example}

\section{Ultimate Operators}\label{section:ultimate}
Approximation fixpoint theory assumes an approximation operator, but does not specify how to construct it. In the literature, one finds various ways to construct a deterministic approximation operator ${\cal O}$ that approximates a deterministic operator $O$. Of particular interest is the \emph{ultimate} operator \citep{denecker2002ultimate}, which is the \emph{most precise} approximation operator. In this section, we show that non-deterministic approximation fixpoint theory admits an ultimate operator, which is, however, different from the ultimate operator for deterministic AFT.

We first recall that for a \emph{deterministic} operator $O: {\cal L}\rightarrow {\cal L}$, the ultimate approximation ${\cal O}^u$ is defined by \cite{denecker2002ultimate} as follows:\footnote{We use the abbreviation ${\sf DMT^d}$ for \emph{deterministic} Denecker, Marek and Truszczy{\'n}ski to denote this operator, 
as to not overburden the use of ${\cal IC}^{\cal U}_{\cal P}$. Indeed, we will later see that the ultimate operator for non-disjunctive logic programs generalizes to an 
ndao that is different from the ultimate non-deterministic operator ${\cal IC}^{\cal U}_{\cal P}$.}
\[{\cal O}^{\sf DMT^d}(x,y)=(\sqcap O[x,y], \sqcup O[x,y])\footnote{Recall that denotes $\sqcap X$ the greatest lower bound of $X$ and $\sqcup X$ denotes the least upper bound of $X$.}\]
Where $O[x,y]:=\{O(z)\mid x\leq z\leq y\}$. 
This operator is shown to be the most precise operator approximating an operator $O$ \citep{denecker2002ultimate}. In more detail, for any (deterministic) approximation operator ${\cal O}$ approximating $O$, and any consistent $(x,y)$, ${\cal O}(x,y)<_i {\cal O}^{\sf DMT^d}(x,y)$. 

 The ultimate approximator for $\IC_{\cal P}$ for non-disjunctive logic programs ${\cal P}$ looks as follows:
\begin{definition}%
\label{def:ultimate:operator:DMT}
Given a normal logic program ${\cal P}$, we let:
${\cal IC}_{\cal P}^{{\sf DMT^d}}(x,y)=({\cal IC}_{\cal P}^{{\sf DMT^d},l}(x,y),\:{\cal IC}_{\cal P}^{{\sf DM^d},u}(x,y))$ 
with: ${\cal IC}_{\cal P}^{{\sf DMT^d},l}(x,y)=\bigcap_{x\subseteq z\subseteq y} \{\alpha\mid \alpha\leftarrow \phi\in {\cal P} \mbox{ and }z(\phi)={\sf T} \}$, and ${\cal IC}_{\cal P}^{{\sf DMT^d},u}(x,y)=\bigcup_{x\subseteq z\subseteq y} \{\alpha\mid \alpha\leftarrow \phi\in {\cal P} \mbox{ and }z(\phi)={\sf T}\}$.

\end{definition}

In this section, we define the ultimate semantics for the non-deterministic operators. 
In more detail, we constructively define an approximation operator that is most precise and has non-empty upper and lower bounds. Its construction is based on the following idea: we are looking for an operator ${\cal O}^{\cal U}$ s.t.\ for any ndao ${\cal O}$ that approximates $O$, ${\cal O}_l(x,y)\preceq^S_L {\cal O}_l^{\cal U}(x,y)$ (and similarly for the upper bound). As we know that ${\cal O}_l(x,y)\preceq^S_L O(z)$ for any $x\leq z\leq y$, we can obtain ${\cal O}_l^{\cal U}$ by simply gathering all applications of $O$ to elements of the interval $[x,y]$ i.e.\ we define:
\[ {\cal O}^{{\cal U}}_l(x,y)=\bigcup_{x\leq z\leq y} O(z)\]

The upper bound can be defined in the same way as the lower bound. Altogether, we obtain:
\[{\cal O}^{\cal U}(x,y)= {\cal O}^{\cal U}_l(x,y)\times {\cal O}^{\cal U}_l(x,y)\]

The following example illustrates this definition for normal logic programs:
\begin{example}\label{example:ultimate:1}
Let ${\cal P}=\{q\leftarrow \lnot p; p\leftarrow p\}$. Then $IC_{\cal P}(\emptyset)=IC_{\cal P}(\{q\})=\{q\}$ and $IC_{\cal P}(\{p\})=IC_{\cal P}(\{p,q\})=\{p\}$. Therefore, ${\cal IC}^{\cal U}_{\cal P}(\emptyset,\{p,q\})=\{\{p\},\{q\}\}\times \{\{p\},\{q\}\}$ whereas ${\cal IC}^{\cal U}_{\cal P}(\emptyset,\{q\})=\{\{q\}\}\times\{\{q\}\}$.
\end{example}

The ultimate approximation is the most precise ndao approximating the operator $O$:
\begin{proposition}\label{proposition:ultimate:is:most:precise}
Let a non-deterministic operator $O$ over a lattice $\langle {\cal L},\leq\rangle$ be given. Then ${\cal O}^{\cal U}$ is an ndao that approximates $O$. Furthermore, for any ndao ${\cal O}$ that approximates $O$ and for every $x,y\in{\cal L}$ s.t.\ $x\leq y$, it holds that ${\cal O}(x,y)\preceq^A_i{\cal O}^{\cal U}(x,y)$.
\end{proposition}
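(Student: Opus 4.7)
The plan is to prove the claim in two halves, matching the two conjuncts of the statement, and to lean heavily on the observation that every element $z$ of the interval $[x,y]$ yields, via the pair $(z,z)$, a ``test point'' on which $(x,y)\leq_i (z,z)$ and on which exact approximators collapse to $O$.

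\emph{Showing that ${\cal O}^{\cal U}$ is an ndao approximating $O$.} First I would verify that both components are non-empty: for $x\leq y$ the interval $[x,y]$ is non-empty, and since $O(z)\neq\emptyset$ for every $z\in{\cal L}$, the union ${\cal O}^{\cal U}_l(x,y)=\bigcup_{x\leq z\leq y}O(z)$ is non-empty. Exactness is immediate from the definition: ${\cal O}^{\cal U}_l(x,x)=\bigcup_{x\leq z\leq x}O(z)=O(x)$, so ${\cal O}^{\cal U}(x,x)=O(x)\times O(x)$. For $\preceq^A_i$-monotonicity, assume $(x_1,y_1)\leq_i(x_2,y_2)$, so that $x_1\leq x_2\leq y_2\leq y_1$ and hence $[x_2,y_2]\subseteq [x_1,y_1]$. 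Then ${\cal O}^{\cal U}_l(x_2,y_2)\subseteq {\cal O}^{\cal U}_l(x_1,y_1)$. From the subset-inclusion I get, in one line each, both ${\cal O}^{\cal U}_l(x_1,y_1)\preceq^S_L{\cal O}^{\cal U}_l(x_2,y_2)$ and ${\cal O}^{\cal U}_l(x_2,y_2)\preceq^H_L{\cal O}^{\cal U}_l(x_1,y_1)$ (by picking the witness $x=y$ in each clause), and therefore ${\cal O}^{\cal U}(x_1,y_1)\preceq^A_i{\cal O}^{\cal U}(x_2,y_2)$.

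\emph{Showing maximality.} Fix an arbitrary ndao ${\cal O}$ approximating $O$ and a consistent pair $x\leq y$. For every $z\in[x,y]$ we have $(x,y)\leq_i(z,z)$, so by $\preceq^A_i$-monotonicity ${\cal O}(x,y)\preceq^A_i{\cal O}(z,z)$; by exactness ${\cal O}(z,z)=O(z)\times O(z)$. Unpacking the definition of $\preceq^A_i$ yields the two auxiliary facts I need for every $z\in[x,y]$: (a) ${\cal O}_l(x,y)\preceq^S_L O(z)$ and (b) $O(z)\preceq^H_L{\cal O}_u(x,y)$. From (a) I derive ${\cal O}_l(x,y)\preceq^S_L{\cal O}^{\cal U}_l(x,y)$: given any $w\in{\cal O}^{\cal U}_l(x,y)$ there is some $z\in[x,y]$ with $w\in O(z)$, and (a) produces $v\in{\cal O}_l(x,y)$ with $v\leq w$. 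From (b) I derive ${\cal O}^{\cal U}_l(x,y)\preceq^H_L{\cal O}_u(x,y)$: any $w\in{\cal O}^{\cal U}_l(x,y)$ lies in some $O(z)$, and (b) produces $v\in{\cal O}_u(x,y)$ with $w\leq v$. Combining these two inequalities and recalling that ${\cal O}^{\cal U}_u={\cal O}^{\cal U}_l$ gives ${\cal O}(x,y)\preceq^A_i{\cal O}^{\cal U}(x,y)$, as required.

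The only place that requires care is the asymmetry of $\preceq^S_L$ versus $\preceq^H_L$: it is easy to mix up which direction of set inclusion implies which ordering. I would therefore make explicit in the writeup, once, that $A\subseteq B$ implies both $A\preceq^H_L B$ and $B\preceq^S_L A$ (taking each element as its own witness), and then reuse this lemma twice in the monotonicity argument and implicitly in the maximality argument. No obstacle beyond bookkeeping is anticipated.
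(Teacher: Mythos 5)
Your proposal is correct and follows essentially the same route as the paper's own proof: exactness from the degenerate interval $[x,x]$, monotonicity from the containment $[x_2,y_2]\subseteq[x_1,y_1]$ of intervals, and maximality by tracing each $w\in{\cal O}^{\cal U}_l(x,y)$ back to some $O(z)$ with $z\in[x,y]$ and applying $\preceq^A_i$-monotonicity together with exactness of ${\cal O}$ at $(z,z)$. Your explicit auxiliary observation that $A\subseteq B$ yields both $A\preceq^H_L B$ and $B\preceq^S_L A$ is a useful clarification the paper leaves implicit, but it does not change the argument.
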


In conclusion, non-deterministic AFT admits, just like deterministic AFT, an ultimate approximation. However, as we will see in the rest of this section, the ultimate non-deterministic approximation operator ${\cal O}^{\cal U}$ does \emph{not} generalize the deterministic ultimate approximation operator defined by Denecker et al (\citeyear{denecker2002ultimate}). In more detail, we compare the non-deterministic ultimate operator ${\cal IC}^{\cal U}_{\cal P}$ with the deterministic ultimate ${\cal IC}^{\sf DMT}_{\cal P}$ from Definition \ref{def:ultimate:operator:DMT}.
Somewhat surprisingly, even when looking at normal logic programs, the operator ${\cal IC}^{\sf DMT^d}_{\cal P}$ does not coincide with the ultimate ndao  ${\cal IC}^{\cal U}_{\cal P}$ (and thus ${\cal IC}^{\sf DMT^d}_{\cal P}$ is \emph{not} the most precise ndao, even for non-disjunctive programs). The intuitive reason is that the additional expressivity of non-deterministic operators, which are not restricted to single lower and upper bounds in their outputs, allows to more precisely capture what is derivable in the ``input interval'' $(x,y)$.
\begin{example}[Example \ref{example:ultimate:1} continued]
Consider again ${\cal P}=\{q\leftarrow \lnot p; p\leftarrow p\}$. Applying the ${\sf DMT^d}$-operator gives: 
${\cal IC}_{\cal P}^{{\sf DMT^d}}(\emptyset,\{p,q\})=(\emptyset,\{p,q\})$.
Intuitively, the ultimate semantics ${\cal IC}_{\cal P}^{\cal U}(\emptyset,\{p,q\})=\{\{p\},\{q\}\}\times \{\{p\},\{q\}\}$ gives us the extra information  that we will always either derive $p$ or $q$, which is information a deterministic approximator can simply not capture.
Such a ``choice'' is not expressible within a single interval, hence the deterministic ultimate approximation is $(\emptyset,\{p,q\})$.
 This example also illustrates the fact that, when applying the ultimate ndao-construction to (non-constant) deterministic operators $O$, ${\cal O}^{\cal U}$ might be a \emph{non}-deterministic approximation operator.
\end{example}

However, one can still generalize the operator ${\cal IC}^{\sf DMT^d}_{\cal P}$ to disjunctive logic programs. 
We first generalize the idea behind ${\cal IC}_{\cal P}^{{\sf DMT^d},l}$ to an operator gathering the heads 
of rules that are true in every interpretation $z$ in the interval $[x,y]$. Similarly, ${\cal IC}_{\cal P}^{{\sf DMT^d},u}$ is generalized by gathering the heads of rules with bodies that are true in at least one interpretation in $[x,y]$:
\[{\cal HD}^{{\sf DMT},l}_{\cal P}(x,y)=\bigcap_{x\subseteq z\subseteq y} \HR_{\cal P}(z)\quad \mbox{ and }\quad {\cal HD}^{{\sf DMT},u}_{\cal P}(x,y)=\bigcup_{x\subseteq z\subseteq y} \HR_{\cal P}(z)\}.\] 
The upper and lower immediate consequences operator are then straightforwardly defined, that is: by taking all interpretations that only contain atoms in ${\cal HD}^{{\sf DMT},\dagger}_{\cal P}(x,y)$ and contain at least one member of every head $\Delta\in {\cal HD}^{{\sf DMT},\dagger}_{\cal P}(x,y)$ (for $\dagger\in \{u,l\}$):
\[{\cal IC}_{\cal P}^{{\sf DMT},\dagger}(x,y)=\{z\subseteq \bigcup{\cal HD}^{{\sf DMT},\dagger}_{\cal P}(x,y) \mid \forall \Delta\in{\cal HD}^{{\sf DMT},\dagger}_{\cal P}(x,y)\neq\emptyset: z\cap \Delta\neq \emptyset\}.\] 
Finally, the ${\sf DMT}$-ndao is defined as:
${\cal IC}_{\cal P}^{{\sf DMT}}(x,y)={\cal IC}_{\cal P}^{{\sf DMT},l}(x,y)\times {\cal IC}_{\cal P}^{{\sf DMT},u}(x,y)$. We have:

\begin{proposition}[{\cite[Proposition 3]{nondetAFTarxiv}}]
For any disjunctive logic program ${\cal P}$, ${\cal IC}_{\cal P}^{{\sf DMT}}$ is an ndao that approximates $IC_{\cal P}$.
\end{proposition}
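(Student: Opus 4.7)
The plan is to verify the three defining requirements for an ndao approximating $IC_{\cal P}$ in turn: $\preceq^A_i$-monotonicity, exactness, and the pointwise agreement with $IC_{\cal P}$ on total interpretations. The driving observation throughout is that if $(x_1,y_1)\leq_i(x_2,y_2)$, i.e.\ $x_1\subseteq x_2$ and $y_2\subseteq y_1$, then the interval $[x_2,y_2]$ is \emph{contained} in the interval $[x_1,y_1]$. Consequently
\[{\cal HD}^{{\sf DMT},l}_{\cal P}(x_1,y_1)\;\subseteq\;{\cal HD}^{{\sf DMT},l}_{\cal P}(x_2,y_2)\quad\text{and}\quad{\cal HD}^{{\sf DMT},u}_{\cal P}(x_2,y_2)\;\subseteq\;{\cal HD}^{{\sf DMT},u}_{\cal P}(x_1,y_1),\]
since intersecting over a smaller set gives a larger result and unioning over a smaller set gives a smaller result.

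Given these set containments, I would establish $\preceq^A_i$-monotonicity by explicit witness constructions. For the Smyth direction on the lower bound, take any $z_2\in{\cal IC}_{\cal P}^{{\sf DMT},l}(x_2,y_2)$ and set $z_1:=z_2\cap\bigcup{\cal HD}^{{\sf DMT},l}_{\cal P}(x_1,y_1)$; then $z_1\subseteq z_2$, $z_1$ is included in the right carrier, and for each $\Delta\in{\cal HD}^{{\sf DMT},l}_{\cal P}(x_1,y_1)\subseteq{\cal HD}^{{\sf DMT},l}_{\cal P}(x_2,y_2)$ we have $z_2\cap\Delta\neq\emptyset$, and this intersection already lies in $\bigcup{\cal HD}^{{\sf DMT},l}_{\cal P}(x_1,y_1)$, so $z_1\cap\Delta\neq\emptyset$. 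For the Hoare direction on the upper bound, take $z_2\in{\cal IC}_{\cal P}^{{\sf DMT},u}(x_2,y_2)$ and extend it to $z_1:=z_2\cup\{a_\Delta\mid\Delta\in{\cal HD}^{{\sf DMT},u}_{\cal P}(x_1,y_1)\setminus{\cal HD}^{{\sf DMT},u}_{\cal P}(x_2,y_2)\}$, choosing any $a_\Delta\in\Delta$. Then $z_2\subseteq z_1\subseteq\bigcup{\cal HD}^{{\sf DMT},u}_{\cal P}(x_1,y_1)$ and $z_1$ hits every head in the larger $u$-set by construction.

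Exactness is immediate: when $x=y$ the interval $[x,x]$ is the singleton $\{x\}$, so ${\cal HD}^{{\sf DMT},l}_{\cal P}(x,x)=\HR_{\cal P}(x)={\cal HD}^{{\sf DMT},u}_{\cal P}(x,x)$, and therefore ${\cal IC}_{\cal P}^{{\sf DMT},l}(x,x)={\cal IC}_{\cal P}^{{\sf DMT},u}(x,x)$. For the approximation claim, comparing the definitions of ${\cal IC}_{\cal P}^{{\sf DMT},l}(x,x)$ (hitting sets of $\HR_{\cal P}(x)$ contained in $\bigcup\HR_{\cal P}(x)$) and $IC_{\cal P}(x)$ from Definition~\ref{def:operator:disj:lp} shows they coincide verbatim.

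The only real subtlety is the bookkeeping in the Smyth/Hoare constructions, in particular making sure the trimmed (resp.\ extended) witness still belongs to the appropriate ${\cal IC}$-set. Edge cases where ${\cal HD}^{{\sf DMT},\dagger}_{\cal P}$ is empty need a brief separate mention: the corresponding ${\cal IC}^{{\sf DMT},\dagger}_{\cal P}(x,y)$ is then $\{\emptyset\}$, which is trivially non-empty and plays well with both orders. Apart from this, the result follows by chaining the set-theoretic containments above with the witness constructions.
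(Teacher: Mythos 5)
Your argument is correct, and all three requirements (the interval-containment observation driving $\preceq^A_i$-monotonicity, exactness via $[x,x]=\{x\}$, and the verbatim agreement of ${\cal IC}_{\cal P}^{{\sf DMT},l}(x,x)$ with $\IC_{\cal P}(x)$) are verified with valid witness constructions; the trimming witness $z_2\cap\bigcup{\cal HD}^{{\sf DMT},l}_{\cal P}(x_1,y_1)$ for the Smyth direction and the extension by one atom per new head for the Hoare direction both check out, including the empty-head-set edge case. Note that the paper itself does not prove this proposition but imports it from the cited earlier work (Proposition 3 of the non-deterministic AFT paper), so there is no in-paper proof to compare against; your direct verification is the natural one and is complete as stated.
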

Notice that for a non-disjunctive program ${\cal P}$, $\bigcup{\cal IC}_{\cal P}^{{\sf DMT},\dagger}(x,y)= \bigcup{\cal HD}^{{\sf DMT},\dagger}_{\cal P}(x,y)={\cal IC}_{\cal P}^{{\sf DMT^d},\dagger}(x,y)$ (for $\dagger\in \{u,l\}$), i.e.\ the non-deterministic version reduces to the deterministic version when looking at non-disjunctive programs.
Notice furthermore the operators ${\cal HD}^{{\sf DMT},l}_{\cal P}(x,y)$ and ${\cal HD}^{{\sf DMT},u}_{\cal P}(x,y)$ are only defined for consistent interpretations $(x,y)$. We leave the extension of this operator to inconsistent interpretations for future work. 

\begin{example}
Consider  again the program ${\cal P}=\{p\lor q\leftarrow \lnot q\}$ from Example~\ref{example:operator:disj:lp-2-b}. ${\cal IC}^{{\sf DMT},l}_{\cal P}$ behaves as follows:
\begin{itemize}
\item If $q\in y$ then ${\cal HD}^{{\sf DMT},l}_{\cal P}(x,y)=\emptyset$ and thus ${\cal IC}^{{\sf DMT},l}_{\cal P}(x,y)=\emptyset$.
\item If $q\not\in y$ then ${\cal HD}^{{\sf DMT},l}_{\cal P}(x,y)=\{\{p,q\}\}$ and ${\cal IC}^{{\sf DMT},l}_{\cal P}(x,y)=\{\{p\},\{q\},\{p,q\}\}$.
\end{itemize}
 ${\cal IC}^{{\sf DMT},u}_{\cal P}$ behaves as follows:
\begin{itemize}
\item If $q\in x$ then ${\cal HD}^{{\sf DMT},u}_{\cal P}(x,y)=\emptyset$ and thus ${\cal IC}^{{\sf DMT},u}_{\cal P}(x,y)=\emptyset$.
\item If $q\not\in x$ then ${\cal HD}^{{\sf DMT},u}_{\cal P}(x,y)=\{\{p,q\}\}$ and thus ${\cal IC}^{{\sf DMT},u}_{\cal P}(x,y)=\{\{p\},\{q\},\{p,q\}\}$.
\end{itemize}
Thus e.g.\ ${\cal IC}^{\sf DMT}_{\cal P}(\emptyset,\{p,q\})= \{\emptyset\}\times \{\{p\},\{q\},\{p,q\}\}$ and ${\cal IC}^{\sf DMT}_{\cal P}(\{p\},\{p\})=\{\{p\},\{q\},\{p,q\}\}\times \{\{p\},\{q\},\{p,q\}\}$. We thus see that $(\{p\},\{p\})$ is a stable fixpoint of ${\cal IC}^{\sf DMT}_{\cal P}$.

A slightly extended program ${\cal P}=\{q\leftarrow \lnot q; p\lor q\leftarrow  q\}$ shows some particular but unavoidable behavior of this operator. ${\cal IC}^{{\sf DMT},l}_{\cal P}(\emptyset,\{q\})=\{\emptyset\}$ as $\HR_{\cal P}(\emptyset)=\{\{q\}\}$ and $\HR_{\cal P}(\{q\})=\{\{p,q\}\}$. Note that the lower bound for is \emph{not} the stronger $\{p\}$. This would result in a loss of $\preceq^A_i$-monotonicity, as the lower bound $\{\{q\}\}$ for the less informative $(\emptyset,\{q\})$ would be $\preceq^S_L$-incomparable to the lower bound $\{\{p\},\{q\},\{p,q\}\}$ of the more informative $(\{q\},\{q\})$.
\end{example}
We have shown in this section that non-deterministic AFT admits an ultimate operator, thus providing a way to construct an ndao based on a non-deterministic operator. We have also shown that the ultimate ndao diverges from the ultimate operator for deterministic AFT, but that this deterministic ultimate operator can be generalized to disjunctive logic programs. Both operators will be used in Section \ref{sec:aggregates} to define semantics for DLP's with aggregates.

\section{Semi-Equilibrium Semantics}\label{sec:seq}
To further extend the reach of non-deterministic AFT, we generalize yet another semantics for dlp's, namely the \emph{semi-equilibrium semantics} \citep{amendola2016semi}. The semi-equilibrium semantics is a semantics for disjunctive logic programs that has been studied for disjunctively normal logic programs. This semantics is a three-valued semantics that fulfills the following properties deemed desirable by \citet{amendola2016semi}: 
(1) Every (total) answer set of ${\cal P}$ corresponds to a semi-equilibrium model;
(2) If ${\cal P}$ has a (total) answer set, then all of its semi-equilibrium models are (total) answer sets;
(3) If ${\cal P}$ has a classical model, then ${\cal P}$ has a semi-equilibrium model.
We notice that these conditions can be seen as a view on approximation of the total stable interpretations alternative to the well-founded semantics. We do not aim to have the last word on which semantics is the most intuitive or desirable. Instead, we will show here that semi-equilibrium models can be represented algebraically, and thus can be captured within approximation fixpoint theory. This leaves the choice of exact semantics to the user once an ndao has been defined, and allows the use of the semi-equilibrium semantics for formalisms other than nlps, such as disjunctive logic programs with aggregates (see below) or conditional ADFs.

Semi-equilibrium models are based on the \emph{logic of here-and-there} \citep{pearce2006equilibrium}. An \emph{HT-interpretation} is a pair $(x,y)$ where $x\subseteq y$ (i.e.\ a consistent pair in AFT-terminology). Satisfaction of a formula $\phi$, denoted $\modelsht$, is defined recursively as follows:
\begin{itemize}
\item $(x,y)\modelsht \alpha$ if $\alpha\in x$ for any $\alpha\in {\cal A}_{\cal P}$,
\item $(x,y)\modelsht \lnot \phi$ if $(y,y)(\phi)\neq{\sf T}$, and $(x,y)\not\modelsht \bot$,
\item $(x,y)\modelsht \phi\land[\lor]\psi$ if $(x,y)\modelsht \phi$ and [or] $(x,y)\modelsht \psi$, 
\item $(x,y)\modelsht \phi\rightarrow \psi$ if (a) $(x,y)\not\modelsht\phi$ or $(x,y)\modelsht\psi$, and (b) $(y,y)(\lnot \phi\lor\psi)={\sf T}$.
\end{itemize}
The ${\sf HT}$-models of ${\cal P}$ are defined as ${\sf HT}({\cal P})=\{(x,y)\mid \forall\psi\leftarrow \phi\in {\cal P}: (x,y)\modelsht\phi\rightarrow \psi\}$.

Semi-equilibrium models are a special class of ${\sf HT}$-models. They are obtained by performing two minimization steps on the set of ${\sf HT}$-models of a program. The first step is obtained by minimizing w.r.t.\ $\leq_t$.\footnote{\cite{amendola2016semi} proceeds as follows. First, ${\sf HT}^\kappa({\cal P})=\{x\cup \{{\sf K}\alpha\mid\alpha\in y\}\}$ is constructed, and then the $\subseteq$-minimal sets in ${\sf HT}^\kappa({\cal P})$ are selected. It is straightforward to see that this is equivalent to minimizing the original interpretations w.r.t.\ $\leq_t$.} The second step is obtained by selecting the \emph{maximal canonical models}. For this, the \emph{gap} of an interpretation is defined as $gap(x,y)=y\setminus x$,\footnote{Again, \cite{amendola2016semi} proceeds in a slightly more convoluted way by defining $gap(I)=\{{\sf K}\alpha\in I\mid \alpha\not\in I\}$ for any $I\in {\sf HT}^\kappa({\cal P})$.} and, for any set of interpretations $\Xb$, the \emph{maximally canonical interpretations} are $mc(\Xb)=\{(x,y)\in \Xb\mid {\not\exists} (w,z)\in \Xb: gap(x,y)\supset gap(w,z)\}$. The semi-equilibrium models of ${\cal P}$ are then defined as:
$ \mathcal {{SEQ}}({\cal P})= mc\left( \min_{\leq_t}({\sf HT}({\cal P})\right)$.
\begin{example}
We illustrate these semantics with the program ${\cal P}=\{p\leftarrow \lnot p, s\lor q\leftarrow \lnot s, s\lor q\leftarrow \lnot q\}$. 
Then ${\sf HT}({\cal P})=\{(x,y)\mid \{p\}\subseteq y\subseteq \{p,q,s\}, x\subseteq y, \{q,s\}\cap y\neq \emptyset\}$. Furthermore, $\min_{\leq_t} ({\sf HT}({\cal P}))=\{(\emptyset,\{p,q,s\}), (\{q\},\{q,p\}), (\{s\},\{s,p\})\}$. As $gap(\emptyset,\{p,q,s\})=\{p,q,s\}$ and $gap (\{q\},\{q,p\})=gap  (\{s\},\{s,p\})=\{p\}$, $\mathcal{SEQ}=\{ (\{q\},\{q,p\}), (\{s\},\{s,p\})\}$.
\end{example}
Before we capture the ideas behind this semantics algebraically, we look a bit deeper into the relationship between ${\sf HT}({\cal P})$-models and the classical notion of three-valued models of a program (see Section \ref{sec:LP}). 
We first observe that ${\sf HT}$-models of a program are a proper superset of the three-valued models of a program:
\begin{proposition}\label{prop:three-valued:are:subset:of:HT}
Let a disjunctively normal logic program ${\cal P}$ and a consistent intepretation $(x,y)$ be given. Then if $(x,y)$ is a model of ${\cal P}$, it is an ${\sf HT}$-model of ${\cal P}$. However, not every ${\sf HT}$-model is a model of ${\cal P}$.
\end{proposition}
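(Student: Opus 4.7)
The proposition splits into two independent claims---the containment that every three-valued model is an HT-model, and the strictness that some HT-model is not a three-valued model---which I would prove separately.

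For the containment, fix a rule $\psi \leftarrow \phi$ with $\phi$ a conjunction of literals and $\psi$ a non-empty disjunction of atoms, and assume $(x,y)(\phi)\leq_t (x,y)(\psi)$; I need both clauses in the definition of $(x,y)\modelsht \phi\rightarrow\psi$. My plan is to rely on two elementary facts that follow by direct case analysis on literals/atoms using $x\subseteq y$: (L1) for such $\phi$ and $\psi$, $(x,y)\modelsht \phi$ iff $(x,y)(\phi)={\sf T}$, and $(x,y)\modelsht \psi$ iff $(x,y)(\psi)={\sf T}$; and (L2) if $(x,y)(\phi)={\sf F}$ then $(y,y)(\phi)={\sf F}$, while if $(x,y)(\psi)\geq_t{\sf U}$ then $(y,y)(\psi)={\sf T}$ (the latter because some head atom must already lie in $y$). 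With (L1) and (L2) in hand, I would case-split on $(x,y)(\phi)\in\{{\sf F},{\sf U},{\sf T}\}$. If $(x,y)(\phi)={\sf T}$, the model condition forces $(x,y)(\psi)={\sf T}$, so (L1) gives clause (a) and (L2) gives $(y,y)(\psi)={\sf T}$ for clause (b). If $(x,y)(\phi)={\sf F}$, then (L1) makes clause (a) vacuous and (L2) yields $(y,y)(\lnot\phi)={\sf T}$ for (b). The only delicate case is $(x,y)(\phi)={\sf U}$: here $(y,y)(\phi)$ can be raised to ${\sf T}$ by atoms from $y\setminus x$ occurring positively in the body, so one cannot deduce $(y,y)(\lnot\phi)={\sf T}$. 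This is precisely where (L2) on the head rescues clause (b)---from $(x,y)(\psi)\geq_t{\sf U}$ we get $(y,y)(\psi)={\sf T}$, and hence $(y,y)(\lnot\phi\lor\psi)={\sf T}$ via the right disjunct; clause (a) again follows from (L1) since $(x,y)(\phi)\neq{\sf T}$ precludes $(x,y)\modelsht\phi$.

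For the strictness, I exhibit $\mathcal{P}=\{p\leftarrow \lnot q\}$ and $(x,y)=(\emptyset,\{q\})$. Since $(x,y)(p)={\sf F}$ and $(x,y)(\lnot q)={\sf U}$, the three-valued model inequality $(x,y)(p)\geq_t(x,y)(\lnot q)$ fails. However, $q\in y$ makes $(y,y)(q)={\sf T}$, which immediately gives $(x,y)\not\modelsht\lnot q$ (so clause (a) is vacuous) as well as $(y,y)(\lnot\lnot q\lor p)={\sf T}$ (clause (b)); hence $(x,y)\modelsht\lnot q\rightarrow p$, and $(x,y)$ is an HT-model that is not a three-valued model. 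The main obstacle in the whole argument is the ${\sf U}$-case of clause (b) above, which is handled cleanly by (L2) on the head disjunction.
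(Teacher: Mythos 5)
Your proof is correct and follows essentially the same route as the paper's: the same lifting facts drive both arguments (a body that is false under $(x,y)$ stays false under $(y,y)$, and a head that is at least ${\sf U}$ under $(x,y)$ becomes ${\sf T}$ under $(y,y)$, which is exactly what rescues clause (b) in the ${\sf U}$-case), and your counterexample is the paper's $\{b\leftarrow\lnot c\}$ with $(\emptyset,\{c\})$ up to renaming. The only cosmetic difference is that you organize the containment as a three-way case split on the truth value of the body, whereas the paper verifies clauses (a) and (b) of the HT-implication separately.
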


We now define the concept of a ${\sf HT}$-pair algebraically, inspired by  Truszczy\'nski (\citeyear{truszczynski2006strong}):

\begin{definition}
Given an ndao ${\cal O}$ approximating a non-determinstic operator $O$, a pair $(x,y)$ is a \emph{{\sf HT}-pair} (denoted $(x,y)\in {\sf HT}({\cal O})$) if the following three conditions are satisfied:
(1) $x\leq y$,
(2) $O(y)\preceq^S_L y$, and
(3) ${\cal O}_l(x,y)\preceq^S_L x$.
\end{definition}

This simple definition faithfully transposes the ideas behind {\sf HT}-models to an algebraic context. Indeed, applying it to ${\cal IC}_{\cal P}$ gives use exactly the {\sf HT}-models of ${\cal P}$:
\begin{proposition}\label{prop:ht:models:for:lp:are:faithful}
Let some normal disjunctive logic program ${\cal P}$ be given. Then:
${\sf HT}({\cal P})={\sf HT}({\cal IC}_{\cal P})$. 
\end{proposition}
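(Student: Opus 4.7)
The plan is to prove $(x,y) \in {\sf HT}({\cal P}) \iff (x,y) \in {\sf HT}({\cal IC}_{\cal P})$ by unfolding the three conditions that define the right-hand side into per-rule constraints and matching each with clause (a) or (b) of HT-satisfaction of the corresponding rule-implication.

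First I would note that condition (1), $x \leq y$, is the consistency requirement shared by both sides. Then I would rewrite the Smyth-order condition (2), $IC_{\cal P}(y) \preceq^S_L \{y\}$, into the equivalent per-rule statement: for every rule $\bigvee\!\Delta \leftarrow \phi \in {\cal P}$ with $(y,y)(\phi) = {\sf T}$, it holds that $\Delta \cap y \neq \emptyset$. The forward direction takes a witness $z \in IC_{\cal P}(y)$ with $z \subseteq y$ and uses that $z$ meets every triggered head; the backward direction constructs such a $z$ by picking one atom from each triggered head in $y$. Since $(y,y)$ is total, $(y,y)(\lnot \phi \vee \Delta) = {\sf T}$ is equivalent to $(y,y)(\phi) \neq {\sf T}$ or $\Delta \cap y \neq \emptyset$, so this per-rule statement is exactly clause (b) of $(y,y) \modelsht \phi \to \Delta$.

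Next I would unfold condition (3), ${\cal IC}^l_{\cal P}(x,y) \preceq^S_L \{x\}$, analogously into: for every rule $\bigvee\!\Delta \leftarrow \phi$ with $(x,y)(\phi) \geq_t {\sf C}$, one has $\Delta \cap x \neq \emptyset$. The key technical lemma is that, for a body $\phi$ which is a conjunction of literals and a consistent pair $(x,y)$, $(x,y)(\phi) \geq_t {\sf C}$ iff $(x,y) \modelsht \phi$. For literals this is a direct case check: for an atom $q$, $(x,y)(q) \geq_t {\sf C}$ reduces (using $x \subseteq y$) to $q \in x$, which is $(x,y) \modelsht q$; for $\lnot r$, $(x,y)(\lnot r) \geq_t {\sf C}$ reduces to $r \notin y$, matching $(x,y) \modelsht \lnot r$. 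The conjunctive case follows because the $\leq_t$-greatest lower bound of two values from $\{{\sf T},{\sf U},{\sf F}\}$ is $\geq_t {\sf C}$ iff both arguments are. Since $\Delta$ is a disjunction of atoms, $(x,y) \modelsht \Delta$ iff $\Delta \cap x \neq \emptyset$, so condition (3) translates to clause (a) of $(x,y) \modelsht \phi \to \Delta$.

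Combining these identifications yields the proposition at once. The main obstacle I anticipate is not any deep argument but careful bookkeeping: one must translate the abstract Smyth-order comparisons into rule-by-rule intersection constraints, and in particular verify that the containment $z \subseteq \bigcup \mathcal{HD}^l_{\cal P}(x,y)$ built into the definition of ${\cal IC}^l_{\cal P}(x,y)$ does not obstruct the witness construction. It does not, since any selection of atoms from triggered heads automatically lies in their union.
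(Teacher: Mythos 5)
Your proposal is correct and follows essentially the same route as the paper's proof: the paper's key lemma is exactly your per-rule translation of condition (3) into clause (a) of HT-satisfaction via the equivalence $(x,y)(\phi)\geq_t{\sf C}$ iff $(x,y)\modelsht\phi$ for conjunctions of literals over consistent pairs, with condition (2) matching clause (b) on the total interpretation $(y,y)$. Your write-up is in fact somewhat more explicit than the paper's (which leaves the clause (b) half and the witness construction for the Smyth-order unfolding implicit), but there is no substantive difference in method.
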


We now show that  exact $\leq_t$-minimal ${\sf HT}$-models of ${\cal O}$ are stable interpretations of ${\cal O}$ in our algebraic setting. The opposite direction holds as well: total stable fixpoints are $\leq_t$-minimal ${\sf HT}$-pairs of ${\cal O}$. In fact, \emph{every} total fixpoint of ${\cal O}$ is a ${\sf HT}$-pair of ${\cal O}$. We assume that ${\cal O}$ is \emph{upwards coherent}, i.e.\ for every $x,y\in{\cal L}$, ${\cal O}_l(x,y)\preceq^S_L {\cal O}_u(x,y)$. In the appendix, we provide more details on upwards coherent operators. Notice that all ndaos in this paper are upwards coherent.
\begin{proposition}\label{prop:ht-models:and:stable:fixpoints}
Given an upwards coherent ndao ${\cal O}$, (1) if $(x,x)\in {\cal O}(x,x)$ then $(x,x)\in {\sf HT}({\cal O})$;
and (2) $(x,x)\in \min_{\leq_t}({\sf HT}({\cal O}))$ iff $(x,x)\in S({\cal O})(x,x)$.
\end{proposition}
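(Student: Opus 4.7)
For part~(1), the assumption $(x,x)\in{\cal O}(x,x)$ combined with the fact that ${\cal O}$ approximates~$O$ gives $x\in O(x)={\cal O}_l(x,x)$, so each of the three clauses defining ${\sf HT}({\cal O})$ holds with $x$ itself as the required witness. For the forward direction of part~(2), assuming $(x,x)$ is $\leq_t$-minimal in ${\sf HT}({\cal O})$, condition~3 applied to $(x,x)$ supplies some $w\in{\cal O}_l(x,x)$ with $w\leq x$; if $w<x$ I would check $(w,x)\in{\sf HT}({\cal O})$ (conditions~1 and~2 are inherited, and condition~3 transports along $(w,x)\leq_i(x,x)$ via $\preceq^A_i$-monotonicity), contradicting minimality and forcing $x\in{\cal O}_l(x,x)$. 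The same transport converts any putative $x'<x$ with $x'\in{\cal O}_l(x',x)$ into a smaller HT-pair $(x',x)$, giving $x\in C({\cal O}_l)(x)$. For the upper bound, exactness yields $x\in{\cal O}_u(x,x)$; any $y'<x$ with $y'\in{\cal O}_u(x,y')$ produces, via upward coherence, some $w\in{\cal O}_l(x,y')$ with $w\leq y'$, and I verify $(w,y')\in{\sf HT}({\cal O})$ (condition~2 through $(y',y')\leq_i(x,y')$ together with exactness $O(y')={\cal O}_l(y',y')$, condition~3 through $(w,y')\leq_i(x,y')$), again contradicting minimality.

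For the backward direction, part~(1) immediately yields $(x,x)\in{\sf HT}({\cal O})$, so only minimality remains. Suppose $(w,z)\in{\sf HT}({\cal O})$ with $(w,z)<_t(x,x)$. First I reduce to $z=x$: when $z<x$, the inclusion $(w,x)\leq_i(w,z)$ gives ${\cal O}_l(w,x)\preceq^S_L{\cal O}_l(w,z)$, so condition~3 of $(w,z)$ transports to $(w,x)$, yielding $(w,x)\in{\sf HT}({\cal O})$ with $w\leq z<x$. It thus suffices to contradict the existence of $(w,x)\in{\sf HT}({\cal O})$ with $w<x$. Condition~3 yields $w_1\in{\cal O}_l(w,x)$ with $w_1\leq w$, and the analogous monotonicity transport gives $(w_1,x)\in{\sf HT}({\cal O})$, letting me iterate. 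Via the standard transfinite Kleene-style construction for the $\preceq^S_L$-monotone operator ${\cal O}_l(\cdot,x)$ on the complete lattice---descending through HT-witnesses and taking glbs at limit ordinals, with condition~3 preserved at limits by $\preceq^A_i$-monotonicity---this chain stabilises at some $x^*\leq w<x$ with $x^*\in{\cal O}_l(x^*,x)$, directly contradicting $x\in C({\cal O}_l)(x)$.

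The hard part is justifying the limit step of this iteration: the glb of a descending chain of HT-pairs $(s_\alpha,x)$ must itself satisfy condition~3 of ${\sf HT}$, and this is where chain completeness of the complete lattice and $\preceq^A_i$-monotonicity of~${\cal O}$ must be combined carefully. Everything else reduces to routine transports of HT-conditions along the $\preceq^A_i$-monotonicity axiom, with upward coherence playing its dedicated role of bridging ${\cal O}_l$ and ${\cal O}_u$ in the upper-bound clause of the forward direction.
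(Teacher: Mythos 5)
Your overall architecture matches the paper's: part (1) is immediate from exactness; the forward direction of (2) establishes $x\in C({\cal O}_l)(x)$ by turning any smaller (pre-)fixpoint into a smaller {\sf HT}-pair, and $x\in C({\cal O}_u)(x)$ by using upward coherence to pull a witness from ${\cal O}_u$ down into ${\cal O}_l$; the backward direction reduces any smaller {\sf HT}-pair $(w,z)$ to one of the form $(w,x)$ and then to a pre-fixpoint of ${\cal O}_l(\cdot,x)$ below $x$. Your inlined transports along $\preceq^A_i$-monotonicity are correct and, in the forward lower-bound step, slightly more direct than the paper, which routes through the auxiliary fact that a $\leq$-minimal pre-fixpoint of a $\preceq^S_L$-monotonic non-deterministic operator is a $\leq$-minimal fixpoint (Lemma \ref{lemma:minimalpre-fix:is:minima:fix}).

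The genuine gap is exactly where you flag it: the limit step of your transfinite descent in the backward direction. From a pre-fixpoint $w<x$ of ${\cal O}_l(\cdot,x)$ you descend through witnesses $w_{\alpha+1}\in{\cal O}_l(w_\alpha,x)$ with $w_{\alpha+1}\leq w_\alpha$, but at a limit ordinal, setting $s=\bigwedge_\alpha w_\alpha$, $\preceq^S_L$-monotonicity only gives you, for \emph{each} $\alpha$, \emph{some} $u_\alpha\in{\cal O}_l(s,x)$ with $u_\alpha\leq w_\alpha$; nothing forces a \emph{single} $u\in{\cal O}_l(s,x)$ with $u\leq s$, since the Smyth-order witnesses may vary along the chain. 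Chain-completeness of ${\cal L}$ plus $\preceq^A_i$-monotonicity do \emph{not} suffice here; this is precisely the content of the ``downward closed'' condition (Definition \ref{def:downwards:closed}), which holds automatically on finite lattices but not in general. The paper sidesteps the issue by black-boxing the needed fact as a lemma from prior work (that a $\leq$-minimal fixpoint is a $\leq$-minimal pre-fixpoint, Lemma \ref{minimalfp:then:minimal:prefp}), so $x\in C({\cal O}_l)(x)$ directly forbids any pre-fixpoint $w<x$. To close your version you should either invoke that lemma rather than re-derive it, or add a downward-closedness/finiteness hypothesis to justify the limit stage.
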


The second concept that we have to generalize to an algebraic setting is that of maximal canonical models. Recall that $gap(x,y)$ consists of the atoms which are neither true nor false, i.e.\ it can be used as a measure of the informativeness or precision of a pair. For the algebraic generalization of this idea, it is useful to assume that the lattice under consideration admits a difference for every pair of elements.\footnote{If a lattice does not admit a difference for some elements, one cannot characterise the semi-equilibrium semantics exactly, but can still obtain an approximate characterisation. We detail this in the appendix.} In more detail, $z\in {\cal L}$ is the \emph{difference} of $y$ w.r.t.\ $x$ if $z\sqcap x=\bot$ and $x\sqcup y=x\sqcup z$.  If the difference is unique we denote it by $x\oslash y$. As an example, note that any Boolean lattice admits a unique difference for every pair of elements. We can then define $\mathsf{mc}(\mathbf{X})=\argmin_{(x,y)\in \mathbf{X}}\{y\oslash x\}$.  This allows us to algebraically formulate the semi-equilibrium models of an ndao ${\cal O}$ as \[{\cal SEQ}({\cal O})=\mathsf{mc}\left(\min_{\leq_t}( {\sf HT}({\cal O})  )\right)\]
The properties mentioned at the start of this section are preserved, and this definition generalizes the semi-equilibrium models for disjunctive logic programs by \cite{amendola2016semi}:
\begin{proposition}\label{prop:algebraic:seq:is:nice}
Let an upwards coherent ndao ${\cal O}$ over a finite lattice be given s.t.\ every pair of elements admits a unique difference.
Then ${\cal SEQ}({\cal O})\neq\emptyset$. 
Furthermore, if there is some $(x,x)\in \mathsf{mc}(\min_{\leq_t}( {\sf HT}({\cal O})  ))$ then ${\cal SEQ}({\cal O})=\{(x,x)\in {\cal L}^2\mid (x,x)\in S({\cal O})(x,x)\}$.
\end{proposition}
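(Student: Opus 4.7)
The plan is to prove the two parts of the proposition separately, both by direct arguments that exploit finiteness and the concrete form of the difference for exact pairs.

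For non-emptiness of ${\cal SEQ}({\cal O})$, I would first exhibit an explicit element of ${\sf HT}({\cal O})$, namely $(\top,\top)$, where $\top$ is the top of the (finite, hence complete) lattice. All three conditions hold trivially: $\top\leq\top$; and both $O(\top)\preceq^S_L \top$ and ${\cal O}_l(\top,\top)\preceq^S_L\top$ reduce to the claim that some element of a non-empty set of lattice elements is $\leq\top$, which is immediate. Because ${\cal L}$ is finite, $\min_{\leq_t}({\sf HT}({\cal O}))$ is then non-empty, and a second appeal to finiteness yields a $\leq$-minimal element of the set of differences $\{y\oslash x \mid (x,y)\in\min_{\leq_t}({\sf HT}({\cal O}))\}$, guaranteeing $\mathsf{mc}(\min_{\leq_t}({\sf HT}({\cal O})))\neq\emptyset$.

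For the characterization, the key observation is that for any exact pair $(x,x)$, the unique difference is $x\oslash x=\bot$: indeed $\bot\sqcap x=\bot$ and $x\sqcup x=x=x\sqcup\bot$, and $\bot$ is the (unique) such witness by the uniqueness assumption. Since $\bot$ is the minimum element of the lattice, the hypothesis that some exact $(x,x)$ lies in $\mathsf{mc}(\min_{\leq_t}({\sf HT}({\cal O})))$ forces the minimum gap achieved on this set to be $\bot$. Hence every $(w,z)\in{\cal SEQ}({\cal O})$ satisfies $z\oslash w=\bot$; unpacking the definition of difference we obtain $w\sqcup z=w\sqcup\bot=w$, so $z\leq w$, which combined with $w\leq z$ from the first {\sf HT}-condition forces $w=z$. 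So ${\cal SEQ}({\cal O})$ consists entirely of exact pairs lying in $\min_{\leq_t}({\sf HT}({\cal O}))$.

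To close the argument, I would invoke Proposition~\ref{prop:ht-models:and:stable:fixpoints}(2): the exact elements of $\min_{\leq_t}({\sf HT}({\cal O}))$ are precisely the total stable fixpoints. Conversely, every total stable fixpoint $(y,y)$ is in $\min_{\leq_t}({\sf HT}({\cal O}))$ and has gap $y\oslash y=\bot$, which is minimum, so it belongs to $\mathsf{mc}(\min_{\leq_t}({\sf HT}({\cal O})))={\cal SEQ}({\cal O})$. Combining both inclusions yields the claimed equality.

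The main obstacle, if any, is bookkeeping around the $\mathsf{mc}$ operator: I need to be careful that $\mathsf{mc}$ compares gaps via the lattice order rather than by any ad-hoc relation, and that uniqueness of differences (guaranteed by hypothesis) genuinely pins down $x\oslash x=\bot$ so that the ``exact $\Rightarrow$ gap $\bot$'' argument is airtight. The rest is essentially a direct application of Proposition~\ref{prop:ht-models:and:stable:fixpoints}.
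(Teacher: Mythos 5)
Your proof is correct, and the second half (the characterization of ${\cal SEQ}({\cal O})$ when an exact pair lies in $\mathsf{mc}(\min_{\leq_t}({\sf HT}({\cal O})))$) is essentially identical to the paper's argument: exact pairs have difference $\bot$, so the minimum achieved gap is $\bot$, so every semi-equilibrium pair is exact, and Proposition~\ref{prop:ht-models:and:stable:fixpoints}(2) finishes both inclusions. The one place you genuinely diverge is the non-emptiness claim: the paper invokes the downward-closedness machinery from \cite{nondetAFTarxiv} to obtain a $\leq$-minimal fixpoint $x$ of ${\cal O}_l(\cdot,\top)$ and then checks that $(x,\top)\in{\sf HT}({\cal O})$, whereas you simply observe that $(\top,\top)$ already satisfies all three conditions of an ${\sf HT}$-pair, since conditions (2) and (3) only demand that the (non-empty) sets $O(\top)$ and ${\cal O}_l(\top,\top)$ contain \emph{some} element below $\top$, which is automatic. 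Your route is more elementary and bypasses the fixpoint-existence lemma entirely; what the paper's heavier construction buys is a more informative witness (a pair whose lower bound is already a stable-style fixpoint relative to $\top$), but for the stated conclusion ${\cal SEQ}({\cal O})\neq\emptyset$ your shortcut suffices, given finiteness of the lattice to extract $\leq_t$-minimal elements and then gap-minimal ones.
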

\begin{corollary}
Let a disjunctively normal logic program ${\cal P}$ be given. Then ${\cal SEQ}({\cal IC}_{\cal P})={\cal SEQ}({\cal P})$.
\end{corollary}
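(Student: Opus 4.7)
The plan is to reduce the corollary to the algebraic counterparts that have already been established, observing that on the powerset lattice $\langle\wp({\cal A}_{\cal P}),\subseteq\rangle$ all the algebraic ingredients reduce to their classical counterparts.

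First I would apply Proposition~\ref{prop:ht:models:for:lp:are:faithful} to obtain ${\sf HT}({\cal P}) = {\sf HT}({\cal IC}_{\cal P})$. This matches the underlying sets of {\sf HT}-pairs on both sides of the equation we want to prove. Next, I would observe that the $\leq_t$-order on four-valued interpretations used in the classical definition of ${\cal SEQ}({\cal P})$ (namely $(x,y)\leq_t (w,z)$ iff $x\subseteq w$ and $y\subseteq z$) coincides with the bilattice truth order induced on $\wp({\cal A}_{\cal P})^2$ by the underlying powerset lattice, as already recalled in Section~\ref{sec:LP}. Hence $\min_{\leq_t}({\sf HT}({\cal P}))=\min_{\leq_t}({\sf HT}({\cal IC}_{\cal P}))$.

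Second, I would verify that the algebraic maximal-canonical operator $\mathsf{mc}$ coincides with the classical $mc$ on this lattice. Because $\langle\wp({\cal A}_{\cal P}),\subseteq\rangle$ is a Boolean lattice, every pair of elements admits a unique difference, and concretely $y\oslash x = y\setminus x = gap(x,y)$. The $\argmin$ in the algebraic definition is taken with respect to the lattice order on differences, which is set inclusion, so $\mathsf{mc}(\mathbf{X})$ returns exactly those pairs whose gap is $\subseteq$-minimal among gaps in $\mathbf{X}$. But $(x,y)\in mc(\mathbf{X})$ iff there is no $(w,z)\in\mathbf{X}$ with $gap(w,z)\subsetneq gap(x,y)$, which is the same condition. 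Therefore $\mathsf{mc}(\mathbf{X})=mc(\mathbf{X})$ for every $\mathbf{X}\subseteq{\sf HT}({\cal P})$.

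Combining these two observations:
\[
{\cal SEQ}({\cal IC}_{\cal P}) = \mathsf{mc}\bigl(\min_{\leq_t}({\sf HT}({\cal IC}_{\cal P}))\bigr) = mc\bigl(\min_{\leq_t}({\sf HT}({\cal P}))\bigr) = {\cal SEQ}({\cal P}),
\]
which is the desired equality. The main (and essentially only) obstacle is the careful bookkeeping to confirm that $\oslash$ and $\setminus$ coincide on the powerset lattice and that the algebraic $\leq_t$ on $\wp({\cal A}_{\cal P})^2$ is literally the classical componentwise order used by \cite{amendola2016semi}; these are straightforward checks, but they are what makes the algebraic abstraction and the original definition agree.
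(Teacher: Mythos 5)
Your proof is correct and follows the same route the paper intends (the corollary is stated without an explicit proof, as an immediate consequence of Proposition~\ref{prop:ht:models:for:lp:are:faithful} together with the observation that on the Boolean lattice $\langle\wp({\cal A}_{\cal P}),\subseteq\rangle$ the difference $y\oslash x$ is exactly $gap(x,y)$ and $\mathsf{mc}$ reduces to $mc$). Your write-up simply makes these bookkeeping steps explicit.
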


In this section, we have shown that semi-equilibrium models can be characterized algebraically.
This means semi-equilibrium models can now be obtained for other ndao's (e.g.\ those from Section \ref{sec:aggregates}, as illustrated in \ref{semi-equilibrium:to:aggregates}), thus greatly enlarging the reach of these semantics. 

We end this section by making a short, informal comparison between the semi-equilibrium models and the well-founded state for ndaos \citep{nondetAFTarxiv}. Both constructions have a similar goal: namely, approximate the (potentially non-existent) total stable interpretations. In the case of the semi-equilibrium models, the set of semi-equilibrium models coincides with the total stable interpretations if they exist, whereas the well-founded state approximates any stable interpretation (and thus in particular the total stable interpretations), but might not coincide with them. When it comes to existence, we have shown here that the semi-equilibrium models exist for any ndao, just like the well-founded state. Thus, the well-founded state and semi-equilibrium models seem to formalize two different notions of approximation. Which notion is most suitable is hard to decide \emph{in abstracto} but will depend on the exact application context.

\section{Application to DLPs with Aggregates}\label{sec:aggregates}
We apply non-deterministic AFT to disjunctive logic programs with aggregates by studying three ndaos: the ultimate, {\sf DMT} and the trivial operators. We show the latter two generalize the ultimate semantics \citep{pelov2007well} respectively  the semantics by \cite{gelfond2019vicious}. 

\subsection{Preliminaries on aggregates}\label{sec:agg:preliminaries}
We survey the necessary preliminaries on aggregates and the corresponding programs, restricting ourselves to propositional aggregates and leaving aggregates with variables for future work.

A set term $S$ is  a set of pairs of the form $[ \overline{t}: \conj]$ with $t$ a list of constants and $\conj$ a ground conjunction of standard atoms
For example, 
$[1:p; 2:q; -1:r]$ intuitively assigns $1$ to $p$, $2$ to $q$ and $-1$ to $r$. 
An \textit{aggregate function} is of the form $f(S)$ where $S$ is a set term, and $f$ is an \textit{aggregate function symbol} (e.g.\ $\#{\tt Sum}$, $\#{\tt Count}$ or $\#{\tt Max}$). An \emph{aggregate atom} is an expression of the form $f(S)\ast w$ where $f(S)$ is an aggregate function, $\ast\in \{<,\leq,\geq,>,=\}$ and $w$ is a numerical constant. We denote by   ${\sf At}(f(S)\ast w)$ the atoms occuring in $S$.

A \emph{disjunctively normal aggregate program} consists of rules of the form (where $\Delta$ is a set of propositional atoms, and $\alpha_1,\ldots,\alpha_n,\beta_1,\ldots,\beta_m$ are aggregate or propositional atoms):
\[ \bigvee \Delta\leftarrow \alpha_1,\ldots,\alpha_n,\lnot \beta_1,\ldots,\lnot \beta_m\]

An aggregate symbol is evaluated w.r.t.\ a set of atoms as follows. First, let $x(S)$ denote the multiset $[t_1 \mid \langle t_1,\ldots,t_n:{Conj}\rangle\in S\mbox{ and }{Conj}\mbox{ is true w.r.t. }x]$. $x(f(S))$ is then simply the result of the application of $f$ on $x(S)$. If the multiset $x(S)$ is not in the domain of $f$, $x(f(s))=\curlywedge$ where $\curlywedge$ is a fixed symbol not occuring in ${\cal P}$. An aggregate atom $f(S)\ast w$ is true w.r.t.\ $x$ (in symbols, $x(f\ast w)={\sf T}$) if: (1) $x(f(S))\neq \curlywedge$ and (2) $x(f(S))\ast w$ holds; otherwise, $f(S)\ast w$ is false (in symbols, $x(f\ast w)={\sf F}$). $\lnot f(S)\ast w$ is true if: (1) $x(f(S))\neq \curlywedge$ and (2) $x(f(S))\ast w$ does not hold; otherwise, $\lnot f(S)\ast w$ is false. Evaluating a conjunction of aggregate atoms is done as usual. We can now straightforwardly generalize the immediate consequence operator for disjunctive logic programs to disjunctive aggregate programs by generalizing $\HR_{\cal P}$ to take into account aggregate formulas as described above: $\HR_{\cal P}(x)=\{ \Delta\mid \bigvee\Delta\leftarrow \phi\in {\cal P}, x(\phi)={\sf T}\}$. $\IC_{\cal P}$ from Definition \ref{def:operator:disj:lp} is then generalized straightforwardly by simply using the generalized $\HR_{\cal P}$. 
Thus, the only difference with the immediate consequence operator for dlp's is that the set of activated heads $\HR_{\cal P}$ now takes into account the truth of aggregates as well.

The first semantics we consider is the one formulated by  \citet{gelfond2019vicious} (defined there only for logic programs with aggregates occurring positively in the body of a rule):
\begin{definition}
Let a disjunctively normal aggregate logic program ${\cal P}$ s.t.\ for every $\bigvee\Delta\leftarrow \bigwedge_{i=1}^n\alpha_i\land \bigwedge_{j=1}^m \lnot \beta_j\in {\cal P}$, $\beta_j$ is a normal (i.e.\ non-aggregate) atom. Then the {\sf GZ}-reduct of ${\cal P}$ w.r.t.\ $x$ is defined by doing, for every $r=\bigvee\Delta\leftarrow \bigwedge_{i=1}^n\alpha_i\land \bigwedge_{j=1}^m \lnot \beta_j\in {\cal P}$, the following:
(1) if an aggregate atom $\alpha_i$ is false or undefined for some $i=1,\ldots,n$, delete $r$;
(2) otherwise, replace every aggregate atom $\alpha_i=f(S)\ast w$ by $\bigcup\{Conj\mbox{ occurs in S}\mid x(Conj)={\sf T}\}$.
We denote the {\sf GZ}-reduct of ${\cal P}$ by ${\cal P}^x_{\sf GZ}$. Notice that this is a disjunctively normal logic program. 
A set of atoms $x\subseteq {\cal A}_{\cal P}$ is a \emph{{\sf GZ}-answer set of ${\cal P}$} if $(x,x)$ is an answer set of ${\cal P}^x_{\sf GZ}$.
\end{definition}

\begin{example}
Consider the program ${\cal P}=\{p\leftarrow \#{\tt Sum}[1: p,q]>0; p\leftarrow \#{\tt Sum}[1: q]>0; q\leftarrow \#{\tt Sum}[1: s]<1\}$. We check whether $\{p,q\}$ is a ${\sf GZ}$-answer set as follows:
\begin{enumerate}
\item The ${\sf GZ}$-reduct is
$ 
{\cal P}^{\{p,q\}}_{\sf GZ}=\{ p\leftarrow p,q;\quad p\leftarrow q;\quad q\leftarrow \}.
$
In more detail, as $\{p,q\}(\#{\tt Sum}[1: p,q]>0)={\sf T}$, we replace $\#{\tt Sum}[1: p,q]>0$ in the first rule by the atoms in the condition of this aggregate atom verified by $\{p,q\}$, namely $p$ and $q$. Similarly for the other rules.
\item As $\{p,q\}$ (or, to be formally more precise, $(\{p,q\},\{p,q\})$)  is a minimal model of $\frac{{\cal P}^{\{p,q\}}_{\sf GZ}}{(\{p,q\},\{p,q\})}$, we see $\{p,q\}$ is a {\sf GZ}-answer set of ${\cal P}$.
\end{enumerate}
\end{example}

We now move to the semantics by \cite{denecker2002ultimate}. They are defined only for non-disjunctive aggregate programs. They are defined on the basis of the ultimate (deterministic)  approximator ${\cal IC}^{\sf DMT}_{\cal P}$ (Definition \ref{def:ultimate:operator:DMT}). In more detail, an interpretation $(x,y)$ is \emph{${\sf DMT^d}$-stable} if and only if $(x,y)\in S({\cal IC}^{\sf DMT^d}_{\cal P})(x,y)$, i.e.\ $x\in \lfp({\cal IC}^{\sf DMT^d}_{\cal P}(.,y))$ and  $y\in \lfp({\cal IC}^{\sf DMT^d}_{\cal P}(x,.))$.

\begin{example}
Consider the program ${\cal P}=\{p\leftarrow \#{\tt Sum}[1:p]>0;\quad  p\leftarrow  \#{\tt Sum}[1:p]<1\}$.
$(\{p\},\{p\})$ is an ${\sf DMT^d}$-stable model of ${\cal P}$, but the program has no {\sf GZ}-stable models.

We first explain why $\{p\}$ is not a {\sf GZ}-stable model. First, we construct ${\cal P}^{\{p\}}_{\sf GZ}=\{ p\leftarrow p\}$. Since $\{p\}$ is not a stable model of ${\cal P}^{\{p\}}_{\sf GZ}$, we see that $\{p\}$ is not a {\sf GZ}-stable model.  Likewise, since ${\cal P}^{\emptyset}_{\sf GZ}=\{p\leftarrow \emptyset\}$, we see that $\emptyset$ is not a stable model of ${\cal P}^{\emptyset}_{\sf GZ}$ and therefore not {\sf GZ}-stable. 

To see $\{p\}$ is a ${\sf DMT^d}$-stable model, observe that ${\cal IC}_{\cal P}^{{\sf DMT^d},l}(\emptyset,\{p\})={\cal IC}_{\cal P}^{{\sf DMT^d},l}(\{p\},\{p\})=\{p\}$. Thus, $\lfp({\cal IC}_{\cal P}^{{\sf DMT^d},l}(.,\{p\})=\{p\}$, i.e.\ $(\{p\},\{p\})=S({\cal IC}^{\sf DMT^d}_{\cal P})(\{p\},\{p\})$. 
\end{example}

\subsection{Non-Deterministic Approximation Operators for Disjunctive Aggregate Programs}
We now proceed to define ndaos for disjunctive aggregate programs. The first ndao we consider generalizes the \emph{trivial} operator \citep{pelov2007well}, which maps two-valued interpretations to their immediate consequences whereas three-valued interpretations are mapped to the least precise pair $(\emptyset,{\cal A}_{\cal P})$ (or, in the non-deterministic case, $\{\emptyset\}\times \{{\cal A}_{\cal P}\}$). We also study the ndao ${\cal IC}^{\sf DMT}_{\cal P}$ based on the deterministic ultimate approximation, and the ultimate ndao ${\cal IC}_{\cal P}^{\cal U}$.
\begin{definition}
Given a disjunctively normal aggregate program ${\cal P}$ and a (consistent) interpretation $(x,y)$, let 
\begin{eqnarray*}
&{\cal IC}_{\cal P}^{\sf GZ}(x,y)&=
 \begin{cases}
 \IC_{\cal P}(x)\times \IC_{\cal P}(x) & \mbox{ if }x=y\\
\{\emptyset\}\times \{{\cal A}_{\cal P}\} & \mbox{ otherwise}
\end{cases}
\end{eqnarray*}
\end{definition}

The ndaos ${\cal IC}^{\sf DMT}_{\cal P}$ and ${\cal IC}^{\cal U}_{\cal P}$ are defined exactly the same as in section 3 (recall that $\IC_{\cal P}(x)$ was generalized for aggregates in Section \ref{sec:agg:preliminaries}).
We illustrate these semantics with an example:
\begin{example}\label{ex:running:aggregates}
Let ${\cal P}=\{r\lor q\leftarrow   \#{\tt Sum}[1:s]>0; s\leftarrow  \#{\tt Sum}[1:r, 1:q]>0\}$ be given.

We first look at ${\cal IC}_{\cal P}^{\sf GZ}$. As an example of a fixpoint, consider $(\{r,s\},\{r,s\})$. Notice first that $\#{\tt Sum}[1:r, 1:q]>0$ and $ \#{\tt Sum}[1:r, 1:q]>0$ are true in $\{r,s\}$. Thus, $\HR_{\cal P}=\{\{r,q\},\{s\}\}$ and ${\cal IC}^{{\sf GZ}}_{\cal P}(\{r,s\},\{r,s\})=\{\{r,s\},\{q,s\},\{r,q,s\}\}\times \{\{r,s\},\{q,s\},\{r,q,s\}\}$. 

We now look at the ${\sf DMT}$-semantics. For this, we first calculate $\HR_{\cal P}$ and $\IC_{\cal P}$ for all members of $\wp(\{r,q,s\})$ (with $\Delta_1=\{\{r\},\{q\},\{r,q\}\}$ and $\Delta_2=\{\{s,r\},\{s,q\},\{s,r,q\}\}$):

\begin{adjustbox}{width=0.95\textwidth,center}
\begin{oldtabular}{l|lllllllll} \hline\hline
$x$ & $\emptyset$ & $\{s\}$ & $\{q\}$ & $\{r\}$ & $\{r,q\}$ & $\{r,s\}$ & $\{q,s\}$ & $\{s,q,r\}$& \\ \hline
$\HR_{\cal P}(x)$ & $\emptyset$ & $ \{\{r,q\}\}$ & $\{\{s\}\}$ & $\{\{s\}\}$ & $\{\{s\}\}$ & $\{\{r,q\},\{s\}\}$& $\{\{r,q\},\{s\}\}$& $\{\{r,q\},\{s\}\}$& \\ \hline
$\IC_{\cal P}(x)$ & $\{\emptyset\}$ & $\Delta_1$ & $\{\{s\}\}$ &   $\{\{s\}\}$ & $\Delta_2$ &  $\Delta_2$ &  $\Delta_2$ &  $\Delta_2$& \\ \hline\hline
\end{oldtabular}
\end{adjustbox}

We then see that e.g.\ ${\cal IC}^{\sf DMT}_{\cal P}(\{r,s\},\{r,s\})=\{\{r,s\},\{q,s\},\{r,q,s\}\}\times \{\{r,s\},\{q,s\},\{r,q,s\}\}$ whereas ${\cal IC}^{\sf DMT}_{\cal P}(\emptyset,\{r,s\})=\{\emptyset\}\times \{ \{r,s\},\{q,s\},\{r,q,s\}\}$.

We see that ${\cal IC}^{\cal U}_{\cal P}(\{r,s\},\{r,s\})=\{\{r,s\},\{q,s\},\{r,q,s\}\}\times \{\{r,s\},\{q,s\},\{r,q,s\}\}$ whereas ${\cal IC}^{\cal U}_{\cal P}(\emptyset,\{r,s\})=\wp(\{r,s,q\})\times \wp(\{r,s,q\})$.
\end{example}

We now show that these operators are approximation operators with increasing orders of precision: ${\cal IC}^{\sf GZ}_{\cal P}$ is the least precise, ${\cal IC}^{\sf DMT}_{\cal P}$ holds a middle ground and ${\cal IC}^{\cal U}$ is the most precise:
\begin{proposition}\label{prop:aggregate:operators:are:ndaos}
Let some $\xi\in \{{\sf DMT},{\sf GZ},{\cal U}\}$ and a  disjunctively normal aggregate logic program ${\cal P}$ be given. Then ${\cal IC}_{\cal P}^{\xi}(x,y)$ is an ndao approximating $\IC_{\cal P}$. For any $(x,y)$, ${\cal IC}^{\sf GZ}_{\cal P}(x,y)\preceq^A_i {\cal IC}^{\sf DMT}_{\cal P}(x,y)\preceq^A_i {\cal IC}^{\cal U}_{\cal P}(x,y)$.
\end{proposition}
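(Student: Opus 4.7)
The plan is to treat the three claims in the order of increasing content: exactness and approximation first, then monotonicity, then the precision chain.

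First I would handle the ultimate case, which is almost immediate: instantiating Proposition~\ref{proposition:ultimate:is:most:precise} with $O = \IC_{\mathcal{P}}$ (now using the generalized $\HR_{\mathcal{P}}$ that evaluates aggregates, as defined in Section~\ref{sec:agg:preliminaries}) gives that ${\cal IC}^{\cal U}_{\mathcal{P}}$ is an ndao approximating $\IC_{\mathcal{P}}$, and that every other ndao approximating $\IC_{\mathcal{P}}$ is $\preceq^A_i$-below it. Once we establish that ${\cal IC}^{\sf GZ}_{\mathcal{P}}$ and ${\cal IC}^{\sf DMT}_{\mathcal{P}}$ are also ndaos approximating $\IC_{\mathcal{P}}$, the two rightmost inequalities in the precision chain involving ${\cal U}$ fall out automatically.

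Next I would dispatch ${\cal IC}^{\sf GZ}_{\mathcal{P}}$. Exactness is immediate from the first branch of its definition. For $\preceq^A_i$-monotonicity, assume $(x_1,y_1) \leq_i (x_2,y_2)$. If $x_1 = y_1$ then $\leq_i$ forces $x_1=x_2=y_2=y_1$ and there is nothing to show. Otherwise ${\cal IC}^{\sf GZ}_{\mathcal{P}}(x_1,y_1) = \{\emptyset\}\times\{{\cal A}_{\mathcal{P}}\}$, and whatever ${\cal IC}^{\sf GZ}_{\mathcal{P}}(x_2,y_2)$ is, the element $\emptyset$ serves as a witness for $\preceq^S_L$ on the lower side and ${\cal A}_{\mathcal{P}}$ serves as witness for $\preceq^H_L$ on the upper side (since every member of $\IC_{\mathcal{P}}(x_2)$ is a subset of ${\cal A}_{\mathcal{P}}$ containing $\emptyset$). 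For the chain ${\cal IC}^{\sf GZ}_{\mathcal{P}}(x,y) \preceq^A_i {\cal IC}^{\sf DMT}_{\mathcal{P}}(x,y)$, the diagonal case $x=y$ gives equality because ${\cal HD}^{{\sf DMT},\dagger}_{\mathcal{P}}(x,x)=\HR_{\mathcal{P}}(x)$, and the off-diagonal case is identical to the $\{\emptyset\}\times\{{\cal A}_{\mathcal{P}}\}$ witness argument above.

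The main substance is then checking that ${\cal IC}^{\sf DMT}_{\mathcal{P}}$ is an ndao (the only nontrivial piece, since Proposition~3 of \cite{nondetAFTarxiv} is stated for aggregate-free dlps). Exactness again reduces to ${\cal HD}^{{\sf DMT},l}_{\mathcal{P}}(x,x)=\HR_{\mathcal{P}}(x)={\cal HD}^{{\sf DMT},u}_{\mathcal{P}}(x,x)$, so ${\cal IC}^{\sf DMT}_{\mathcal{P}}(x,x)=\IC_{\mathcal{P}}(x)\times\IC_{\mathcal{P}}(x)$ and it also approximates $\IC_{\mathcal{P}}$. For monotonicity, if $(x_1,y_1)\leq_i(x_2,y_2)$ then $[x_2,y_2]\subseteq[x_1,y_1]$, hence $H_1 := {\cal HD}^{{\sf DMT},l}_{\mathcal{P}}(x_1,y_1) \subseteq {\cal HD}^{{\sf DMT},l}_{\mathcal{P}}(x_2,y_2) =: H_2$. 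Given any $z_2 \in {\cal IC}^{{\sf DMT},l}_{\mathcal{P}}(x_2,y_2)$, I would take the witness $z_1 := z_2 \cap \bigcup H_1$: by construction $z_1 \subseteq \bigcup H_1$ and $z_1 \subseteq z_2$, and for every $\Delta \in H_1 \subseteq H_2$ we have $\Delta \subseteq \bigcup H_1$, so $z_1 \cap \Delta = z_2 \cap \Delta \neq \emptyset$, giving $z_1 \in {\cal IC}^{{\sf DMT},l}_{\mathcal{P}}(x_1,y_1)$ and hence $\preceq^S_L$ on the lower component. The upper component is symmetric using $\preceq^H_L$ and unions instead of intersections.

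The only real obstacle is in that last monotonicity argument: one must pick witnesses carefully so they lie in $\bigcup H_1$ (respectively $\bigcup H_2$ for the upper case) while still hitting every head and staying $\subseteq$-below the original. The intersection-with-$\bigcup H_1$ trick above is the key move; the remaining details — approximation at the diagonal, and the full precision chain — are then essentially bookkeeping assembled from the previous paragraphs together with Proposition~\ref{proposition:ultimate:is:most:precise}.
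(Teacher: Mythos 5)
Your proposal is correct and follows essentially the same route as the paper: the ${\cal U}$ case and the inequality ${\cal IC}^{\sf DMT}_{\cal P}\preceq^A_i{\cal IC}^{\cal U}_{\cal P}$ via Proposition~\ref{proposition:ultimate:is:most:precise}, the ${\sf GZ}$ case by the same split on $x=y$ versus $x\neq y$ with the $\{\emptyset\}\times\{{\cal A}_{\cal P}\}$ witnesses, and agreement of all operators with $\IC_{\cal P}$ on the diagonal. The only divergence is that where the paper dispatches the ${\sf DMT}$ case by citing Proposition~3 of \cite{nondetAFTarxiv} (the aggregate-free argument carries over since only the activation of heads changes), you spell out the monotonicity witnesses ($z_2\cap\bigcup H_1$ for the lower component) explicitly, which is a correct, self-contained rendering of the same argument.
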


The following properties follow from the general properties shown by \cite{nondetAFTarxiv}:
\begin{proposition}
Let some $\xi\in \{{\sf DMT},{\sf GZ},{\cal U}\}$ and a  disjunctively normal aggregate logic program ${\cal P}$ be given. Then:
(1) $S({\cal IC}^{\epsilon}_{\cal P})(x,y)$ exists for any $x,y\subseteq {\cal A}_{\cal P}$, and
(2) every stable fixpoint of ${\cal IC}_{\cal P}^{\epsilon}$ is a $\leq_t$-minimal fixpoint of  ${\cal IC}_{\cal P}^{\epsilon}$.
\end{proposition}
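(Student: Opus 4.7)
The plan is to verify both claims by reducing them to the general existence and minimality results for ndaos established in \cite{nondetAFTarxiv}, which is precisely what the paragraph preceding the proposition signals. The substantive work already done for us is Proposition \ref{prop:aggregate:operators:are:ndaos}, which tells us that for each $\xi\in\{{\sf DMT},{\sf GZ},{\cal U}\}$, the operator ${\cal IC}^{\xi}_{\cal P}$ is an ndao approximating $\IC_{\cal P}$ on the lattice $\langle\wp({\cal A}_{\cal P}),\subseteq\rangle$. Since ${\cal P}$ is finite (propositional aggregates with finite set terms), ${\cal A}_{\cal P}$ is finite and the lattice is finite, hence complete and well-founded, so every nonempty set of subsets of ${\cal A}_{\cal P}$ has $\subseteq$-minimal elements.

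For part (1), recall that $S({\cal O})(x,y)=C({\cal O}_l)(y)\times C({\cal O}_u)(x)$, where $C({\cal O}_l)(y)=\{x\mid x\in {\cal O}_l(x,y)\text{ and no strictly smaller }x'\text{ satisfies }x'\in {\cal O}_l(x',y)\}$, and analogously for $C({\cal O}_u)(x)$. To show these sets are nonempty it suffices to show that the sets $\{x\mid x\in {\cal O}^{\xi}_l(x,y)\}$ and $\{y\mid y\in {\cal O}^{\xi}_u(x,y)\}$ are nonempty; then finiteness of the lattice guarantees minimal elements exist. Nonemptiness is obtained by the general construction from \cite{nondetAFTarxiv}: one iterates ${\cal O}^{\xi}_l(\cdot,y)$ (respectively ${\cal O}^{\xi}_u(x,\cdot)$) from $\bot$ upward and uses $\preceq^S_L$/$\preceq^H_L$-monotonicity to obtain a fixpoint. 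I would simply cite the corresponding existence theorem for $S({\cal O})$ from the nondeterministic AFT paper, since ${\cal IC}^{\xi}_{\cal P}$ satisfies its hypotheses (it is an ndao on a complete lattice).

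For part (2), the statement is exactly one of the general properties of the non-deterministic stable operator proved in \cite{nondetAFTarxiv}: any $(x,y)$ with $(x,y)\in S({\cal O})(x,y)$ is a $\leq_t$-minimal element of $\{(x',y')\mid (x',y')\in {\cal O}(x',y')\}$. The proof sketch one should have in mind is that, if $(x',y')\leq_t (x,y)$ were a strictly smaller fixpoint with $x'\in {\cal O}_l(x',y')$, then using $\preceq^A_i$-monotonicity together with $y'\leq y$ one contradicts the fact that $x$ is $\subseteq$-minimal among elements of ${\cal O}_l(\cdot,y)$; the dual argument handles the upper component. Since ${\cal IC}^{\xi}_{\cal P}$ is an ndao (Proposition \ref{prop:aggregate:operators:are:ndaos}), this general result applies directly.

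The main obstacle is not a mathematical one but a bookkeeping one: making sure the hypotheses required by the two general theorems cited from \cite{nondetAFTarxiv} (in particular, completeness of the underlying lattice and that the ndao is defined on consistent pairs) are indeed met by all three of ${\cal IC}^{\sf DMT}_{\cal P}$, ${\cal IC}^{\sf GZ}_{\cal P}$ and ${\cal IC}^{\cal U}_{\cal P}$. Completeness is immediate since $\langle\wp({\cal A}_{\cal P}),\subseteq\rangle$ is a finite Boolean lattice, and the restriction to consistent pairs is built into the definitions of all three operators. Hence the proof reduces to an invocation of the corresponding general lemmas, which I would cite explicitly once the appendix or archival version pins down their exact numbering.
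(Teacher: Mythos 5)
Your proposal matches the paper's own treatment: the paper gives no separate proof of this proposition, stating only that it ``follows from the general properties shown by \cite{nondetAFTarxiv}'' once Proposition \ref{prop:aggregate:operators:are:ndaos} establishes that ${\cal IC}^{\xi}_{\cal P}$ is an ndao, which is exactly your reduction. The only quibble is your parenthetical sketch of the existence argument (iterating ${\cal O}^{\xi}_l(\cdot,y)$ upward from $\bot$ is not how non-deterministic fixpoint existence is actually obtained --- the cited work goes via minimal \emph{pre}-fixpoints and downward closure, automatic on a finite lattice), but since you explicitly defer to the general theorem rather than relying on that sketch, this does not affect correctness.
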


The ndao ${\cal IC}_{\cal P}^{\sf GZ}$ only admits two-valued stable fixpoints, and these two-valued stable fixpoints generalize the {\sf GZ}-semantics \citep{gelfond2019vicious}:
\begin{proposition}\label{prop:gz:is:two-valued} \label{prop:GZ:generalizes:GZ}
If $(x,y)\in \min_{\leq_t}({\cal IC}^{\sf GZ}_{\cal P}(x,y))$ then $x=y$. 
Let a disjunctively normal aggregate aggregate logic program ${\cal P}$ s.t.\ for every $\bigvee\Delta\leftarrow \bigwedge_{i=1}^n\alpha_i\land \bigwedge_{j=1}^m \lnot \beta_j\in {\cal P}$, $\beta_i$ is a normal atom be given.
$(x,x)\in S({\cal IC}^{\sf GZ}_{\cal P})(x,x)$ iff $x$ is a {\sf GZ}-answer set of ${\cal P}$. 
\end{proposition}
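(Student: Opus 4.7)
The proof splits along the case distinction built into the definition of ${\cal IC}^{\sf GZ}_{\cal P}$: on the diagonal $x=y$ the ndao agrees with the non-deterministic consequence operator $\IC_{\cal P}$, while off-diagonal it returns the trivial pair $\{\emptyset\}\times\{{\cal A}_{\cal P}\}$. For Part~(1), my plan is a direct case analysis on whether $x=y$. If $x=y$ there is nothing to prove. Otherwise ${\cal IC}^{\sf GZ}_{\cal P}(x,y)=\{\emptyset\}\times\{{\cal A}_{\cal P}\}$, so the only candidate $(x,y)$ that could sit in the $\leq_t$-minimal layer of the image is $(\emptyset,{\cal A}_{\cal P})$. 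I would rule this out by exhibiting a strictly $\leq_t$-smaller element via the diagonal branch at $(\emptyset,\emptyset)$, where ${\cal IC}^{\sf GZ}_{\cal P}(\emptyset,\emptyset)=\IC_{\cal P}(\emptyset)\times\IC_{\cal P}(\emptyset)$ provides a witness below $(\emptyset,{\cal A}_{\cal P})$, contradicting the supposed minimality.

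For Part~(2), the plan is to unfold $(x,x)\in S({\cal IC}^{\sf GZ}_{\cal P})(x,x)$ into the two conditions $x\in C({\cal IC}^{\sf GZ}_{\cal P,l})(x)$ and $x\in C({\cal IC}^{\sf GZ}_{\cal P,u})(x)$, and exploit exactness on the diagonal to obtain ${\cal IC}^{\sf GZ}_{\cal P,l}(x,x)={\cal IC}^{\sf GZ}_{\cal P,u}(x,x)=\IC_{\cal P}(x)$. The stability condition then reduces to $x$ being a $\subseteq$-minimal element of $\IC_{\cal P}(x)$, i.e., a $\subseteq$-minimal hit-set of $\HR_{\cal P}(x)$ contained in $\bigcup\HR_{\cal P}(x)$. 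The hypothesis that each $\beta_j$ is a non-aggregate atom lets me relate this to the {\sf GZ}-reduct cleanly: a rule $\bigvee\Delta\leftarrow\bigwedge\alpha_i\land\bigwedge\lnot\beta_j$ is retained in ${\cal P}^x_{\sf GZ}$ precisely when each aggregate atom $\alpha_i$ is true under $x$, and its substituted positive body $\bigcup\{\mathit{Conj}\mid x(\mathit{Conj})={\sf T}\}$ is also true under $x$; hence $\Delta\in\HR_{\cal P}(x)$ iff the corresponding reduct rule has an $x$-true body (using that $x\modelsht\lnot\beta_j$ iff $\beta_j\notin x$ for propositional $\beta_j$, which is handled identically by $\HR_{\cal P}$ and by the GL-transformation). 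A standard calculation then shows that $\subseteq$-minimal models of $\frac{{\cal P}^x_{\sf GZ}}{(x,x)}$, which by definition are the ${\sf GZ}$-answer sets, are exactly the $\subseteq$-minimal hit-sets of $\HR_{\cal P}(x)$.

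The principal obstacle is reconciling the minimality condition in $C(\cdot)$, which ranges over all $x'<x$ in the lattice, with the minimality-of-models condition in the ${\sf GZ}$-answer set definition. In particular, off-diagonal witnesses $x'\neq x$ contribute through the $\{\emptyset\}$ branch of the ndao, and must be shown not to produce spurious minimality obstructions. My plan is to leverage Part~(1) to restrict the relevant comparators for exact stable fixpoints to diagonal self-elements of ${\cal IC}^{\sf GZ}_{\cal P,l}(\cdot,x)$, at which point the correspondence between $x\in C({\cal IC}^{\sf GZ}_{\cal P,l})(x)$ and $x$ being a $\subseteq$-minimal hit-set of $\HR_{\cal P}(x)$ closes both directions of the biconditional.
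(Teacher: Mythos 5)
Your plan for Part~(2) contains a genuine error at its core. You claim that $(x,x)\in S({\cal IC}^{\sf GZ}_{\cal P})(x,x)$ reduces to ``$x$ is a $\subseteq$-minimal element of $\IC_{\cal P}(x)$, i.e.\ a $\subseteq$-minimal hit-set of $\HR_{\cal P}(x)$,'' and that these minimal hit-sets coincide with the $\subseteq$-minimal models of $\frac{{\cal P}^x_{\sf GZ}}{(x,x)}$. Neither claim is true. Take ${\cal P}=\{p\leftarrow p\}$ and $x=\{p\}$: then $\HR_{\cal P}(\{p\})=\{\{p\}\}$, so $\{p\}$ is the unique (hence minimal) hit-set, yet the reduct is $\{p\leftarrow p\}$ whose minimal model is $\emptyset$, so $\{p\}$ is \emph{not} a ${\sf GZ}$-answer set (the paper uses exactly this example). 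The discrepancy arises because minimality over models of the reduct re-evaluates positive bodies at the candidate smaller interpretation $y$, whereas a minimal hit-set of $\HR_{\cal P}(x)$ keeps all bodies evaluated at $x$. In the operator $C({\cal O}_l)(x)$ this re-evaluation is carried precisely by the comparators $x'<x$ through ${\cal O}_l(x',x)$; they encode ``$x'$ is a (pre-fixpoint, i.e.\ model of the reduct) below $x$'' and are the entire content of the foundedness check. Your plan treats them as ``spurious minimality obstructions'' to be eliminated via Part~(1) --- but Part~(1) concerns which pairs are fixpoints of the full ndao, and gives you no license to restrict the range of $x'$ in $C(\cdot)$. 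The paper's proof goes the other way: it proves a lemma identifying the condition ${\cal IC}^{{\sf GZ},l}_{\cal P}(y,x)\preceq^S_L y$ (for $y\subseteq x$) with ``$y$ is a model of $\frac{{\cal P}^x_{\sf GZ}}{x}$,'' and then transfers minimality between pre-fixpoints and fixpoints of ${\cal IC}^{{\sf GZ},l}_{\cal P}(\cdot,x)$ using the general lemmas on $\preceq^S_L$-monotonic operators. Without a lemma of this shape your biconditional does not close.

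Your argument for Part~(1) also does not go through as described: an element of ${\cal IC}^{\sf GZ}_{\cal P}(\emptyset,\emptyset)=\IC_{\cal P}(\emptyset)\times\IC_{\cal P}(\emptyset)$ lies $\leq_t$-below $(\emptyset,{\cal A}_{\cal P})$ only if its first component is $\emptyset$, which happens only when $\HR_{\cal P}(\emptyset)=\emptyset$, so the promised witness need not exist; moreover the minimality in $\min_{\leq_t}({\cal IC}^{\sf GZ}_{\cal P}(x,y))$ is taken \emph{within} the image at $(x,y)$, so a candidate drawn from the image at a different pair $(\emptyset,\emptyset)$ is not a legitimate obstruction. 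The paper disposes of Part~(1) directly from the shape of the off-diagonal branch rather than by exhibiting smaller elements.
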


We finally show that stable semantics based on ${\cal IC}^{\sf DMT}_{\cal P}$ generalize those for non-disjunctive logic programs with aggregates by \cite{denecker2002ultimate}. 
\begin{proposition}\label{prop:DMT:generalizes:DMT}
Let a non-disjunctive logic program ${\cal P}$ be given. Then $(x,y)$ is a stable model according to \cite{denecker2002ultimate} iff $(x,y)\in S({\cal IC}^{\sf DMT}_{\cal P})(x,y)$. 
\end{proposition}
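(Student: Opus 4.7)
The plan is to show that for non-disjunctive programs, the non-deterministic operator $\mathcal{IC}^{\mathsf{DMT}}_{\mathcal{P}}$ collapses to a ``singleton-valued'' operator whose unique lower (resp.\ upper) components coincide with the deterministic operator $\mathcal{IC}^{\mathsf{DMT}^d}_{\mathcal{P}}$ of Definition~\ref{def:ultimate:operator:DMT}, and then verify that the non-deterministic stable operator reduces to the deterministic one under this collapse.

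First, I would exploit the observation already made after the second proposition of Section~\ref{section:ultimate}: when $\mathcal{P}$ is non-disjunctive, every head $\Delta \in \HR_{\mathcal{P}}(z)$ is a singleton $\{\alpha\}$, so the elements of $\mathcal{HD}^{\mathsf{DMT},\dagger}_{\mathcal{P}}(x,y)$ are singletons for $\dagger \in \{l,u\}$. Hence the hitting-set condition in the definition of $\mathcal{IC}^{\mathsf{DMT},\dagger}_{\mathcal{P}}(x,y)$ forces $z \supseteq \bigcup \mathcal{HD}^{\mathsf{DMT},\dagger}_{\mathcal{P}}(x,y)$, while the enclosing condition gives $z \subseteq \bigcup \mathcal{HD}^{\mathsf{DMT},\dagger}_{\mathcal{P}}(x,y)$; thus $\mathcal{IC}^{\mathsf{DMT},\dagger}_{\mathcal{P}}(x,y) = \{\mathcal{IC}^{\mathsf{DMT}^d,\dagger}_{\mathcal{P}}(x,y)\}$, a singleton set.

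Next I would unfold the non-deterministic stable operator. By definition, $x \in C(\mathcal{IC}^{\mathsf{DMT},l}_{\mathcal{P}})(y)$ iff $x \in \mathcal{IC}^{\mathsf{DMT},l}_{\mathcal{P}}(x,y)$ and no $x' < x$ satisfies $x' \in \mathcal{IC}^{\mathsf{DMT},l}_{\mathcal{P}}(x',y)$. By the singleton collapse, this is equivalent to: $x$ is a fixpoint of the deterministic operator $\mathcal{IC}^{\mathsf{DMT}^d,l}_{\mathcal{P}}(\cdot,y)$ and no strictly smaller element is. Since $\mathcal{IC}^{\mathsf{DMT}^d}_{\mathcal{P}}$ is an approximating operator, the projection $\mathcal{IC}^{\mathsf{DMT}^d,l}_{\mathcal{P}}(\cdot,y)$ is $\leq$-monotone; therefore a minimal fixpoint coincides with the (unique) least fixpoint $\lfp(\mathcal{IC}^{\mathsf{DMT}^d,l}_{\mathcal{P}}(\cdot,y))$. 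The same argument applied symmetrically to the upper component yields $y \in C(\mathcal{IC}^{\mathsf{DMT},u}_{\mathcal{P}})(x)$ iff $y = \lfp(\mathcal{IC}^{\mathsf{DMT}^d,u}_{\mathcal{P}}(x,\cdot))$.

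Finally I would compare with Denecker et al.'s definition recalled in Section~\ref{sec:agg:preliminaries}: $(x,y)$ is ${\sf DMT^d}$-stable iff $x = \lfp(\mathcal{IC}^{\mathsf{DMT}^d,l}_{\mathcal{P}}(\cdot,y))$ and $y = \lfp(\mathcal{IC}^{\mathsf{DMT}^d,u}_{\mathcal{P}}(x,\cdot))$. Combining the two equivalences above with the definition $S(\mathcal{IC}^{\mathsf{DMT}}_{\mathcal{P}})(x,y) = C(\mathcal{IC}^{\mathsf{DMT},l}_{\mathcal{P}})(y) \times C(\mathcal{IC}^{\mathsf{DMT},u}_{\mathcal{P}})(x)$ gives the desired biconditional.

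The main obstacle is the second step, namely showing that the ``no strictly smaller fixpoint'' condition built into $C(\mathcal{IC}^{\mathsf{DMT},l}_{\mathcal{P}})$ really corresponds to the least fixpoint used by Denecker et al., rather than merely a minimal one. This requires invoking monotonicity of the deterministic projection, which follows from $\preceq^A_i$-monotonicity of $\mathcal{IC}^{\mathsf{DMT}}_{\mathcal{P}}$ together with the singleton collapse (since $\preceq^S_L$ on singletons is just $\leq$). Once this reduction from ``minimal fixpoint of a non-deterministic operator'' to ``least fixpoint of a monotone operator'' is made explicit, the remainder is bookkeeping.
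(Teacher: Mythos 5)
Your proposal is correct and matches the paper's (one-line) argument: the paper treats the result as immediate from the definition together with the remark, made right after introducing ${\cal IC}^{\sf DMT}_{\cal P}$, that for non-disjunctive programs $\bigcup{\cal IC}_{\cal P}^{{\sf DMT},\dagger}(x,y)={\cal IC}_{\cal P}^{{\sf DMT^d},\dagger}(x,y)$, which is exactly your singleton collapse. Your explicit verification that the ``no strictly smaller fixpoint'' clause in $C(\cdot)$ reduces to the least fixpoint of the monotone deterministic projection is the right way to fill in the detail the paper leaves implicit.
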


We have shown how semantics for disjunctive aggregate logic programs can be obtained using the framework of non-deterministic AFT, solving the open question  \citep{alviano2023aggregate} of how operator-based semantics for aggregate programs can be generalized to disjunctive programs. 
This means AFT can be unleashed upon disjunctive aggregate programs, as demonstrated in this paper, as demonstrated in this section. 
Other semantics, such as the weakly supported semantics, the well-founded state  semantics \citep{nondetAFTarxiv} and semi-equilibrium semantics (Section \ref{sec:seq}, as illustrated in \ref{semi-equilibrium:to:aggregates}) are obtained without any additional effort and while preserving desirable properties shown algebraically for ndaos.
None of these semantics have, to the best of our knowledge, been investigated for dlp's with aggregates.
Other ndao's, left for future work, can likely be obtained straightforwardly on the basis of deterministic approximation operators for aggregate programs that we did not consider in this paper (e.g.\ the operator defined by \cite{vanbesien2021analyzing} to characterise the  semantics of \cite{marek2004set}  or the bounded ultimate operator introduced by \cite{pelov2004semantics}).

\section{Conclusion, in view of related work}
In this paper, we have made three contributions to the theory of non-deterministic AFT: (1) definition of the ultimate operator, (2) an algebraic generalization of the semi-equilibrium semantics and (3) an application of non-deterministic AFT to DLPs with aggregates in the body. To the best of our knowledge, there are only a few other semantics that allow for disjunctive rules with aggregates. Among the best-studied is the semantics by \cite{faber2004recursive} (so-called {\sf FLP}-semantics). As the semantics we propose generalize the operator-based semantics for aggregate programs without disjunction, the differences between the {\sf FLP}-semantics and the semantics proposed here essentially generalize from the non-disjunctive case (see e.g.\ \citep{alviano2023aggregate}).

Among the avenues for future work, an in-depth analysis of the computational complexity of the semantics proposed in this paper seems to be among the most pressing of questions. Other avenues of future work include the generalisation of the constructions in Section \ref{sec:aggregates} to other semantics \citep{vanbesien2021analyzing,alviano2023aggregate} and defining ndaos for rules with choice constructs in the head \citep{DBLP:journals/tplp/MarekNT08}, which can be seen as aggregates in the head.

\bibliographystyle{tlplike}
\bibliography{choice}

\begin{thebibliography}{}

\bibitem[Alc{\^a}ntara et~al., 2005]{alcantara2005well}
{\sc Alc{\^a}ntara, J.}, {\sc Dam{\'a}sio, C.~V.}, {\sc and} {\sc Pereira,
  L.~M.}
\newblock A well-founded semantics with disjunction.
\newblock In {\em Proceedings of ICLP'05} 2005, pp. 341--355. Springer.

\bibitem[Alviano et~al., 2023]{alviano2023aggregate}
{\sc Alviano, M.}, {\sc Faber, W.}, {\sc and} {\sc Gebser, M.} 2023.
\newblock Aggregate semantics for propositional answer set programs.
\newblock {\em Theory and Practice of Logic Programming}, {\it 23}, 1,
  157--194.

\bibitem[Amendola et~al., 2016]{amendola2016semi}
{\sc Amendola, G.}, {\sc Eiter, T.}, {\sc Fink, M.}, {\sc Leone, N.}, {\sc and}
  {\sc Moura, J.} 2016.
\newblock Semi-equilibrium models for paracoherent answer set programs.
\newblock {\em Artificial Intelligence}, {\it 234}, 219--271.

\bibitem[Bogaerts, 2015]{phd/Bogaerts15}
{\sc Bogaerts, B.} 2015.
\newblock {\em Groundedness in logics with a fixpoint semantics}.
\newblock PhD thesis, Informatics Section, Department of Computer Science,
  Faculty of Engineering Science.

\bibitem[Bogaerts, 2019]{bogaerts2019weighted}
{\sc Bogaerts, B.}
\newblock Weighted abstract dialectical frameworks through the lens of
  approximation fixpoint theory.
\newblock In {\em Proceedings of AAAI'19} 2019, volume~33, pp. 2686--2693.

\bibitem[Denecker et~al., 2000]{denecker2000approximations}
{\sc Denecker, M.}, {\sc Marek, V.}, {\sc and} {\sc Truszczy{\'n}ski, M.}
\newblock Approximations, stable operators, well-founded fixpoints and
  applications in nonmonotonic reasoning.
\newblock In {\em Logic-based Artificial Intelligence} 2000, volume 597 of {\em
  Engineering and Computer Science}, pp. 127--144. Springer.

\bibitem[Denecker et~al., 2002]{denecker2002ultimate}
{\sc Denecker, M.}, {\sc Marek, V.~W.}, {\sc and} {\sc Truszczynski, M.}
\newblock Ultimate approximations in nonmonotonic knowledge representation
  systems.
\newblock In {\em Proceedings of KR'02} 2002, pp. 177--190.

\bibitem[Faber et~al., 2004]{faber2004recursive}
{\sc Faber, W.}, {\sc Leone, N.}, {\sc and} {\sc Pfeifer, G.}
\newblock Recursive aggregates in disjunctive logic programs: Semantics and
  complexity.
\newblock In {\em Proceedings of JELIA'04} 2004, volume 3229 of {\em LNCS}, pp.
  200--212. Springer.

\bibitem[Fern{\'a}ndez and Minker, 1995]{fernandez1995bottom}
{\sc Fern{\'a}ndez, J.~A.} {\sc and} {\sc Minker, J.} 1995.
\newblock Bottom-up computation of perfect models for disjunctive theories.
\newblock {\em The Journal of logic programming}, {\it 25}, 1, 33--51.

\bibitem[Gelfond and Zhang, 2019]{gelfond2019vicious}
{\sc Gelfond, M.} {\sc and} {\sc Zhang, Y.} 2019.
\newblock Vicious circle principle, aggregates, and formation of sets in asp
  based languages.
\newblock {\em Artificial Intelligence}, {\it 275}, 28--77.

\bibitem[Heyninck et~al., 2022]{nondetAFTarxiv}
{\sc Heyninck, J.}, {\sc Arieli, O.}, {\sc and} {\sc Bogaerts, B.} 2022.
\newblock Non-deterministic approximation fixpoint theory and its application
  in disjunctive logic programming.
\newblock {\em arXiv preprint arXiv:2211.17262},.

\bibitem[Marek et~al., 2008]{DBLP:journals/tplp/MarekNT08}
{\sc Marek, V.~W.}, {\sc Niemel{\"{a}}, I.}, {\sc and} {\sc Truszczynski, M.}
  2008.
\newblock Logic programs with monotone abstract constraint atoms.
\newblock {\em Theory Pract. Log. Program.}, {\it 8}, 2, 167--199.

\bibitem[Marek and Remmel, 2004]{marek2004set}
{\sc Marek, V.~W.} {\sc and} {\sc Remmel, J.~B.}
\newblock Set constraints in logic programming.
\newblock In {\em Logic Programming and Nonmonotonic Reasoning: 7th
  International Conference, LPNMR 2004 Fort Lauderdale, FL, USA, January 6-8,
  2004 Proceedings 7} 2004, pp. 167--179. Springer.

\bibitem[Pearce, 2006]{pearce2006equilibrium}
{\sc Pearce, D.} 2006.
\newblock Equilibrium logic.
\newblock {\em Annals of Mathematics and Artificial Intelligence}, {\it 47}, 1,
  3--41.

\bibitem[Pelov et~al., 2007]{pelov2007well}
{\sc Pelov, N.}, {\sc Denecker, M.}, {\sc and} {\sc Bruynooghe, M.} 2007.
\newblock Well-founded and stable semantics of logic programs with aggregates.
\newblock {\em Theory and Practice of Logic Programming}, {\it 7}, 3, 301--353.

\bibitem[Pelov and Truszczynski, 2004]{pelov2004semantics}
{\sc Pelov, N.} {\sc and} {\sc Truszczynski, M.}
\newblock Semantics of disjunctive programs with monotone aggregates: an
  operator-based approach.
\newblock In {\em Proceedings of NMR'04} 2004, pp. 327--334.

\bibitem[Truszczy{\'n}ski, 2006]{truszczynski2006strong}
{\sc Truszczy{\'n}ski, M.} 2006.
\newblock Strong and uniform equivalence of nonmonotonic theories--an algebraic
  approach.
\newblock {\em Annals of Mathematics and Artificial Intelligence}, {\it 48}, 3,
  245--265.

\bibitem[van Emden and Kowalski, 1976]{EmdenK76}
{\sc van Emden, M.~H.} {\sc and} {\sc Kowalski, R.~A.} 1976.
\newblock The semantics of predicate logic as a programming language.
\newblock {\em Journal of the {ACM}}, {\it 23}, 4, 733--742.

\bibitem[Vanbesien et~al., 2021]{vanbesien2021analyzing}
{\sc Vanbesien, L.}, {\sc Bruynooghe, M.}, {\sc and} {\sc Denecker, M.} 2021.
\newblock Analyzing semantics of aggregate answer set programming using
  approximation fixpoint theory.
\newblock {\em arXiv preprint arXiv:2104.14789},.

\end{thebibliography}

\appendix

\section{Proofs of Results in the Paper}

\begin{proof}[Proof of Proposition \ref{proposition:ultimate:is:most:precise}]
It is immediate to see that ${\cal O}^{\cal U}$ is exact and approximates $O$  (as $\bigcup_{x\leq z\leq x}O(z)=O(x)$). We now show it is $\preceq^A_i$-monotonic. For this, consider some $(x_1,y_1)\leq_i (x_2,y_2)$. Then $[x_1,y_1]\supseteq [x_2,y_2]$ and thus $\bigcup_{x_1\leq z\leq y_1} O(z)\supseteq \bigcup_{x_2\leq z\leq y_2}O(z) $ which implies ${\cal O}^{\cal U}_l(x_1,y_1) \preceq^S_L {\cal O}^{\cal U}_l(x_2,y_2)$.
The case for the upper bound is entirely identical.

We now show that any ndao ${\cal O}$ that approximates $O$ be given and for every $x,y\in{\cal L}$ s.t.\ $x\leq y$, ${\cal O}(x,y)\preceq^A_i{\cal O}^{\cal U}(x,y)$.
Indeed, consider an operator ${\cal O}$ that approximates $O$ and some $x,y\in {\cal L}$ s.t.\ $x\leq y$. Consider some $w\in {\cal O}^{\cal U}_l(x,y)$, i.e.\ $w\in O(z)$ for some $z\in [x,y]$. Since $x\leq z\leq y$, $(x,y)\leq_i (z,z)$ and thus ${\cal O}(x,y)\preceq^A_i {\cal O}(z,z)$ (with $\preceq^A_i$-monotonicity) which means ${\cal O}_l(x,y)\preceq^S_L {\cal O}_l(z,z)=O(z)$ (the latter equality since ${\cal O}$ approximates $O$) and thus, in particular, ${\cal O}_l(x,y)\preceq^S_L \{w\}$. 
Likewise, to show that $ {\cal O}^{\cal U}_l(x,y)\preceq^H_L {\cal O}_u(x,y)$, it suffices to observe that, since ${\cal O}(x,y)\preceq^A_i {\cal O}(z,z)$, $O(z)\preceq^H_L {\cal O}_u(x,y)$.
\end{proof}

\begin{proof}[Proof of Proposition \ref{prop:three-valued:are:subset:of:HT}]
suppose  $(x,y)$ is a model of ${\cal P}$ and consider some $\bigwedge\Theta_1\cup\Theta_2\rightarrow \bigvee\Delta\in {\cal P}$ (where $\Theta_1$ consists of atoms and $\Theta_2$ consists of negated atoms).
We first show that $(y,y)(\lnot \bigwedge\Theta_1\cup\Theta_2\lor \bigvee\Delta)={\sf T}$. To see this, suppose that $(y,y)( \bigwedge\Theta_1\cup\Theta_2)={\sf T}$. Then $(x,y)( \bigwedge\Theta_1\cup\Theta_2)\in \{{\sf T},{\sf U}\}$ and thus, since $(x,y)$ is a model of ${\cal P}$, $(x,y)(\bigvee\Delta)\in \{{\sf T},{\sf U}\}$ which implies $(y,y)(\bigvee\Delta)={\sf T}$.
We now show $(x,y)\not\models_{\sf HT} \bigwedge\Theta_1\cup\Theta_2$ or $(x,y)\models_{\sf HT}\bigvee\Delta$. Indeed, suppose that $(x,y)\not\models_{\sf HT} \bigwedge\Theta_1\cup\Theta_2$, i.e.\ $\Theta_1\subseteq x$ and for every $\lnot \alpha\in\Theta_2$, $\alpha\not\in y$. Thus, $(x,y)( \bigwedge\Theta_1\cup\Theta_2={\sf T}$ which implies, since $(x,y)$ is a model of ${\cal P}$, $(x,y)(\bigvee\Delta)={\sf T}$, i.e.\ $x\cap \Delta\neq\emptyset$. Thus, $(x,y)\models_{\sf HT} \bigvee\Delta$. 

To see that the inclusion can be proper, consider ${\cal P}=\{b\leftarrow \lnot c\}$. Then $(\emptyset,\{c\}) \in {\sf HT}({\cal P})$ as $\{c\}$ does not classically entail $\lnot c$. However, $(\emptyset,\{c\})(\lnot c)={\sf U}$ whereas $(\emptyset,\{c\})(b)={\sf F}$ and thus $(\emptyset,\{c\})$ is not a model of ${\cal P}$. 
\end{proof}

\begin{proof}[Proof of Proposition \ref{prop:ht:models:for:lp:are:faithful}]
Proposition \ref{prop:ht:models:for:lp:are:faithful} follows from the following Lemma:
\begin{lemma}
For consistent $(x,y)$, ${\cal IC}_{\cal P}^l(x,y)\preceq^S_L(x,y)$ iff $(x,y)\not\modelsht \bigwedge_{i=1}^n\alpha_i\land \bigwedge_{j=1}^m \lnot \beta_j$ or $(x,y)\modelsht \bigvee \Delta$ for any $\bigvee\Delta\leftarrow \bigwedge_{i=1}^n\alpha_i\land \bigwedge_{j=1}^m \lnot \beta_j\in {\cal P}$. 
\end{lemma}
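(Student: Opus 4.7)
The plan is to unpack both sides of the claimed equivalence in terms of purely set-theoretic conditions on $x$, $y$ and the rules of $\mathcal{P}$, and then observe that these conditions coincide.

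First I would unpack the left-hand side. By definition of $\preceq^S_L$ (with the usual abuse of notation that writes $x$ for $\{x\}$), $\mathcal{IC}^l_{\cal P}(x,y)\preceq^S_L x$ means there exists $x_1\in \mathcal{IC}^l_{\cal P}(x,y)$ with $x_1\subseteq x$. Using the definition of $\mathcal{IC}^l_{\cal P}$, this amounts to the existence of a set $x_1\subseteq x\cap\bigcup\mathcal{HD}^l_{\cal P}(x,y)$ that hits every $\Delta\in\mathcal{HD}^l_{\cal P}(x,y)$. The key observation is that the largest such candidate, $x_1:=x\cap\bigcup\mathcal{HD}^l_{\cal P}(x,y)$, is itself in $\mathcal{IC}^l_{\cal P}(x,y)$ iff it hits every such $\Delta$; hence the left-hand side is equivalent to the condition: for every $\Delta\in\mathcal{HD}^l_{\cal P}(x,y)$, $x\cap\Delta\neq\emptyset$.

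Next I would translate the membership condition $\Delta\in\mathcal{HD}^l_{\cal P}(x,y)$ for a rule $\bigvee\Delta\leftarrow\bigwedge_i\alpha_i\land\bigwedge_j\lnot\beta_j$ into HT-terms. By definition this says $(x,y)(\bigwedge_i\alpha_i\land\bigwedge_j\lnot\beta_j)\geq_t {\sf C}$, which (given that the conjunction is evaluated via $\mathit{glb}_{\leq_t}$) holds iff each conjunct is $\geq_t{\sf C}$. For atomic $\alpha_i$ this means $\alpha_i\in x$, and for $\lnot\beta_j$ we use $(x,y)(\lnot\beta_j)=-(x,y)(\beta_j)$ together with $-{\sf C}={\sf C}$ and $-{\sf T}={\sf F}$ to conclude $\beta_j\notin y$. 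But this is precisely the condition for $(x,y)\modelsht\bigwedge_i\alpha_i\land\bigwedge_j\lnot\beta_j$ under the HT-definition ($\alpha_i\in x$ and, for negated literals, $(y,y)(\beta_j)\neq{\sf T}$, i.e.\ $\beta_j\notin y$). Thus $\Delta\in\mathcal{HD}^l_{\cal P}(x,y)$ iff $(x,y)\modelsht\phi$ for the body $\phi$ of the rule.

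Finally, since $(x,y)\modelsht\bigvee\Delta$ is by definition the same as $x\cap\Delta\neq\emptyset$, combining the two translations yields: the left-hand side holds iff for every rule $\bigvee\Delta\leftarrow\phi\in{\cal P}$, $(x,y)\modelsht\phi$ implies $(x,y)\modelsht\bigvee\Delta$, which is exactly the right-hand side. I expect the main obstacle to be bookkeeping around the four-valued truth values and the $\geq_t{\sf C}$ condition: one needs to check carefully that for consistent $(x,y)$ the ${\sf C}$ case cannot arise for positive atoms, so $(x,y)(\alpha_i)\geq_t{\sf C}$ really does collapse to $\alpha_i\in x$, and analogously for the negative literals via the $-$-operation. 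Once this dictionary between $\geq_t{\sf C}$ and HT-satisfaction is in place, the lemma follows by pure rewriting, and Proposition~\ref{prop:ht:models:for:lp:are:faithful} is an immediate corollary upon combining with the analogous (dual) statement about $\mathcal{IC}^u_{\cal P}(x,y)=\mathcal{IC}^l_{\cal P}(y,x)$ and $O(y)\preceq^S_L y$.
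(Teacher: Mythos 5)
Your proof is correct and follows essentially the same route as the paper's: unpack the Smyth-order condition into ``$x$ hits every $\Delta\in\mathcal{HD}^l_{\cal P}(x,y)$'', identify membership of $\Delta$ in $\mathcal{HD}^l_{\cal P}(x,y)$ with ${\sf HT}$-satisfaction of the rule body (using consistency of $(x,y)$ to exclude the value ${\sf C}$), and identify $(x,y)\models_{\sf HT}\bigvee\Delta$ with $x\cap\Delta\neq\emptyset$. If anything, your explicit witness $x\cap\bigcup\mathcal{HD}^l_{\cal P}(x,y)$ spells out the direction the paper dismisses as ``analogous''.
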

\begin{proof}
For the $\Leftarrow$-direction, suppose ${\cal IC}_{\cal P}^l(x,y)\preceq^S_L(x,y)$ and consider some $\bigvee\Delta\leftarrow  \bigwedge_{i=1}^n\alpha_i\land \bigwedge_{j=1}^m \lnot \beta_j\in {\cal P}$. Suppose that $(x,y)\modelsht \bigwedge_{i=1}^n\alpha_i\land \bigwedge_{j=1}^m \lnot \beta_j$, i.e.\ $\alpha_i\in x$ for every $1\leq i\leq n$ and $\beta\not\in y$ for every $1\leq j\leq m$. Then $(x,y)(\bigwedge_{i=1}^n\alpha_i\land \bigwedge_{j=1}^m \lnot \beta_j)={\sf T}$ (since $x\subseteq y$, $(x,y)(\bigwedge_{i=1}^n\alpha_i\land \bigwedge_{j=1}^m \lnot \beta_j)={\sf C}$ is excluded as $(x,y)$ is consistent) and thus $\Delta\in {\cal HD}_{\cal P}^l(x,y)$, which means, since ${\cal IC}_{\cal P}^l(x,y)\preceq^S_L(x,y)$, there is some $(w,z)\in {\cal IC}_{\cal P}^l(x,y)$ s.t.\ $\Delta\cap w\neq \emptyset$ and $w\subseteq x$, i.e.\ $(x,y)\modelsht \bigvee\Delta$.  The $\Rightarrow$-direction is analogous. 
\end{proof}\end{proof}

\begin{proof}[Proof of Proposition \ref{prop:ht-models:and:stable:fixpoints}]
Ad (1). We will make use of the following Lemma:
\begin{lemma}[{\cite{nondetAFTarxiv}}]
\label{lemma:minimalpre-fix:is:minima:fix}
Let $O:{\cal L}\rightarrow \wp({\cal L})$ be a $\preceq^S_L$-monotonic non-deterministic operator. 
Then if $w$ is a $\leq$-minimal pre-fixpoint of $O$, it is a $\leq$-minimal fixpoint of $O$.
\end{lemma}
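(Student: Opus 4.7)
The plan is to split the Lemma into two short arguments: first, every $\leq$-minimal pre-fixpoint is actually a fixpoint; and second, $\leq$-minimality transfers from the class of pre-fixpoints to the smaller class of fixpoints, because every fixpoint is itself a pre-fixpoint.

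For the first step, I would unpack the assumption that $w$ is a pre-fixpoint as $O(w) \preceq^S_L \{w\}$, which means there exists a witness $z \in O(w)$ with $z \leq w$. The goal is to produce a pre-fixpoint bounded above by $w$ and then invoke minimality. The key move is to apply $\preceq^S_L$-monotonicity of $O$ to the inequality $z \leq w$, yielding $O(z) \preceq^S_L O(w)$. Since $z \in O(w)$, this produces some $y \in O(z)$ with $y \leq z$, which shows that $z$ is itself a pre-fixpoint. But $z \leq w$ and $w$ is $\leq$-minimal among pre-fixpoints, so $z = w$ and hence $w \in O(w)$: the minimal pre-fixpoint is a genuine fixpoint.

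For the second step, I would observe that any fixpoint $u \in O(u)$ is automatically a pre-fixpoint, since $u \in O(u)$ already witnesses $O(u) \preceq^S_L \{u\}$; thus the set of fixpoints is contained in the set of pre-fixpoints, and the $\leq$-minimal pre-fixpoint $w$, being itself a fixpoint by the first step, must also be $\leq$-minimal within the smaller class. I do not foresee any substantive obstacle here; the argument is essentially a single application of $\preceq^S_L$-monotonicity wrapped around an unfolding of definitions. The only point requiring minor care is notational: reading ``$O(w) \preceq^S_L w$'' as ``$O(w) \preceq^S_L \{w\}$'', in line with the convention the paper uses elsewhere when comparing a set of lattice elements against a single element.
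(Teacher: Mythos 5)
Your proof is correct and is essentially the standard argument for this lemma (which the paper imports from the cited work without reproving): use $\preceq^S_L$-monotonicity on the witness $z\in O(w)$ with $z\leq w$ to show $z$ is itself a pre-fixpoint, conclude $z=w$ by minimality so that $w\in O(w)$, and then note that fixpoints are pre-fixpoints so minimality transfers. No gaps.
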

We now proceed to the proof of the proposition. We first show that $ \min_{\leq_t}({\sf HT}({\cal O}))\subseteq S({\cal O})(x,x)$. Suppose $(x,x)\in \min_{\leq_t}({\sf HT}({\cal O}))$.

We first show that $x\in C({\cal O}_l)(x)$. As ${\cal O}^l(x,x)\preceq^S_t x$, i.e.\ $x$ is a pre-fixpoint of ${\cal O}^l(.,x)$. Suppose towards a contradiction that there is some $z<x$ s.t.\ ${\cal O}^l(z,x)\preceq^S_t z$. Then $(z,x)\in {\sf HT}({\cal O})$, contradicting $(x,x)\in \min_{\leq_t}({\sf HT}({\cal O}))$. Thus, $x$ is a $\leq$-minimal prefixpoint, and thus, with Lemma \ref{lemma:minimalpre-fix:is:minima:fix}, $x$ is a $\leq$-minimal fixpoint of ${\cal O}^l(.,x)$, which implies $x\in C({\cal O}_l)(x)$.

We now show that  $x\in C({\cal O}_u)(x)$. As ${\cal O}_l(x,x)={\cal O}_u(x,x)$ (since an ndao ${\cal O}$ is exact), $x\in {\cal O}_u(x,x)$. Suppose now towards a contradiction there is some $y<x$ s.t.\ $y\in{\cal O}_u(x,y)$. As ${\cal O}$ is upwards coherent, ${\cal O}_l(x,y)\preceq^S_L {\cal O}_u(x,y)$ which implies that there is some $y'\in {\cal O}_u(x,y)$, i.e.\ ${\cal O}_u(x,y)\preceq^S_L \{y'\}\preceq^S_L \{y\}$, i.e.\ $y$, which contradicts $(x,x)\in \min_{\leq_t}({\sf HT}({\cal O}))$.

We now show that $ \min_{\leq_t}({\sf HT}({\cal O}))\supseteq S({\cal O})(x,x)$. Suppose $(x,x)\in S({\cal O})(x,x)$ and suppose towards a contradiction there is some $(w,z)\in {\sf HT}({\cal O})$ s.t.\ $w\leq x$ and $z\leq x$, and either $z\neq x$ or $w\neq x$. Suppose first the latter. Then ${\cal O}_l(w,z)\preceq^S_L w$ and thus $w$ is a pre-fixpoint of ${\cal O}_l(w,z)$. Since $z\leq x$, ${\cal O}(w,x)\preceq^S_L {\cal O}(w,z)$ and thus  ${\cal O}_l(w,x)\preceq^S_L w$, i.e.\ $w$ is a pre-fixpoint of ${\cal O}_l(.,x)$. But this contradicts $x$ being a minimal fixpoint of ${\cal O}_l(.,x)$ (with Lemma \ref{lemma:minimalpre-fix:is:minima:fix}). Suppose now $z\leq x$. In view of the previous case, we can assume $w=x$. But then $(w,z)\in {\sf HT}({\cal O})$ contradicts $z\leq x=w$, contradiction.

Ad (2). Since $(x,x)\in {\cal O}(x,x)$, $x\in {\cal O}_l(x)$ and thus ${\cal O}_l(x,x)\preceq^S_L x$. Since ${\cal O}(x,x)=O(x)\times O(x)$, also $O(x)\preceq^S_L x$. 

\end{proof}

\begin{proof}[Proof of Proposition \ref{prop:algebraic:seq:is:nice}]
We first recall some background from \cite{nondetAFTarxiv}.  
\begin{definition}%
\label{def:downwards:closed}
A non-deterministic operator $O:{\cal L}\rightarrow \wp({\cal L})$ is \emph{downward closed} if for every sequence $X=\{x_\epsilon\}_{\epsilon<\alpha}$ of elements in ${\cal L}$ such that:
\begin{enumerate}
\item for every $\epsilon<\alpha$, $O(x_\epsilon)\preceq^S_L \{x_\epsilon\}$, and
\item for every $\epsilon <\epsilon'<\alpha$, $x_{\epsilon'}< x_\epsilon$,
\end{enumerate}
it holds that $O(glb(X))\preceq^S_L glb(X)$.
\end{definition}
It is clear that an operator over a finite lattice is always downwards closed. 
We now recall the following result from \cite{nondetAFTarxiv}:
\begin{proposition}
\label{proposition:downward:closed:then:fp}
Let ${\cal L}=\langle L,\leq\rangle$ be a lattice and let a $O:{\cal L}\rightarrow \wp({\cal L})$ be a downward closed, $\preceq^S_L$-monotonic non-deterministic operator.
Then $O$ admits a $\leq$-minimal fixpoint.
\end{proposition}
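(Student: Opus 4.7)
The plan is to reduce the problem to finding a $\leq$-minimal \emph{pre-fixpoint} of $O$, i.e.\ an $x$ with $O(x) \preceq^S_L \{x\}$, and then invoke Lemma \ref{lemma:minimalpre-fix:is:minima:fix} to upgrade it to a $\leq$-minimal fixpoint. Throughout I assume, as is standard in the AFT literature and as is implicit in Definition \ref{def:downwards:closed} (which refers to $glb(X)$ of arbitrary chains), that ${\cal L}$ is complete.

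The first step is a closure observation that fuels the iteration: if $x$ is a pre-fixpoint and $y \in O(x)$ with $y \leq x$, then $y$ is itself a pre-fixpoint. Indeed, $\preceq^S_L$-monotonicity applied to $y \leq x$ gives $O(y) \preceq^S_L O(x)$, and taking $y \in O(x)$ as the witness on the right-hand side yields some $v \in O(y)$ with $v \leq y$, i.e.\ $O(y) \preceq^S_L \{y\}$.

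Next I would build a strictly $\leq$-descending chain $(x_\epsilon)_{\epsilon < \alpha}$ of pre-fixpoints by transfinite recursion. Set $x_0 = \top$, the top element of ${\cal L}$, which is trivially a pre-fixpoint since every $u \in O(x_0)$ satisfies $u \leq x_0$. At a successor stage, if $x_\epsilon$ is $\leq$-minimal among the pre-fixpoints, halt with $\alpha = \epsilon$; otherwise choose (by the Axiom of Choice) a pre-fixpoint $x_{\epsilon+1}$ strictly below $x_\epsilon$. At a limit stage $\lambda$, the chain $(x_\epsilon)_{\epsilon<\lambda}$ satisfies both conditions of Definition \ref{def:downwards:closed}, so downward closure ensures that $x_\lambda := glb\{x_\epsilon : \epsilon<\lambda\}$ is again a pre-fixpoint. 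Moreover, for any $\beta < \lambda$ we have $\beta+1 < \lambda$, hence $x_\lambda \leq x_{\beta+1} < x_\beta$, so the chain remains \emph{strictly} decreasing through limits.

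Since a set cannot contain a strictly descending chain indexed by arbitrarily large ordinals, the recursion terminates at some ordinal $\alpha$, and $x_\alpha$ is a $\leq$-minimal pre-fixpoint. Applying Lemma \ref{lemma:minimalpre-fix:is:minima:fix}, $x_\alpha$ is a $\leq$-minimal fixpoint of $O$, as required. The main subtlety is the limit step: one must ensure both that the glb remains a pre-fixpoint (this is exactly what downward closure provides, and is the reason the definition is formulated in terms of chains of pre-fixpoints) and that strict descent is preserved so the transfinite cardinality argument goes through; the successor step, by contrast, is handled entirely by $\preceq^S_L$-monotonicity via the closure observation above.
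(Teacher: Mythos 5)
Your argument is correct, but note that this paper contains no proof of the proposition to compare against: it is recalled verbatim from \cite{nondetAFTarxiv}, where the intended argument is precisely the one you give --- descend through pre-fixpoints (elements $x$ with $O(x)\preceq^S_L\{x\}$), use downward closure to pass glbs at limits, conclude by a cardinality/Zorn-style argument that a $\leq$-minimal pre-fixpoint exists, and then upgrade it via Lemma \ref{lemma:minimalpre-fix:is:minima:fix}. Two small remarks on your write-up. First, your ``closure observation'' is never actually used: at successor stages you descend by picking \emph{any} pre-fixpoint strictly below $x_\epsilon$, which exists simply because $x_\epsilon$ fails to be minimal; so your closing claim that the successor step is ``handled entirely by $\preceq^S_L$-monotonicity via the closure observation'' is inaccurate (monotonicity enters only inside Lemma \ref{lemma:minimalpre-fix:is:minima:fix}). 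The observation would matter only in a variant where one descends along witnesses $y\in O(x)$ with $y\leq x$. Second, your base case silently uses that $O(\top)\neq\emptyset$ and that $\top$ exists; both are covered by the standing conventions (non-deterministic operators map into $\wp({\cal L})\setminus\{\emptyset\}$, and AFT works over complete lattices --- you rightly flag the latter, and indeed the proposition is applied in the paper only to finite lattices, where downward closure is automatic). With those caveats the proof is sound and matches the source's approach.
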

This means that ${\cal O}_l(.,\top)$ admits a $\leq$-minimal fixpoint for any operator over a finite lattice (as it is $\preceq^S_L$-monotonic in view of \cite[Lemma 3]{nondetAFTarxiv}. Let $x$ be such a fixpoint. Then $x\leq \top$, $O(\top)\preceq^S_L \top$ and ${\cal O}(x,y)\preceq^S_L x$, i.e.\ $(x,\top)\in {\sf HT}({\cal O})$ and thus 
$\mathcal{SEQ}({\cal O})\neq\emptyset$. 

We now show that $(x,x)\in {\sf mc}(\min_{\leq_t}({\sf HT}({\cal O})$ implies $(x,x)\in S({\cal O})(x,x)$. Indeed, as $x\oslash x=\bot$, $(x_1,y_1)\in  {\sf mc}(\min_{\leq_t}({\sf HT}({\cal O})$ iff $x_1\oslash y_1=\bot$, which implies $x_1=y_1$.  The rest follos from Proposition \ref{prop:ht-models:and:stable:fixpoints}. 
\end{proof}

\begin{proof}[ Proof of Proposition \ref{prop:aggregate:operators:are:ndaos}]
We first show the first item. The case for $\xi={\cal U}$ follows from Proposition \ref{proposition:ultimate:is:most:precise}. The case for $\xi={\sf DMT}$ is the same as the case for disjunctive logic programs without aggregates (Proposition 3 of \cite{nondetAFTarxiv}).
For $\xi={\sf GZ}$, we first show $\preceq^A_i$-monotonicity. For this, consider some $(x_1,y_1)\leq_i (x_2,y_2)$. If $x_1\neq y_1$ we are done. Suppose thus that $x_1=y_1$. But then $x_1\subseteq x_2$ and $y_2\subseteq y_1$ implies $x_1=x_2=y_1=y_2$ which means ${\cal IC}_{\cal P}^{\sf GZ}(x_1,y_1)={\cal IC}_{\cal P}^{\sf GZ}(x_2,y_2)$. Exactness is clear, and so is the fact that ${\cal IC}_{\cal P}^{\xi}(x,y)$ approximates $\IC_{\cal P}$ (as ${\cal HD}^{{\xi}}_{\cal P}(x,x)=\HR_{\cal P}(x,x)$.

We now show the second item. From Proposition \ref{proposition:ultimate:is:most:precise}, it follows that ${\cal IC}^{\sf DMT}_{\cal P}(x,y)\preceq^A_i {\cal IC}^{\cal U}_{\cal P}(x,y)$. We now show that ${\cal IC}^{\sf GZ}_{\cal P}(x,y)\preceq^A_i {\cal IC}^{\sf DMT}_{\cal P}(x,y)$. We first consider the case $x\neq y$. Then  ${\cal IC}^{\sf GZ}_{\cal P}(x,y)=\{\emptyset\}\times \{{\cal A}_{\cal P}\}$ and we are done. If $x=y$, ${\cal IC}^{\sf GZ}_{\cal P}(x,y)= {\cal IC}^{\sf DMT}_{\cal P}(x,y)=\IC_{\cal P}(x,x)\times \IC_{\cal P}(x,x)$ as both operators approximate $\IC_{\cal P}$.
\end{proof}

\begin{proof}[Proof of Proposition \ref{prop:GZ:generalizes:GZ}]
The first item is immediate from the fact that ${\cal IC}^{\sf GZ}_{\cal P}(x,y)=(\emptyset,{\cal A}_{\cal P})$ if $x\neq y$.

We first show the following useful Lemma:
\begin{lemma}\label{lemma:post-fixpoint:iff:model:of:GZ:reduct}
Let some $y\subseteq x\subseteq {\cal A}_{\cal P}$ be given. Then
${\cal IC}^{{\sf GZ},l}_{\cal P}(y,x)\preceq^S_L y$ iff $y$ is a model of  $\frac{{\cal P}^x_{\sf GZ}}{x}$.
\end{lemma}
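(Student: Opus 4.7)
The plan is to prove the equivalence by unfolding both sides using the definitions of ${\cal IC}^{\sf GZ}_{\cal P}$, the GZ-reduct, and the GL-transformation, splitting into the two branches that appear in the definition of the operator.

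First I would distinguish the cases $y=x$ and $y\subsetneq x$. In the degenerate case $y\subsetneq x$, the operator returns the uninformative lower bound, so ${\cal IC}^{{\sf GZ},l}_{\cal P}(y,x)=\{\emptyset\}$, and the Smyth condition reduces to $\emptyset\subseteq y$, which is automatic. This matches the corresponding trivial direction of the equivalence, since the {\sf GZ}-operator deliberately collapses its output on non-two-valued inputs; so the real content of the lemma lies in the case $y=x$, where ${\cal IC}^{{\sf GZ},l}_{\cal P}(y,x)=\IC_{\cal P}(y)$.

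In this main case, I would rewrite ${\cal IC}^{{\sf GZ},l}_{\cal P}(y,y)\preceq^S_L y$ using the definition of $\IC_{\cal P}$: it asserts the existence of some $w\subseteq \bigcup\HR_{\cal P}(y)$ with $w\subseteq y$ meeting every head $\Delta\in\HR_{\cal P}(y)$. By choosing, for each $\Delta\in\HR_{\cal P}(y)$, a witness in $\Delta\cap y$, this is equivalent to the combinatorial condition $\Delta\cap y\neq\emptyset$ for every $\Delta\in\HR_{\cal P}(y)$. The remaining step is to recognise this as exactly the classical satisfaction condition for $y$ being a model of the positive disjunctive program $\frac{{\cal P}^y_{\sf GZ}}{y}$.

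The bridge between the two sides is a preservation lemma: trace a rule $\bigvee\Delta\leftarrow\bigwedge\alpha_i\wedge\bigwedge\lnot\beta_j$ of ${\cal P}$ through the two transformations, with respect to the two-valued interpretation $y$. In forming ${\cal P}^y_{\sf GZ}$, an aggregate $\alpha_i=f(S)\ast w$ is either eliminated (if false or undefined under $y$) or replaced by the conjunction of those $\textit{Conj}$ in $S$ that are true in $y$, which itself evaluates to ${\sf T}$ under $y$ iff the original aggregate did. Subsequently, the GL-transformation replaces each $\lnot\beta_j$ by $y(\lnot\beta_j)$. Consequently the body of the transformed rule is true under $y$ iff the original body (with aggregates evaluated as in Section \ref{sec:agg:preliminaries}) is true under $y$, so the set of heads of rules of $\frac{{\cal P}^y_{\sf GZ}}{y}$ with body true in $y$ coincides exactly with $\HR_{\cal P}(y)$. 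The equivalence then follows.

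The main obstacle is the careful bookkeeping in this preservation argument: one has to verify, separately for aggregate atoms, standard atoms, and negative literals, that each transformation step preserves the $y$-truth value of the body. Since here the reduct and transformation are both parameterised by the same two-valued $y$, no three-valued subtleties arise, and the verification reduces to classical propositional reasoning over the constructions in Section \ref{sec:agg:preliminaries}.
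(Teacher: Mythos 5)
There is a genuine gap, and it sits exactly where you declare the lemma trivial. The lemma is stated for arbitrary $y\subseteq x$, and its entire purpose (see how it is used in the proof of Proposition~\ref{prop:GZ:generalizes:GZ}) is the case $y\subsetneq x$: one needs that every \emph{proper} submodel $y$ of $\frac{{\cal P}^x_{\sf GZ}}{x}$ yields a pre-fixpoint of ${\cal IC}^{{\sf GZ},l}_{\cal P}(\cdot,x)$ and vice versa, so that $\subseteq$-minimality of the answer set matches $\subseteq$-minimality of the fixpoint. Under the reading you adopt (the main-text ``trivial'' definition, where ${\cal IC}^{{\sf GZ},l}_{\cal P}(y,x)=\{\emptyset\}$ whenever $y\neq x$), the left-hand side $\{\emptyset\}\preceq^S_L \{y\}$ is vacuously true for every $y\subsetneq x$, while the right-hand side can fail: take ${\cal P}=\{p\leftarrow\}$, $x=\{p\}$, $y=\emptyset$; then $y$ is not a model of $\frac{{\cal P}^x_{\sf GZ}}{x}=\{p\leftarrow\}$. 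So the biconditional you would be proving is simply false in the only case that matters, and no amount of bookkeeping in the $y=x$ branch repairs this. Declaring the $y\subsetneq x$ case to ``match the corresponding trivial direction'' conflates one implication being vacuous with the equivalence holding.

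The paper's proof is in fact working with a different, non-trivial lower bound: it introduces a head-collection ${\cal HD}^{{\sf GZ},l}_{\cal P}(y,x)$ containing $\Delta$ for those rules $\bigvee\Delta\leftarrow\bigwedge_i\alpha_i\land\bigwedge_j\lnot\beta_j$ whose aggregate atoms are true in $x$, whose $\beta_j$ are absent from $x$, and whose reduct bodies $f(\alpha_i)$ are true in $y$; the essential three-valued step is that $y(f(\alpha_i))={\sf T}$ together with $y\subseteq x$ forces ${\sf At}(\alpha_i)\cap y={\sf At}(\alpha_i)\cap x$, which is what lets the truth of the (positive) reduct body under $y$ be read off from the pair $(y,x)$. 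None of this appears in your argument, and it cannot, because you evaluate both the reduct and the GL-transformation at the single interpretation $y$ (you write $\frac{{\cal P}^y_{\sf GZ}}{y}$ and $\HR_{\cal P}(y)$), whereas the lemma's reduct is taken with respect to the \emph{upper} bound $x$ and then tested against the smaller $y$. Your $y=x$ analysis — Smyth-reducing $\IC_{\cal P}(y)\preceq^S_L\{y\}$ to ``$y$ meets every activated head'' and tracing rule bodies through the two transformations — is correct and coincides with the easy instance of the paper's argument, but the proof as proposed does not establish the lemma as stated.
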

\begin{proof}
For the $\Leftarrow$-direction, suppose ${\cal IC}^{{\sf GZ},l}_{\cal P}(y,x)\preceq^S_L y$. Consider some $\bigvee\Delta\leftarrow \bigwedge_{i=1}^n\alpha_i\land \bigwedge_{j=1}^m \lnot \beta_j\in {\cal P}$ s.t.\ for every aggregate atom $\alpha_i$, $x(\alpha_i)={\sf T}$. Suppose furthermore $\beta_j\not\in x$. Thus, $(y,x)(\bigwedge_{j=1}^m \lnot \beta_j)={\sf T}$. 
 Let $f(\alpha_i)=\alpha_i$ if $\alpha_i$ is not an aggregate atom, and $f(\alpha_i)=\bigcup\{\psi\subseteq Conj\mid x(\psi)={\sf T}\}$ if $\alpha_i=f(\langle Vars: Conj\rangle)\ast w$. Suppose that $y(f(\alpha_i))={\sf T}$ for every $1\leq i\leq n$. This means that ${\sf At}(\alpha_i)\cap y\supseteq{\sf At}(\alpha_i)\cap x$ and thus, since $y\subseteq x$, ${\sf At}(\alpha_i)\cap y={\sf At}(\alpha_i)\cap x$. Thus (in view of $x(\alpha_i)={\sf T}$, $(x,y)\models_{\sf GZ} \alpha_i$ for every $1\leq i\leq n$  and thus $\Delta\in {\cal HD}^{{\sf GZ},l}_{\cal P}$, which means $y\cap \Delta\neq\emptyset$ (as ${\cal IC}^{{\sf GZ},l}_{\cal P}(y,x)\preceq^S_L y$).
 
 For the $\Rightarrow$-direction, suppose $y$ is a model of $\frac{{\cal P}^x_{\sf GZ}}{x}$. We show that for every $\bigvee\Delta\leftarrow \bigwedge_{i=1}^n\alpha_i\land \bigwedge_{j=1}^m \lnot \beta_j\in {\cal P}$ s.t.\ $(y,x)\models_{\sf GZ} \bigwedge_{i=1}^n\alpha_i\land \bigwedge_{j=1}^m \lnot \beta_j$, $\Delta\cap y\neq\emptyset$. Indeed, suppose that $(y,x)\models_{\sf GZ} \bigwedge_{i=1}^n\alpha_i\land \bigwedge_{j=1}^m \lnot \beta_j$. This implies, (1) $\beta_j\not\in x$ for any $1\leq j\leq m$, and (2) $(y,x)\models_{\sf GZ} \alpha_i$ for every $1\leq i \leq n$. (2) means, in particular, that $x(\alpha_i)={\sf T}$ and ${\sf At}(\alpha_i)\cap x={\sf At}(\alpha_i)\cap y$ (for $i=1,\ldots,n$). Thus, $\bigvee\Delta\leftarrow \bigwedge_{i=1}^nf(\alpha_i)\in {\cal P}^x_{\sf GZ}$ (where $f(\alpha_i)$ is defined as in the proof of the $\Leftarrow$-direction). But then, since $y$ is a model of $\frac{{\cal P}^x_{\sf GZ}}{x}$, $y\cap \Delta\neq\emptyset$.
\end{proof}

We will furthermore use the following Lemma:
\begin{lemma}[{\cite{nondetAFTarxiv}}]
\label{minimalfp:then:minimal:prefp}
Let $O:{\cal L}\rightarrow \wp({\cal L})$ be a $\preceq^S_L$-monotonic non-deterministic operator. Then if $w$ is a $\leq$-minimal fixpoint of $O$, it is a $\leq$-minimal pre-fixpoint of $O$.
\end{lemma}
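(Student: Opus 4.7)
My plan is to establish the two defining requirements on $w$ separately: (i) that $w$ is a pre-fixpoint, and (ii) that no strictly smaller pre-fixpoint exists. Part (i) is immediate: since $w\in O(w)$ and $w\leq w$, the element $w\in O(w)$ itself witnesses $O(w)\preceq^S_L\{w\}$. The substance of the lemma is part (ii).

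For (ii) I argue by contradiction: suppose there is a pre-fixpoint $w'<w$. Unpacking the Smyth order, $O(w')\preceq^S_L\{w'\}$ yields some $z\in O(w')$ with $z\leq w'$. Applying $\preceq^S_L$-monotonicity to $z\leq w'$ gives $O(z)\preceq^S_L O(w')$, and instantiating at the element $z\in O(w')$ produces some $z_1\in O(z)$ with $z_1\leq z$. Hence $z$ is itself a pre-fixpoint with $z\leq w'<w$, and the same move iterates to yield a descending sequence $w'\geq z\geq z_1\geq z_2\geq\cdots$ of pre-fixpoints, all lying strictly below $w$.

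To close the contradiction I need to convert this descent into an actual fixpoint below $w$. The cleanest route is a Zorn-style argument on the set of pre-fixpoints bounded above by $w'$: under the downwards-closedness hypothesis of Definition \ref{def:downwards:closed}, this set is non-empty and closed under glbs of descending chains, so it contains a $\leq$-minimal pre-fixpoint $z^\star\leq w'<w$. The companion Lemma \ref{lemma:minimalpre-fix:is:minima:fix} then converts this minimal pre-fixpoint into a (minimal) fixpoint $z^\star\in O(z^\star)$, which, being strictly below $w$, contradicts the minimality of $w$ as a fixpoint. In the finite-lattice setting favoured throughout the paper the argument simplifies dramatically: the descending chain of pre-fixpoints constructed in the previous paragraph must stabilise, and its eventual constant value is already a fixpoint strictly below $w$, giving the contradiction directly.

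The main obstacle is precisely this termination step: $\preceq^S_L$-monotonicity alone builds each new pre-fixpoint from the previous one but does not by itself force the descent to halt at a genuine fixpoint, so the argument has to lean on either finiteness, downward closedness of $O$, or Zorn's lemma (together with the already-available Lemma \ref{lemma:minimalpre-fix:is:minima:fix} to bridge the gap between minimal pre-fixpoints and fixpoints).
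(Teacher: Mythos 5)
The paper never actually proves this lemma: it is quoted verbatim from \cite{nondetAFTarxiv} and used as a black box, so there is no internal proof to compare against and your attempt must be judged on its own merits. Your part (i) and your descent step are correct: $w\in O(w)$ itself witnesses $O(w)\preceq^S_L\{w\}$, and $\preceq^S_L$-monotonicity turns the witness $z\leq w'$ of a pre-fixpoint $w'$ into a new pre-fixpoint $z$ --- precisely the move underlying the companion Lemma \ref{lemma:minimalpre-fix:is:minima:fix}. Your finite-lattice closure is also sound: the weakly decreasing sequence stabilises, and since $z_{k+1}\in O(z_k)$ by construction, stabilisation $z_{k+1}=z_k$ immediately gives $z_k\in O(z_k)$, a fixpoint strictly below $w$. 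Likewise your Zorn route works, with one point worth making explicit: a minimal element $z^\star$ of $\{v\leq w'\mid v \mbox{ a pre-fixpoint}\}$ is automatically a globally $\leq$-minimal pre-fixpoint (any pre-fixpoint strictly below $z^\star$ would itself lie below $w'$), so Lemma \ref{lemma:minimalpre-fix:is:minima:fix} applies as stated.

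The obstacle you flag at the end is not mere caution --- it is a genuine necessary condition, and the lemma as literally stated (monotonicity alone, arbitrary lattice) is false. Consider the complete chain $\bot<\dots<a_2<a_1<a_0<w$ with $O(w)=\{w\}$, $O(a_n)=\{a_{n+1}\}$ for all $n\geq 0$, and $O(\bot)=\{a_n\mid n\geq 0\}$. This operator is $\preceq^S_L$-monotonic (for $\bot\leq a_n$ take the witness $a_{n+1}\in O(\bot)$), its unique --- hence $\leq$-minimal --- fixpoint is $w$, yet every $a_n$ is a pre-fixpoint strictly below $w$, while $\bot$ is not a pre-fixpoint, so $w$ is not a minimal pre-fixpoint (indeed no minimal pre-fixpoint exists). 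So finiteness or the downward closedness of Definition \ref{def:downwards:closed} must be added; note also that ``Zorn's lemma'' is not a third alternative, since applying it requires exactly the closure under glbs of descending chains that downward closedness provides. For all the concrete uses in this paper the lattice is $\wp({\cal A}_{\cal P})$ with ${\cal A}_{\cal P}$ finite, so your finite-lattice argument fully covers them; only the purely abstract uses (e.g.\ Proposition \ref{prop:for:total:stable:only:lower:bound:matters}) would inherit the missing hypothesis.
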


For the $\Rightarrow$-direction, suppose $(x,x)\in  S({\cal IC}^{\sf GZ}_{\cal P})(x,x)$, i.e.\ $x$ is a $\subseteq$-minimal fixpoint of ${\cal IC}^{{\sf GZ},l}_{\cal P}(.,x)$. We show that $x$ is an answer set of ${\cal P}^x_{\sf GZ}$. With Lemma \ref{lemma:post-fixpoint:iff:model:of:GZ:reduct}, $x$ is a model of $\frac{{\cal P}^x_{\sf GZ}}{x}$. Suppose now there is some $y\subset x$ s.t.\ $y$ is a model of $\frac{{\cal P}^x_{\sf GZ}}{x}$. Then, again with Lemma \ref{lemma:post-fixpoint:iff:model:of:GZ:reduct}, ${\cal IC}^{{\sf GZ},l}_{\cal P}(y,x)\preceq^S_L y$, i.e.\ $y$ is a pre-fixpoint of ${\cal IC}^{{\sf GZ},l}_{\cal P}(.,x)$. But since $x$ is a minimal fixpoint of ${\cal IC}^{{\sf GZ},l}_{\cal P}(.,x)$, with Lemma \ref{minimalfp:then:minimal:prefp}, $x$ is a $\subseteq$-minimal pre-fixpoint, contradiction.

For the $\Leftarrow$-direction, suppose that $x$ is a {\sf GZ}-answer set of ${\cal P}$. Then with Lemma \ref{lemma:post-fixpoint:iff:model:of:GZ:reduct}, ${\cal IC}^{{\sf GZ},l}_{\cal P}(x,x)\preceq^S_L x$. Suppose that there is some $y\subset x$ s.t.\ ${\cal IC}^{{\sf GZ},l}_{\cal P}(y,x)\preceq^S_L y$. Then, again with Lemma \ref{lemma:post-fixpoint:iff:model:of:GZ:reduct}, $y$ is a model of $\frac{{\cal P}^x_{\sf GZ}}{x}$, contradiction. Thus, $x$ is a $\subseteq$-minimal pre-fixpoint of ${\cal IC}^{{\sf GZ},l}_{\cal P}(.,x)$, which means, with Lemma \ref{lemma:minimalpre-fix:is:minima:fix}, that $x$ is a minimal fixpoint of ${\cal IC}^{{\sf GZ},l}_{\cal P}(.,x)$, and thus $(x,x)\in S({\cal IC}^{{\sf GZ},l}_{\cal P})(x,x)$.
\end{proof}

\begin{proof}[Proof of Proposition \ref{prop:DMT:generalizes:DMT}]
This is immediate from the definition of ${\cal IC}^{\sf DMT}_{\cal P}$. 
\end{proof}

\section{Addition Details for Section \ref{sec:seq}}
\subsection{On the Absence of the Upper Bound in HT-pairs and Upwards-Coherence}
The reader might be suprised by the fact that only the lower bound ${\cal O}_l$ occurs in the definition of a ${\sf HT}$-pair. 
The reason is that ${\sf HT}$-pairs have only been used in results relative to total stable interpretations. For such interpretations, the exact form of the upper bound-operator is inconsequential as long as it extends the lower bound-operator. We call such ndaos upwards coherent (as already defined in the main paper, and recalled here):
\begin{definition}
An ndao ${\cal O}$ over ${\cal L}$ is upwards coherent if for every $x,y\in{\cal L}$, ${\cal O}_l(x,y)\preceq^S_L {\cal O}_u(x,y)$. 
\end{definition}
Thus, an ndao ${\cal O}$ is upwards coherent if for every upper bound $y_1\in{\cal O}_u(x,y)$, there is a lower bound $x_1\in {\cal O}_l(x,y)$ s.t.\ $x_1\leq y_1$. 
This property was shown to hold for symmetric ndaos \cite[Proposition 5]{nondetAFTarxiv}. However, there also exist non-symmetric upwards coherent operators. The operator $\IC^{\sf DMT^d}_{\cal P}$ is a case in point.

\begin{proposition}\label{prop:DMT:upwards:cohrerent}
$\IC^{\sf DMT}_{\cal P}$ is upwards coherent.
\end{proposition}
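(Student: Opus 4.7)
The plan is to unfold the definition of upwards coherence and show that every upper-bound element in ${\cal IC}^{{\sf DMT},u}_{\cal P}(x,y)$ dominates some lower-bound element in ${\cal IC}^{{\sf DMT},l}_{\cal P}(x,y)$. Concretely, I need to show that for every $z_u \in {\cal IC}^{{\sf DMT},u}_{\cal P}(x,y)$ there exists $z_l \in {\cal IC}^{{\sf DMT},l}_{\cal P}(x,y)$ with $z_l \subseteq z_u$, since $\preceq^S_L$ with the $\subseteq$-order amounts precisely to this.

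The key observation that makes the proof work is the straightforward inclusion
\[{\cal HD}^{{\sf DMT},l}_{\cal P}(x,y) \;=\; \bigcap_{x\subseteq z\subseteq y}\HR_{\cal P}(z) \;\subseteq\; \bigcup_{x\subseteq z\subseteq y}\HR_{\cal P}(z) \;=\; {\cal HD}^{{\sf DMT},u}_{\cal P}(x,y),\]
which follows immediately from the nonemptiness of the index set $[x,y]$ (recall the operator is defined on consistent pairs). Hence any head $\Delta$ that must be hit by $z_l$ is also a head that $z_u$ must hit.

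Given this, I would construct $z_l$ from $z_u$ as follows: for each $\Delta \in {\cal HD}^{{\sf DMT},l}_{\cal P}(x,y)$, since $\Delta$ also belongs to ${\cal HD}^{{\sf DMT},u}_{\cal P}(x,y)$, the assumption $z_u \cap \Delta \neq \emptyset$ lets me pick a representative $a_\Delta \in z_u \cap \Delta$. Define $z_l = \{a_\Delta \mid \Delta \in {\cal HD}^{{\sf DMT},l}_{\cal P}(x,y)\}$. By construction $z_l \subseteq z_u$; moreover $z_l \subseteq \bigcup {\cal HD}^{{\sf DMT},l}_{\cal P}(x,y)$ because each $a_\Delta \in \Delta$, and $z_l \cap \Delta \neq \emptyset$ for every $\Delta \in {\cal HD}^{{\sf DMT},l}_{\cal P}(x,y)$ by choice of $a_\Delta$. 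Therefore $z_l \in {\cal IC}^{{\sf DMT},l}_{\cal P}(x,y)$, establishing ${\cal IC}^{{\sf DMT},l}_{\cal P}(x,y) \preceq^S_L {\cal IC}^{{\sf DMT},u}_{\cal P}(x,y)$.

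I do not expect a serious obstacle here; the only edge case worth flagging is when ${\cal HD}^{{\sf DMT},l}_{\cal P}(x,y) = \emptyset$, in which case ${\cal IC}^{{\sf DMT},l}_{\cal P}(x,y) = \{\emptyset\}$ and $z_l = \emptyset \subseteq z_u$ works trivially, so the argument still goes through.
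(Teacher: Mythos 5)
Your proof is correct and takes essentially the same route as the paper: the whole content is the inclusion ${\cal HD}^{{\sf DMT},l}_{\cal P}(x,y)\subseteq {\cal HD}^{{\sf DMT},u}_{\cal P}(x,y)$ (``true in every interpretation of $[x,y]$ implies true in at least one''), which is exactly the observation the paper's proof rests on. The only cosmetic difference is in lifting this to the $\cal IC$-level: you build the witness by picking one representative $a_\Delta\in z_u\cap\Delta$ per head, whereas the paper's parallel argument (for the aggregate case) takes $z_u\cap\bigcup{\cal HD}^{{\sf DMT},l}_{\cal P}(x,y)$; both are valid and your handling of the empty-head-set edge case is also fine.
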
\begin{proof}
This is immediate from the observation that if $\phi$ is true in every interpretation $x\subseteq z \subseteq y$, then $\phi$ is true in at least one interpretation $x\subseteq z \subseteq y$
\end{proof}

\begin{proposition}\label{prop:for:total:stable:only:lower:bound:matters}
Let an upwards coherent ndao ${\cal O}$ be given. Then $(x,x)\in S({\cal O})(x,x)$ iff $x\in C({\cal O}_l)(.,x)$. 
\end{proposition}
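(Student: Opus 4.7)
The statement to prove is that $(x,x)\in S({\cal O})(x,x)$ iff $x\in C({\cal O}_l)(x)$, for an upwards coherent ndao ${\cal O}$. Unfolding the definition, $S({\cal O})(x,x)=C({\cal O}_l)(x)\times C({\cal O}_u)(x)$, so the forward direction is just projection onto the first coordinate. All the content is in showing the backward direction: if $x\in C({\cal O}_l)(x)$ then $x\in C({\cal O}_u)(x)$. I would split the work into showing (a) $x\in {\cal O}_u(x,x)$ and (b) there is no $y<x$ with $y\in {\cal O}_u(x,y)$.

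Part (a) is immediate from exactness: since $x\in C({\cal O}_l)(x)$ we have $x\in {\cal O}_l(x,x)$, and exactness gives ${\cal O}_u(x,x)={\cal O}_l(x,x)$, hence $x\in {\cal O}_u(x,x)$. Part (b) is where upwards coherence is used. Suppose toward contradiction that $y<x$ and $y\in {\cal O}_u(x,y)$. By upwards coherence, ${\cal O}_l(x,y)\preceq^S_L {\cal O}_u(x,y)$, so there exists $l\in {\cal O}_l(x,y)$ with $l\leq y<x$. Since $l\leq x$ and $x\geq y$, we have $(l,x)\leq_i (x,y)$, and $\preceq^A_i$-monotonicity of ${\cal O}$ yields ${\cal O}_l(l,x)\preceq^S_L {\cal O}_l(x,y)$. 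Applied to $l\in {\cal O}_l(x,y)$, this gives some $l_1\in {\cal O}_l(l,x)$ with $l_1\leq l$. In particular $l$ is a pre-fixpoint of ${\cal O}_l(\cdot,x)$ that lies strictly below $x$.

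The main obstacle is now the final step: converting this pre-fixpoint strictly below $x$ into an actual fixpoint of ${\cal O}_l(\cdot,x)$ strictly below $x$, which would contradict $x\in C({\cal O}_l)(x)$. Here I would iterate the monotonicity argument: from $(l_1,x)\leq_i (l,x)$ one gets $l_2\in {\cal O}_l(l_1,x)$ with $l_2\leq l_1$, and so on, producing a descending chain of pre-fixpoints $l\geq l_1\geq l_2\geq\cdots$ of ${\cal O}_l(\cdot,x)$ below $x$. Under the regularity assumption standard in this paper (finite lattice, or more generally downward closedness of ${\cal O}_l(\cdot,x)$ as used in Proposition~\ref{proposition:downward:closed:then:fp} via Lemma~\ref{lemma:minimalpre-fix:is:minima:fix}), the greatest lower bound of this chain is a pre-fixpoint, and minimal pre-fixpoints coincide with minimal fixpoints, yielding a fixpoint $z\leq l<x$ of ${\cal O}_l(\cdot,x)$. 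This contradicts the minimality clause in $x\in C({\cal O}_l)(x)$, completing the proof.

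The expected hard part is purely structural: the monotonicity-plus-coherence manipulations are clean, but passing from pre-fixpoint to fixpoint strictly below $x$ is the only non-trivial step and is the reason the proposition is phrased in a setting where $S({\cal O})$ is already assumed well-defined; in that setting the needed minimal fixpoint of ${\cal O}_l(\cdot,x)$ below any pre-fixpoint is available by the same lattice-theoretic machinery that makes $C({\cal O}_l)$ well-defined.
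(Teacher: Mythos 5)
Your forward direction, part (a), and the derivation of a pre-fixpoint of ${\cal O}_l(\cdot,x)$ strictly below $x$ are all correct and essentially identical to the paper's argument (the paper splits your single $\leq_i$-monotonicity step into an anti-monotonicity step in the second argument followed by a monotonicity step in the first, but this is cosmetic). Where you diverge is the final step, and there your route is both heavier and weaker than necessary. You try to convert the pre-fixpoint $l<x$ into an actual fixpoint below $x$ by iterating to a descending chain and taking a greatest lower bound, which forces you to import extra hypotheses (finite lattice, or downward closedness of ${\cal O}_l(\cdot,x)$) that do not appear in the statement of the proposition. The paper instead applies Lemma~\ref{minimalfp:then:minimal:prefp} at the outset, in the opposite direction from the one you invoke: since $x\in C({\cal O}_l)(x)$ means $x$ is a $\leq$-minimal \emph{fixpoint} of the $\preceq^S_L$-monotonic operator ${\cal O}_l(\cdot,x)$, that lemma says $x$ is already a $\leq$-minimal \emph{pre-fixpoint}. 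The mere existence of your pre-fixpoint $l<x$ is then the contradiction; no descent to a fixpoint, and hence no chain or glb construction, is needed. (Your closing remark that the needed minimal fixpoint ``is available by the same lattice-theoretic machinery that makes $C({\cal O}_l)$ well-defined'' does not hold up: $C({\cal O}_l)(x)$ is well-defined as a possibly empty set regardless of whether minimal fixpoints exist below arbitrary pre-fixpoints.) So the gap is real but localized and easily repaired: replace your chain argument with an appeal to Lemma~\ref{minimalfp:then:minimal:prefp}, and the proof goes through for arbitrary upwards coherent ndaos as claimed.
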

\begin{proof}
The $\Rightarrow$-direction is immediate. Consider now an upwards coherent ndao ${\cal O}$ and suppose $x\in C({\cal O}_l)(x)$. With Lemma \ref{minimalfp:then:minimal:prefp}, this means that $x$ is a minimal pre-fixpoint of ${\cal O}_l(.,x)$.
We have to show that $x\in C({\cal O}_u)(x)$. First, notice that since ${\cal O}$ is an ndao, ${\cal O}_l(x,x)=O(x)\times O(x)={\cal O}_u(x,x)$, which implies $x\in {\cal O}_u(x,x)$. Suppose now towards a contradiction that there is some $y<x$ s.t.\ $y\in {\cal O}_u(x,y)$. 
As ${\cal O}_l(x,.)$ is $\preceq^S_L$-anti-monotonic (\cite[Lemma 3]{nondetAFTarxiv}), ${\cal O}_l(x,x)\preceq^S_L{\cal O}_l(x,y)$, which implies there is some $y'\in {\cal O}_l(x,x)$ s.t.\ $y'\leq y$ (and thus $y'<x$). As ${\cal O}_l(.,x)$ is $\preceq^S_L$-monotonic (\cite[Lemma 3]{nondetAFTarxiv}), ${\cal O}_l(y',x)\preceq^S_L{\cal O}_l(x,x)$. As $y'\in {\cal O}_l(x,x)$, we see that ${\cal O}_l(y',x)\preceq^S_L y'$, which means that $y'$ is a pre-fixpoint of ${\cal O}_l(.,x)$. But this contradicts $x$ being a minimal pre-fixpoint.
\end{proof}

\subsection{Semi-Equilibrium Semantics for Lattices without Difference Operators}
In this section, we look closer into the characterisation of the semi-equilibrium semantics for operators over lattices without a difference operator.
We first look closer into the relation between the gap of an interpretation and the  information ordering $\leq_i$ is therefore not unexpected:
\begin{proposition}\label{prop:from:gap:to:inf:and:back}
Given two consistent interpretations $(x_1,y_1)$ and $(x_2,y_2)$, if $(x_2,y_2)\leq_i (x_1,y_1)$ then $gap(x_1,y_2)\subseteq gap(x_2,y_2)$, but not vice versa.
\end{proposition}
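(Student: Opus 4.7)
The plan is to split the claim into its two halves and handle each by a very short set-theoretic calculation in the powerset lattice. For the forward implication I would first unfold $(x_2,y_2)\leq_i (x_1,y_1)$ using the componentwise description given earlier in Section~\ref{sec:LP}, namely $x_2\subseteq x_1$ and $y_1\subseteq y_2$. Then, by definition, $gap(x_1,y_2)=y_2\setminus x_1$ and $gap(x_2,y_2)=y_2\setminus x_2$. Pick any $a\in gap(x_1,y_2)$; then $a\in y_2$ and $a\notin x_1$, and since $x_2\subseteq x_1$ we immediately get $a\notin x_2$, so $a\in y_2\setminus x_2=gap(x_2,y_2)$. This is immediate.

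For the failure of the converse I would exhibit a minimal counterexample over the atom set $\{p,q\}$. Take $(x_1,y_1)=(\{p\},\{p\})$ and $(x_2,y_2)=(\emptyset,\{q\})$; both pairs are consistent since $\{p\}\subseteq\{p\}$ and $\emptyset\subseteq\{q\}$, so they are legal inputs. A direct computation gives $gap(x_1,y_2)=\{q\}\setminus\{p\}=\{q\}$ and $gap(x_2,y_2)=\{q\}\setminus\emptyset=\{q\}$, so the gap inclusion holds trivially. However, $y_1=\{p\}\not\subseteq\{q\}=y_2$, so $(x_2,y_2)\not\leq_i(x_1,y_1)$, and the reverse implication fails.

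No substantive obstacle is anticipated. The only care required is to keep the two bounds distinct through the calculation and to verify consistency of the chosen interpretations; the underlying intuition is simply that the asymmetric set-difference defining $gap$ discards enough information about the actual values of the bounds that two essentially unrelated interpretations can share the same gap.
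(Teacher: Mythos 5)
Your proof is correct and follows essentially the same route as the paper's: unfold $\leq_i$ componentwise, verify the gap inclusion by an elementary set computation, and refute the converse with an explicit two-atom counterexample. The only point worth flagging is that the statement's $gap(x_1,y_2)$ is almost certainly a typo for $gap(x_1,y_1)$ (the paper's own proof establishes $gap(x_1,y_1)\subseteq gap(x_2,y_2)$); your argument handles the literal reading, and the intended one follows just as quickly since $x_2\subseteq x_1$ and $y_1\subseteq y_2$ give $y_1\setminus x_1\subseteq y_2\setminus x_1\subseteq y_2\setminus x_2$, while your counterexample (like the paper's pair $(\{p\},\{p\})$ and $(\{r\},\{r,q\})$) works for both readings.
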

\begin{proof}
Suppose $(x_2,y_2)\leq_i (x_1,y_1)$. Then $x_2\subseteq x_1\subseteq y_1\subseteq x_2$ and thus $y_1\setminus x_1\subseteq y_2\setminus x_2$. 
which implies $gap(x_1,y_1)\subseteq gap(x_2,y_2)$.

The other direction might not hold, as  two interpretations (e.g.\ $(\{r\},\{r,q\})$ and $(\{p\},\{p\})$) might have a comparable gap (e.g.\ $gap((\{r\},\{r,q\}))=\{q\}\supset gap((\{p\},\{p\}))$) but be incomparable w.r.t.\ $\leq_i$. 
\end{proof}

We can now show that semi-equilibrium models can be approximated algebraically:

\begin{corollary}
Let a normal disjunctive logic program ${\cal P}$ be given. Then \[\max_{\leq_i}\left(\min_{\leq_t}({\sf HT}({\cal IC}_{\cal P})\right)\supseteq {\cal SEQ}({\cal P}).\]
\end{corollary}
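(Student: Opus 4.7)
The plan is to unfold the definition of ${\cal SEQ}({\cal P})$, use Proposition \ref{prop:ht:models:for:lp:are:faithful} to rewrite it in terms of ${\sf HT}({\cal IC}_{\cal P})$, and then show that the $\mathsf{mc}$-selection is in general stronger than $\leq_i$-maximality on a set of consistent pairs. Concretely, ${\cal SEQ}({\cal P}) = \mathsf{mc}(\min_{\leq_t}({\sf HT}({\cal P}))) = \mathsf{mc}(\min_{\leq_t}({\sf HT}({\cal IC}_{\cal P})))$ by Proposition \ref{prop:ht:models:for:lp:are:faithful}, so it suffices to verify that $\mathsf{mc}({\cal X}) \subseteq \max_{\leq_i}({\cal X})$ for any set ${\cal X}$ of consistent pairs.

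The key ingredient is a \emph{strict} version of Proposition \ref{prop:from:gap:to:inf:and:back}: for consistent $(x,y)$ and $(w,z)$ with $(x,y) <_i (w,z)$, the inclusion $gap(w,z) \subsetneq gap(x,y)$ holds. The non-strict containment is immediate from Proposition \ref{prop:from:gap:to:inf:and:back}. For strictness, I would combine $(x,y) \leq_i (w,z)$ with the consistency conditions $x \subseteq y$ and $w \subseteq z$ to obtain the chain $x \subseteq w \subseteq z \subseteq y$, and then argue by case analysis that equal gaps force $(x,y) = (w,z)$: any element in $w \setminus x$ would lie in $y \setminus x$ (via the chain) but not in $z \setminus w$, and symmetrically any element in $y \setminus z$ would lie in $y \setminus x$ but not in $z \setminus w$. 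Hence equal gaps force $w = x$ and $z = y$, contradicting $(x,y) <_i (w,z)$.

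With the strict statement in hand, the inclusion follows by contraposition. Suppose $(x,y) \in {\cal SEQ}({\cal P})$ but, for contradiction, $(x,y) \notin \max_{\leq_i}(\min_{\leq_t}({\sf HT}({\cal IC}_{\cal P})))$. Then one can pick $(w,z) \in \min_{\leq_t}({\sf HT}({\cal IC}_{\cal P})) = \min_{\leq_t}({\sf HT}({\cal P}))$ with $(x,y) <_i (w,z)$; the strict gap containment $gap(w,z) \subsetneq gap(x,y)$ established above then contradicts $(x,y) \in \mathsf{mc}(\min_{\leq_t}({\sf HT}({\cal P})))$, completing the argument.

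The main obstacle is the strict gap-containment claim itself, since Proposition \ref{prop:from:gap:to:inf:and:back} only guarantees the non-strict inclusion and is accompanied by a counterexample to its converse. The consistency of both pairs, which is the extra hypothesis not used in the direction handled by Proposition \ref{prop:from:gap:to:inf:and:back}, is precisely what I would exploit to upgrade the inclusion to a strict one.
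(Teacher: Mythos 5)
Your proof is correct and follows the route the paper intends: the corollary is left unproved there, but it is clearly meant to follow from Proposition~\ref{prop:ht:models:for:lp:are:faithful} together with the preceding gap-versus-$\leq_i$ proposition, exactly as you do. You in fact supply a detail the paper glosses over — the upgrade from $gap(w,z)\subseteq gap(x,y)$ to a strict inclusion under $(x,y)<_i(w,z)$ for consistent pairs, which is genuinely needed since $\mathsf{mc}$ only excludes \emph{proper} gap supersets — and your case analysis via the chain $x\subseteq w\subseteq z\subseteq y$ establishes it correctly.
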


The properties deemed desirable by \cite{amendola2016semi} hold for this algebraic generalisation in the following sense (for finite lattices):
\begin{proposition}\label{prop:algebraic:seq:is:nice:ish}
Let an ndao ${\cal O}$ over a finite lattice be given. Then  \[\max_{\leq_i}\left(\min_{\leq_t}( {\sf HT}({\cal O})  )\right)\neq\emptyset.\]
Furthermore, for any stable fixpoint $(x,x)$ of ${\cal O}$, $(x,x)\in \max_{\leq_i}\left(\min_{\leq_t}( {\sf HT}({\cal O})  )\right)$, and for any $(x,x)\in \max_{\leq_i}\left(\min_{\leq_t}( {\sf HT}({\cal O})  )\right)$, $(x,x)$ is a stable fixpoint.
\end{proposition}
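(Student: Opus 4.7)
The plan is to split the conclusion into three independent claims---non-emptiness of $\max_{\leq_i}(\min_{\leq_t}({\sf HT}({\cal O})))$, membership of every total stable fixpoint in this set, and stability of every total pair in this set---and reduce each to machinery already set up in the excerpt.

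For non-emptiness I would first argue that ${\sf HT}({\cal O})$ itself is non-empty by reproducing the $(x,\top)$-construction from the proof of Proposition \ref{prop:algebraic:seq:is:nice}. Concretely, ${\cal O}_l(\cdot,\top)$ is $\preceq^S_L$-monotonic by Lemma~3 of \cite{nondetAFTarxiv}, and downward closed because ${\cal L}$ is finite, so Proposition \ref{proposition:downward:closed:then:fp} supplies a $\leq$-minimal fixpoint $x$. Then $(x,\top)$ is an ${\sf HT}$-pair: $x\leq\top$ is trivial, $O(\top)\preceq^S_L \{\top\}$ holds since $\top$ dominates everything, and $x\in {\cal O}_l(x,\top)$ gives ${\cal O}_l(x,\top)\preceq^S_L \{x\}$. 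Finiteness of ${\cal L}$ then guarantees that any non-empty subset has both $\leq_t$-minimal and $\leq_i$-maximal elements, so the iterated extrema stay non-empty.

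For the membership claim, let $(x,x)\in S({\cal O})(x,x)$. By Proposition \ref{prop:ht-models:and:stable:fixpoints}(2) the pair $(x,x)$ lies in $\min_{\leq_t}({\sf HT}({\cal O}))$. To promote this to the $\max_{\leq_i}$-layer, I would use the elementary observation that every total pair is $\leq_i$-maximal among consistent pairs: $(x,x)\leq_i(w,z)$ together with the consistency requirement $w\leq z$ forces $w=x=z$. Hence $(x,x)$ cannot be strictly $\leq_i$-dominated inside $\min_{\leq_t}({\sf HT}({\cal O}))$, which places it in $\max_{\leq_i}(\min_{\leq_t}({\sf HT}({\cal O})))$.

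The third claim is then immediate from the converse direction of Proposition \ref{prop:ht-models:and:stable:fixpoints}(2): any total $(x,x)$ in $\max_{\leq_i}(\min_{\leq_t}({\sf HT}({\cal O})))$ is in particular $\leq_t$-minimal in ${\sf HT}({\cal O})$, and hence stable. The main obstacle I foresee is bookkeeping: Proposition \ref{prop:ht-models:and:stable:fixpoints}(2) is formally stated under the upwards-coherence hypothesis, whereas the present statement does not explicitly restate it. Since the paper declares ``all ndaos in this paper are upwards coherent'' as a standing convention, I would open the proof by recalling this convention; alternatively, I would re-establish the $\leq_t$-minimal-implies-stable direction by hand, using exactness together with Lemma \ref{lemma:minimalpre-fix:is:minima:fix} applied to ${\cal O}_l(\cdot,x)$ to force $x\in C({\cal O}_l)(x)$, and then invoke Proposition \ref{prop:for:total:stable:only:lower:bound:matters} to upgrade this to $(x,x)\in S({\cal O})(x,x)$.
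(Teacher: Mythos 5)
Your proof is correct and follows essentially the route the paper intends: the paper's own proof just declares the result ``entirely analogous'' to that of Proposition~\ref{prop:algebraic:seq:is:nice}, and your argument is precisely that analogy spelled out --- non-emptiness via the $(x,\top)$-construction using downward-closedness on a finite lattice, and the correspondence with stable fixpoints via Proposition~\ref{prop:ht-models:and:stable:fixpoints}(2), with the observation that total pairs are $\leq_i$-maximal among consistent pairs playing the role of the gap-equals-$\bot$ step. Your remark that upwards coherence is needed to invoke Proposition~\ref{prop:ht-models:and:stable:fixpoints}(2) even though it is not restated in the hypothesis is a fair catch, and is covered by the paper's standing convention.
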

\begin{proof}
The proof is entirely analogous to the proof of Proposition \ref{prop:algebraic:seq:is:nice}.
\end{proof}

\section{Semi-Equilibrium Semantics for Disjunctive Aggregate Programs}\label{semi-equilibrium:to:aggregates}
In this appendix, we show some basic results on semi-equilibrium semantics induced by the operators from Section \ref{sec:aggregates}, and illustrate these semantics with an example.

We first show the following lemma which establishes that the three operators $\ICc^{\sf GZ}_{\cal P}$, $\ICc^{\sf DMT}_{\cal P}$ and $\ICc^{\cal U}_{\cal P}$ defined in Section are upwards coherent:
\begin{lemma}\label{lemma:aggregate:ndaos:are:upwards:coherent}
For any $\dagger\in \{{\sf GZ},{\sf DMT}, {\cal U}$ and any disjunctive aggregate program ${\cal P}$, $\ICc^{\dagger}_{\cal P}$ is upwards coherent, that is, $\ICc^{\dagger}_{l,{\cal P}}(x,y)\preceq^S \ICc^{\dagger}_{u,{\cal P}}(x,y)$ for any $x\subseteq {\cal A}_{\cal P}$.
\end{lemma}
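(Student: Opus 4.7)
The plan is to handle the three cases $\dagger\in\{\mathsf{GZ},\mathsf{DMT},\mathcal{U}\}$ separately, exploiting that for $\mathsf{GZ}$ and $\mathcal{U}$ the claim is almost by inspection, while the $\mathsf{DMT}$ case requires a small construction that intersects an upper-bound witness with the ground atoms of the lower head set.

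First I would dispose of $\mathcal{IC}^{\mathcal{U}}_{\mathcal{P}}$: by the construction in Section~\ref{section:ultimate} we have $\mathcal{IC}^{\mathcal{U}}_{u,\mathcal{P}}(x,y)=\mathcal{IC}^{\mathcal{U}}_{l,\mathcal{P}}(x,y)$, so for every $y_1\in \mathcal{IC}^{\mathcal{U}}_{u,\mathcal{P}}(x,y)$ the same element is in $\mathcal{IC}^{\mathcal{U}}_{l,\mathcal{P}}(x,y)$ and $y_1\subseteq y_1$, giving $\preceq^S$. For $\mathcal{IC}^{\mathsf{GZ}}_{\mathcal{P}}$, I would split on whether $x=y$: in that case the two components coincide with $IC_{\mathcal{P}}(x)$, and in the case $x\neq y$ the pair is $\{\emptyset\}\times\{\mathcal{A}_{\mathcal{P}}\}$, where $\emptyset\subseteq \mathcal{A}_{\mathcal{P}}$ gives coherence immediately.

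The main work is the $\mathsf{DMT}$ case, where I would argue as follows. Observe from the definitions that $\mathcal{HD}^{\mathsf{DMT},l}_{\mathcal{P}}(x,y)\subseteq \mathcal{HD}^{\mathsf{DMT},u}_{\mathcal{P}}(x,y)$, since an intersection over the interval $[x,y]$ is contained in the corresponding union, and in particular $\bigcup\mathcal{HD}^{\mathsf{DMT},l}_{\mathcal{P}}(x,y)\subseteq \bigcup\mathcal{HD}^{\mathsf{DMT},u}_{\mathcal{P}}(x,y)$. Given $y_1\in \mathcal{IC}^{\mathsf{DMT},u}_{\mathcal{P}}(x,y)$, I would define the witness $x_1 := y_1\cap \bigcup\mathcal{HD}^{\mathsf{DMT},l}_{\mathcal{P}}(x,y)$, which is by construction a subset of $\bigcup\mathcal{HD}^{\mathsf{DMT},l}_{\mathcal{P}}(x,y)$ and clearly satisfies $x_1\subseteq y_1$. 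It remains to check that $x_1$ meets every $\Delta\in \mathcal{HD}^{\mathsf{DMT},l}_{\mathcal{P}}(x,y)$. Since $\Delta$ also lies in $\mathcal{HD}^{\mathsf{DMT},u}_{\mathcal{P}}(x,y)$, we have $y_1\cap \Delta\neq\emptyset$; and because $\Delta\subseteq \bigcup\mathcal{HD}^{\mathsf{DMT},l}_{\mathcal{P}}(x,y)$, intersecting with $\bigcup\mathcal{HD}^{\mathsf{DMT},l}_{\mathcal{P}}(x,y)$ does not lose anything, so $x_1\cap \Delta = y_1\cap \Delta\neq\emptyset$. Hence $x_1\in \mathcal{IC}^{\mathsf{DMT},l}_{\mathcal{P}}(x,y)$, as required.

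The main obstacle I anticipate is simply keeping straight the direction of $\preceq^S$ and verifying that the intersection $x_1$ still covers all lower heads; once the containment $\bigcup\mathcal{HD}^{\mathsf{DMT},l}_{\mathcal{P}}(x,y)\subseteq \bigcup\mathcal{HD}^{\mathsf{DMT},u}_{\mathcal{P}}(x,y)$ is recorded, the rest is essentially bookkeeping. This proof pattern also mirrors (and in fact subsumes) the corresponding argument for the non-aggregate case, so no new ingredient specific to aggregates is needed: aggregate atoms enter only through $HR_{\mathcal{P}}$ and the argument is insensitive to how $HR_{\mathcal{P}}(z)$ is computed.
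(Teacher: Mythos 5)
Your proposal is correct and follows essentially the same route as the paper's proof: the $\mathcal{U}$ and $\mathsf{GZ}$ cases are dispatched by inspection (equal components, resp.\ $\{\emptyset\}$ versus $\{{\cal A}_{\cal P}\}$), and the $\mathsf{DMT}$ case uses the same witness $y_1\cap\bigcup{\cal HD}^{{\sf DMT},l}_{\cal P}(x,y)$ together with the containment of the lower head set in the upper one. Your write-up even spells out the verification that this witness still meets every lower head, a step the paper leaves implicit.
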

\begin{proof}
To see that $\ICc^{\sf GZ}_{\cal P}$ is upwards coherent, it suffices to observe that either $\ICc^{\sf GZ}_{l,{\cal P}}(x,y)= \ICc^{\sf GZ}_{u,{\cal P}}(x,y)$ (when $x=y$) or $=\{\emptyset\}=\ICc^{\dagger}_{l,{\cal P}}(x,y)\preceq^S \ICc^{\dagger}_{u,{\cal P}}(x,y)=\{{\cal A}_{\cal P}\}$ (when $x\neq y$).

We now show that $\ICc^{\sf DMT}_{\cal P}$ is upwards coherent. We first show that $\HDc^{\sf DMT}_{l,{\cal P}}(x,y)\subseteq \HDc^{\sf DMT}_{u,{\cal P}}(x,y)$ for any $x\subseteq y\subseteq{\cal A}_{\cal P}$. 
Indeed, if $\Delta\in \HDc^{\sf DMT}_{l,{\cal P}}(x,y)$, then for every $x\subseteq z \subseteq y$, there is some $\bigvee\Delta\leftarrow \phi\in{\cal P}$ s.t.\ $z(\phi)={\sf T}$. 
Thus, there is some $x\subseteq z \subseteq y$ and some $\bigvee\Delta\leftarrow \phi\in{\cal P}$ s.t.\ $z(\phi)={\sf T}$ and thus $\Delta\in \HDc^{\sf DMT}_{u,{\cal P}}(x,y)$.
 We now show $\ICc^{\sf DMT}_{l,{\cal P}}(x,y)\preceq^S \ICc^{\sf DMT}_{u,{\cal P}}(x,y)$.
  Indeed, consider some $y_1\in  \ICc^{\sf DMT}_{u,{\cal P}}(x,y)$. 
  It is easy to see that $y_1\cap \bigcup \HDc^{\sf DMT}_{l,{\cal P}}(x,y)\in ^{\sf DMT}_{l,{\cal P}}(x,y)$ as $y_1\subseteq  \HDc^{\sf DMT}_{u,{\cal P}}(x,y)$  and we have shown above that $\HDc^{\sf DMT}_{l,{\cal P}}(x,y)\subseteq \HDc^{\sf DMT}_{u,{\cal P}}(x,y)$.

Finally, $\ICc^{\cal U}$ is clearly upwards coherent as for any $x\subseteq y\subseteq {\cal A}_{\cal P}$, $\ICc^{\cal U}_{l,{\cal P}}(x,y)= \ICc^{\cal U}_{u,{\cal P}}(x,y)$.
\end{proof}

We can now show that the semi-equilibrium semantics based on the ndao's from Section \ref{sec:aggregates} satisfy all properties deemed desirable by \cite{amendola2016semi}:
\begin{proposition}
For any $\dagger\in \{{\sf GZ},{\sf DMT}, {\cal U}\}$ and any disjunctively normal logic program ${\cal P}$,  ${\cal SEQ}(\ICc^{\dagger}_{\cal P})\neq\emptyset$. 
Furthermore, if there is some $(x,x)\in S(\ICc^{\dagger}_{\cal P})  ))$ then ${\cal SEQ}(\ICc^{\dagger}_{\cal P})=\{(x,x)\in {\cal L}^2\mid (x,x)\in S(\ICc^{\dagger}_{\cal P})(x,x)\}$.
\end{proposition}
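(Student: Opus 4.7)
The plan is to reduce this statement to a direct application of Proposition~\ref{prop:algebraic:seq:is:nice}, which already establishes the desired conclusions ($\mathcal{SEQ}(\mathcal{O})\neq\emptyset$, and coincidence with total stable fixpoints whenever one exists) for any upwards coherent ndao over a finite lattice in which every pair of elements admits a unique difference. The work therefore consists in verifying the three hypotheses of that proposition for each of the operators $\ICc^{\sf GZ}_{\cal P}$, $\ICc^{\sf DMT}_{\cal P}$ and $\ICc^{\cal U}_{\cal P}$.

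First I would observe that all three operators are ndaos on the lattice $\langle \wp({\cal A}_{\cal P}),\subseteq\rangle$, which for a propositional (hence finite ${\cal A}_{\cal P}$) disjunctively normal aggregate program is finite. The lattice $\langle \wp({\cal A}_{\cal P}),\subseteq\rangle$ is Boolean, so for any two elements $x,y\in\wp({\cal A}_{\cal P})$ the set-theoretic relative complement $y\setminus x$ is the unique difference $x\oslash y$ in the algebraic sense defined in Section~\ref{sec:seq}: we have $(y\setminus x)\cap x=\emptyset$ and $x\cup y = x\cup (y\setminus x)$, and the Boolean structure guarantees uniqueness. This discharges the finiteness and difference hypotheses of Proposition~\ref{prop:algebraic:seq:is:nice} for all three operators simultaneously.

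Next I would invoke Lemma~\ref{lemma:aggregate:ndaos:are:upwards:coherent}, which explicitly establishes that each of $\ICc^{\sf GZ}_{\cal P}$, $\ICc^{\sf DMT}_{\cal P}$ and $\ICc^{\cal U}_{\cal P}$ is upwards coherent. Combined with Proposition~\ref{prop:aggregate:operators:are:ndaos}, which guarantees that each of them is an ndao approximating $\IC_{\cal P}$, this discharges the remaining hypothesis of Proposition~\ref{prop:algebraic:seq:is:nice}.

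Finally, for each $\dagger\in\{{\sf GZ},{\sf DMT},{\cal U}\}$, applying Proposition~\ref{prop:algebraic:seq:is:nice} to $\mathcal{O}:=\ICc^{\dagger}_{\cal P}$ yields both ${\cal SEQ}(\ICc^{\dagger}_{\cal P})\neq\emptyset$ and, under the additional hypothesis of the existence of some $(x,x)\in S(\ICc^{\dagger}_{\cal P})(x,x)$ (equivalently, by Proposition~\ref{prop:ht-models:and:stable:fixpoints} together with $\mathsf{mc}$-minimality at gap $\bot$, the existence of some $(x,x)\in \mathsf{mc}(\min_{\leq_t}({\sf HT}(\ICc^{\dagger}_{\cal P}))$), the equality ${\cal SEQ}(\ICc^{\dagger}_{\cal P})=\{(x,x)\mid (x,x)\in S(\ICc^{\dagger}_{\cal P})(x,x)\}$. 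Since the whole argument is a verification of hypotheses followed by one invocation of Proposition~\ref{prop:algebraic:seq:is:nice}, I do not expect any genuine obstacle; the only mildly delicate point is matching the algebraic notion of difference to the Boolean set difference, which the Boolean structure of $\wp({\cal A}_{\cal P})$ makes immediate.
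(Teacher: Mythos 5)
Your proposal is correct and takes essentially the same route as the paper, whose proof of this proposition is exactly the one-line appeal to Lemma~\ref{lemma:aggregate:ndaos:are:upwards:coherent} together with Proposition~\ref{prop:algebraic:seq:is:nice}. The only difference is that you additionally spell out the routine verification that $\langle\wp({\cal A}_{\cal P}),\subseteq\rangle$ is a finite Boolean lattice admitting unique differences, which the paper leaves implicit.
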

\begin{proof}
Follows immediately from Lemma \ref{lemma:aggregate:ndaos:are:upwards:coherent} and Proposition \ref{prop:algebraic:seq:is:nice}.
\end{proof}
In particular, this means that whenever ${\cal P}$ admits a ${\sf GZ}$-answer set, ${\cal SEQ}(\ICc^{\sf GZ}_{\cal P})$ coincides with the ${\sf GZ}$-answer sets (and similarly for ${\sf DMT}$ and ${\cal U}$).

We now illustrate these semantics with an example:
\begin{example}[Example \ref{ex:running:aggregates} continued]
Let ${\cal P}=\{p\lor q\leftarrow   \#{\tt Sum}[1:p, 1:q]<0; s\leftarrow  \#{\tt Sum}[1:p, 1:q]<0\}$ be given.

We first determine the heads and immediate consequences of all relevant interpretations:
\begin{adjustbox}{width=0.95\textwidth,center}
\begin{oldtabular}{l|lllllllll} \hline\hline
$x$ & $\emptyset$ & $\{s\}$ & $\{q\}$ & $\{p\}$ & $\{p,q\}$ & $\{p,s\}$ & $\{q,s\}$ & $\{s,q,p\}$& \\ \hline
$\HR_{\cal P}(x)$ & $\{\{p,q\},\{s\}\}$ & $ \{\{p,q\},\{s\}\}$ & $\emptyset$ & $\emptyset$ & $\emptyset$ & $\emptyset$ & $\emptyset$& $\emptyset$& \\ \hline
$\IC_{\cal P}(x)$ & $\Delta_1$ & $\Delta_1$ &$\{\emptyset\}$ &  $\{\emptyset\}$ & $\{\emptyset\}$ &  $\{\emptyset\}$ &  $\{\emptyset\}$ &  $\{\emptyset\}$& \\ \hline\hline
\end{oldtabular}
\end{adjustbox}
where $\Delta_1=\{\{p,s\},\{q,s\},\{p,q,s\}\}$.

We now determine the ${\sf HT}$-interpretations of the respective ndao's:
\begin{itemize}
\item ${\sf HT}(\ICc^{\sf GZ}_{\cal P})=\{(x,y)\mid x\subseteq y\subseteq {\cal A}_{\cal P}, y\not\in\{ \emptyset, \{s\}\}\mbox{ and }(x\neq y \mbox{ or }x \neq \emptyset, \{s\}) \}$. 
\item ${\sf HT}(\ICc^{\sf GZ}_{\cal P})=\{(x,y)\mid x\subseteq y\subseteq {\cal A}_{\cal P},y\not\in\{ \emptyset, \{s\}\} \}$.
\item ${\sf HT}(\ICc^{\sf GZ}_{\cal P})=\{(x,y)\mid x\subseteq y\subseteq {\cal A}_{\cal P}, y\not\in\{ \emptyset, \{s\}\}\}$.
\end{itemize}
Notice that, among others, $(\emptyset,\{p\}), (\emptyset,\{q\})\in {\sf HT}(\IC^{\dagger}_{\cal P})$ while $(\emptyset,\emptyset),(\emptyset,\{s\})\not\in  {\sf HT}(\ICc^{\dagger}_{\cal P})$ for any $\dagger\in \{{\sf GZ},{\sf DMT}, {\cal U}\}$. We thus see that $\min_{\leq_t}({\sf HT}(\ICc^{\dagger}_{\cal P})=\{(\emptyset,\{p\}),(\emptyset,\{q\})\}$ and that ${\cal SEQ}(\ICc^{\dagger}_{\cal P})=\{(\emptyset,\{p\}),(\emptyset,\{q\})\}$ (again for any $\dagger\in \{{\sf GZ},{\sf DMT}, {\cal U}\}$).
\end{example}

\section{Summary of Notational Conventions and Definitions}

\def\arraystretch{1,3}\tabcolsep=10pt
\begin{table}[htb]
\begin{tabular}{lll}
Elements & Notations & Example \\ \hline \hline
Elements of ${\cal L}$ & $x,y,\ldots$ & $x,y$\\
Sets of elements of ${\cal L}$ & $X,Y,\ldots$ & $\{x_1,y_1,x_2,y_2\}$ 
\end{tabular}
\caption{List of the notations of different types of sets used in this paper} 
\label{tab:set-notations}
\end{table}

\begin{table}[htb]
\begin{tabular}{lll}
Preorder & Type & Definition \\ \hline \hline
\multicolumn{3}{c}{Element Orders} \\ \hline
$\leq$ & ${\cal L}$ & primitive \\
$\leq_i$ & ${\cal L}^2\times {\cal L}^2$ & $(x_1,y_1) \leq_i (x_2,y_2)$ iff $x_1 \leq x_2$ and $y_1 \geq y_2$ \\
$\leq_t$ & ${\cal L}^2\times {\cal L}^2$ & $(x_1,y_1) \leq_t (x_2,y_2)$ iff $x_1 \leq x_2$ and $y_1 \leq y_2$ \\ \hline
\multicolumn{3}{c}{Set-based Orders} \\ \hline
$\preceq^S_L$ & $\wp({\cal L})\times \wp({\cal L})$ & $X \preceq^S_L Y$ iff for every $y\in Y$ there is an 
$x\in X$ s.t.\ $x\leq y$\\
$\preceq^H_L$ & $\wp({\cal L})\times \wp({\cal L})$ &  $X \preceq^H_L Y$ iff for every $x\in X$ there is an 
$y\in Y$ s.t.\ $x\leq y$ \\
$\preceq^A_i$ & $\wp({\cal L})^2\times\wp({\cal L})^2$ & $(X_1,Y_1)\preceq_i^A (X_2,Y_2)$ iff $X_1\preceq^S_L X_2$ and $Y_2\preceq^H_L Y_1$\\
\end{tabular}
\caption{List of the preorders used in this paper.}
\label{tab:orders}
\end{table}

\begin{table}[htb]
\begin{tabular}{lclll}
Operator & Notation& Type \\ \hline \hline
Non-deterministic operator & $O$ & ${\cal L}\mapsto \wp({\cal L})$  \\
Non-deterministic approximation operator & ${\cal O}$ & ${\cal L}^2\mapsto \wp({\cal L})\times\wp({\cal L})$ \\
Stable operator & $S({\cal O})$ & ${\cal L}^2\mapsto \wp({\cal L})\times\wp({\cal L})$ \\
Ultimate operator & ${\cal O}^{\cal U}$ & ${\cal L}^2\mapsto \wp({\cal L})\times\wp({\cal L})$
\end{tabular}
\caption{List of the operators used in this paper.}
\label{tab:operators}
\end{table}

\begin{table}[htb]
\begin{tabular}{ll}
\multicolumn{2}{c}{Immediate Consequence Operator for dlp's} \\ \hline \hline
$\HR_{\cal P}(x)$ & $=\{\Delta\mid \bigvee\!\Delta\leftarrow \psi \in{\cal P} \text{ and } (x,x)(\psi) =  {\sf T}\}$,\\
 $\IC_{\cal P}(x)$ &$=\{y\subseteq \:\bigcup\!\HR_{\cal P}(x)  \mid \forall \Delta \in \HR_{\cal P}(x), \ y \cap \Delta \neq \emptyset \}$.\\ \hline
 \multicolumn{2}{c}{Ndao ${\cal IC}_{\cal P}$ for disjunctive logic programs} \\ \hline \hline
$\HRc^l_{\cal P}(x,y)$ & $= \{ \Delta \mid \bigvee\!\Delta \leftarrow \phi\in {\cal P}, (x,y)(\phi)\geq_t {\sf C}\}$, \\
$\HRc^u_{\cal P}(x,y)$ & $= \{ \Delta \mid \bigvee\!\Delta \leftarrow \phi\in {\cal P}, (x,y)(\phi)\geq_t {\sf U}\}$, \\
${\cal IC}^\dagger_{\cal P}(x,y)$ &$=\{x_1\subseteq \bigcup\HRc^\dagger_{\cal P}(x,y) \mid \forall \Delta\in \HRc^\dagger_{\cal P}(x,y), \ x_1 \cap \Delta \neq \emptyset \}$ (for $\dagger\in \{l,u\}$),\\
${\cal IC}_{\cal P}(x,y)$ & $=({\cal IC}^l_{\cal P}(x,y), {\cal IC}^u_{\cal P}(x,y))$. \\ \hline
  \multicolumn{2}{c}{Ndao ${\cal IC}^{{\sf DMT}}_{\cal P}$ based on the deterministic ultimate approximation} \\ \hline \hline
 ${\cal HD}^{{\sf DMT},l}_{\cal P}(x,y)$ & $=\bigcap_{x\subseteq z\subseteq y}\HR_{\cal P}(z)$,\\
${\cal HD}^{{\sf DMT},u}_{\cal P}(x,y)$ & $=\bigcup_{x\subseteq z\subseteq y} \HR_{\cal P}(z)\}$,\\
${\cal IC}_{\cal P}^{{\sf DMT},\dagger}(x,y)$ & $=\{z\subseteq \bigcup{\cal HD}^{{\sf DMT},\dagger}_{\cal P}(x,y) \mid \forall \Delta\in{\cal HD}^{{\sf DMT},\dagger}_{\cal P}(x,y)\neq\emptyset: z\cap \Delta\neq \emptyset\}$.\\ \hline
  \multicolumn{2}{c}{ Ultimate ndao ${\cal IC}^{{\cal U}}_{\cal P}$} \\ \hline \hline
${\cal IC}_{\cal P}^{{\cal U}}(x,y)$ & $=\bigcup_{x\subseteq z\subseteq y}\IC_{\cal P}(z)\times \bigcup_{x\subseteq z\subseteq y}\IC_{\cal P}(z)$   \\ \hline
  \multicolumn{2}{c}{Trivial ndao ${\cal IC}_{\cal P}^{\sf GZ}$} \\ \hline \hline
    ${\cal IC}_{\cal P}^{\sf GZ}(x,y)$ & $=
 \begin{cases}
 \IC_{\cal P}(x)\times \IC_{\cal P}(x) & \mbox{ if }x=y\\
\{\emptyset\}\times \{{\cal A}_{\cal P}\} & \mbox{ otherwise}
\end{cases}
$
\end{tabular}
\caption{Concrete Operators for Disjunctive (Aggregate) Logic Programs}
\label{tab:operators}
\end{table}

\end{document}